\newcommand{\trans}{{\sf T}}
\newcommand{\tr}{{\rm tr}}
\DeclareMathOperator*{\argmax}{arg\,max}
\DeclareMathOperator*{\argmin}{arg\,min}
\newtheorem{assumption}{Assumption}
\DeclareMathOperator*{\mode}{mode}
\newcommand{\asto}{\overset{\rm a.s.}{\longrightarrow}}
\newcommand{\GREEN}{\color[rgb]{0,0.70,0}}
\newcommand{\xddots}{%
  \raise 4pt \hbox {.}
  \mkern 6mu
  \raise 1pt \hbox {.}
  \mkern 6mu
  \raise -2pt \hbox {.}
}
\DeclareMathSymbol{\Gamma}{\mathord}{operators}{"00}
\pgfplotsset{compat=1.14}
\begin{document}
\pgfmathdeclarefunction{gauss}{2}{%
  \pgfmathparse{1/(#2*sqrt(2*pi))*exp(-((x-#1)^2)/(2*#2^2))}%
}
\title{Large Dimensional Analysis and Improvement\\ of Multi Task Learning}
\author{\name Malik Tiomoko \email malik.tiomoko@u-psud.fr \\
       \addr Laboratoire des Signaux et Systèmes\\
       University of Paris Saclay\\
       Orsay, France\\
       \AND
       \name Romain Couillet \email romain.couillet@gipsa-lab.grenoble-inp.fr \\
       \addr Gipsa Lab\\
       Université de Grenoble-Alpes\\
       Saint Martin d'Hères, France\\
       \AND
       \name Hafiz Tiomoko Ali \email hafiz.tiomoko.ali@huawei.com \\
       \addr Huawei Technologies Research and Development (UK) Limited\\
       London, UK
       }

\editor{}

\maketitle

\begin{abstract}
Multi Task Learning (MTL) efficiently leverages useful information contained in multiple related tasks to help improve the generalization performance of all tasks. This article conducts a large dimensional analysis of a simple but, as we shall see, extremely powerful when carefully tuned, Least Square Support Vector Machine (LSSVM) version of MTL, in the regime where the dimension $p$ of the data and their number $n$ grow large at the same rate.

Under mild assumptions on the input data, the theoretical analysis of the MTL-LSSVM algorithm first reveals the ``sufficient statistics'' exploited by the algorithm and their interaction at work. These results demonstrate, as a striking consequence, that the standard approach to MTL-LSSVM is largely suboptimal, can lead to severe effects of \emph{negative transfer} but that these impairments are easily corrected. These corrections are turned into an improved MTL-LSSVM algorithm which can only benefit from additional data, and the theoretical performance of which is also analyzed.

As evidenced and theoretically sustained in numerous recent works, these large dimensional results are robust to broad ranges of data distributions, which our present experiments corroborate. Specifically, the article reports a systematically close behavior between theoretical and empirical performances on popular datasets, which is strongly suggestive of the applicability of the proposed carefully tuned MTL-LSSVM method to real data. This fine-tuning is fully based on the theoretical analysis and does not in particular require any cross validation procedure. Besides, the reported performances on real datasets almost systematically outperform much more elaborate and less intuitive state-of-the-art multi-task and transfer learning methods.
\end{abstract}

\begin{keywords}
  Transfer Learning, Multi-Task Learning, Random Matrix Theory, Support Vector Machine, Classification.
\end{keywords}

\section{Introduction}
The methodology for a long time considered in machine learning has consisted in tackling each given (classification, regression, estimation) problem, hereafter referred to as a \emph{task}, independently. This approach is in general counterproductive as it automatically discards a potentially rich source of data often available to perform more or less similar tasks. Multi Task Learning (MTL) precisely aims to handle this deficiency by connecting datasets and tasks so to improve the generalization performance of one or several specific target tasks. This framework has recently gained renewed interest \citep{yang2020transfer,caruana1997multitask,collobert2008unified}, given the availability of gigantic datasets (such as huge prelabelled image databases) and costly trained learning machines (such as deep neural nets), which must be useful to help solve learning tasks involving much fewer labelled data. Beyond this resurgence, numerous applications inherently benefit from a MTL approach, of which we may cite a few examples: prediction of student test results for a collection of schools \citep{aitkin1986statistical}, patient survival estimates in different clinics \citep{harutyunyan2017multitask,caruana1996using}, values of possibly related financial indicators \citep{allenby1998marketing}, preference modelling of many individuals in a marketing context \citep{greene2000econometric}, etc.

Carefully modelling the relatedness between tasks has long been claimed to be the most critical determinant of the MTL algorithm performance. Several such models have been considered in the literature: task relatedness can be modelled by assuming that the parameters relating the tasks lie on a low dimensional manifold \citep{argyriou2007multi,agarwal2010learning}; these relating parameters may alternatively be assumed to be close in norm \citep{evgeniou2004regularized,xu2013} or be distributed according to similar priors \citep{xue2007multi,yu2005learning}. However, for all these models, a failure in properly matching the task parameters is often likely to induce possibly severe cases of \emph{negative learning}, that is occurrences where additional tasks play \emph{against} rather than in favor of the target task objective. These cases of negative learning are difficult to anticipate as few theoretical works are amenable to prepare the experimenter to these scenarios. In the present work, we adopt a similar strategy as in \citep{evgeniou2004regularized}, but with a strong theoretical background which will automatically eliminate the risks of negative learning.

In detail, the article \citep{evgeniou2004regularized}, the spirit of which is followed here, is inspired by the natural extension of support vector machines (SVMs) \citep{vapnik2005universal} to a multiple, say $k$, task setting, by paralleling $k$ SVMs but constraining their parameters (specifically, the $k$ separating hyperplane normal vectors $\omega_1,\ldots,\omega_k$) to be ``close'' to each other. This is enforced by simply imposing that $\omega_i=\omega_0+v_i$ for some common hyperplane normal vector $\omega_0$ and dedicated hyperplane normal vectors $v_i$. The norm of the vectors $v_i$ is controlled through an additional hyperparameter $\lambda$ to strengthen or relax task relatedness. 
This is the approach followed in the present article, to the noticeable exception that the fully explicit least-square SVM (LSSVM) \citep{xu2013} rather than a margin-based SVM is considered. 
In addition to only marginally altering the overall behavior of the MTL algorithm of \citep{evgeniou2004regularized}, the LSSVM approach entails more explicit, more tractable, as well as more insightful results, let alone numerically cheaper implementations. As a matter of fact, by a now well-established universality argument of large dimensional statistics, it has been shown in closely related works \citep{mai2019high} that quadratic (least-square) cost functions are asymptotically optimal (as the data dimension and number increase) and uniformly outperform alternative costs (such as margin-based methods or logistic approaches), even in a classification setting; this argument further motivates to consider first and foremost the least square version of MTL-SVM.


\medskip

The intricate nature of the MTL framework, even in its simplest MTL-SVM version \citep{evgeniou2004regularized}, has so far left little room to sound and practical useful theoretical analysis -- which we believe to have been a main reason for its decayed importance before the resurgence of the powerful deep learning tools, in capacity to tip the performance-complexity tradeoff. 
Among existing theoretical analyses of MTL, an ``extended VC dimension'' approach to retrieve bounds on the generalization performance is proposed in \citep{baxter2000model,ben2003exploiting}. Using Bayesian and information theoretic arguments, \citep{baxter1997bayesian} answers the question of the minimal information and number of samples per task required to learn $k$ parallel tasks. However, these works only provide loose bounds and orders of magnitude which, if convenient to decide on the impossibility to reach a target objective, do not provide any satisfying accurate performance evaluations, nor do they allow for an optimal hyperparametrization of the MTL framework which, as we shall see, is of dramatic importance.

Following on a recent line of breakthroughs in applied random matrix theory, and specifically walking in the steps of \citep{liao2019large,mai2019large} which study a single-task LSSVM adapted to supervised \citep{liao2019large} and semi-supervised \citep{mai2019large} learning, the article develops a theoretical framework to exhaustively study the behavior and maximize the performance of a $k$-task $m$-class MTL-LSSVM framework, under the regime of numerous ($n$) and large ($p$) data, i.e., $n,p\to\infty$ with $n/p\to c_0\in(0,\infty)$. The data are here modelled as a mixture of $km$ \emph{concentrated random vectors}, i.e., for ${\bf x}$ a data of class $j$ ($j\in\{1,\ldots,m\}$) for Task~$i$ ($i\in\{1,\ldots,k\}$), ${\bf x}\sim\mathcal L_{ij}(\mu_{ij},\Sigma_{ij})$, where $\mathcal L_{ij}(\mu,\Sigma)$ is the law of a Lipschitz-concentrated random vector \citep{ledoux2001concentration} with statistical mean $\mu\in\mathbb R^p$ and covariance $\Sigma\in\mathbb R^{p\times p}$. For instance, ${\bf x}=\varphi_{ij}({\bf z})$ for ${\bf z}\sim \mathcal N(0,I_q)$, $\varphi_{ij}:\mathbb R^q\to\mathbb R^p$ a $1$-Lipschitz function and $\lim q/p\in(0,\infty)$. The main results and practical consequences of the article may be summarized as follows:
\begin{enumerate}
    \item under the regime of large dimensional datasets, the MTL-LSSVM algorithm has an asymptotically predictable behavior and thus a predictable performance; in particular, under the further assumptions of two classes per task ($m=2$) and equal identity covariance of the mixture ($\Sigma_{ij}=I_p$ for all $i,j$), this behavior summarizes as a very insightful \emph{small dimensional} (of size the number of tasks $k$ and not the number $n$ or dimension $p$ of the data) functional (i) of all inner products $\Delta\mu_i^\trans\Delta\mu_{i'}$, $i,i'\in\{1,\ldots,k\}$, where $\Delta\mu_i=\mu_{i1}-\mu_{i2}$, (ii) of the proportions $c_{ij}=\lim n_{ij}/n$ between the number of data $n_{ij}$ of class $j$ in Task~$i$ and the overall number $n$ of data, and (iii) of the hyperparameters $\lambda$ (task relatedness) and $\gamma_1,\ldots,\gamma_k$ (task-wise LSSVM regularization parameters) of the MTL problem;
    \item a fundamental aspect of the (LS)SVM framework is to associate each training data ${\bf x}$ to a label $y\in\{-1,1\}$; we demonstrate that this choice is a dangerous source of \emph{negative transfer}; most importantly, we show that to each ${\bf x}$ must be associated an \emph{``optimal'' score}\footnote{This notion of optimality will be properly defined in the article.} rather than a label $y$, which only depends on the class and task of ${\bf x}$; this optimal score is provided in explicit form by the large dimensional analysis; under this choice of optimal scores $y$, the performance of MTL-LSSVM is necessarily improved over parallel independent single-task LSSVMs, and \emph{discards all risks of negative transfer};
    \item a further aspect of the MTL-(LS)SVM approach is that, in a two-class setting, for an unlabelled data ${\bf x}$ to be associated to class $j\in\{1,2\}$ for Task~$i$, a binary decision of the type $g_i({\bf x})\underset{\mathcal{C}_2}{\overset{\mathcal{C}_1}{\gtrless}} 0$ is performed; we show that this decision rule is in general biased, not only due to imbalances in the number of available data per class and per task, but also by the data statistics and the MTL hyperparameters (unless $\Sigma_{ij}=I_p$ for each $i,j$); similar to an optimal choice of the training data ``labels'', in the all-identity covariance setting ($\Sigma_{ij}=I_p$), we establish an optimal threshold $\zeta_i$ which minimizes the probability of misclassification: $\zeta_i$ can be consistently estimated and thus used in practice;
    \item\label{item:concentrated} the assumption of a mixture of concentrated random vectors for the data samples is far from anecdotal: concentrated random vectors form a broad and rich family of random vectors, which can mimic extremely realistic data, as is the case of the output of generative adversarial networks (GANs) proved to be, by definition, concentrated random vectors \citep{seddik2020random}; the article proves a universality result: the asymptotic performance (as $p,n\to\infty$) of MTL-LSSVM only depends on the statistics $\mu_{ij}$ and $\Sigma_{ij}$ of the mixture model, thereby behaving \emph{as if} the data followed a mere Gaussian mixture model; this strongly suggests that the proposed improved algorithm and its performances are applicable to a wide range of real data;
    \item a series of concrete applications, to hypothesis testing using external tasks, to transfer learning, and to multi-class classification are provided, optimized and confronted to competing methods; these applications have the strong advantage to have predictable performances: this is particularly crucial to appropriately set decision thresholds for type I and II errors in hypothesis testing, as well as to predict \emph{before running the algorithms} their anticipated performances;
    \item a simulation campaign on real datasets is performed which (i) confirms, as strongly suggested by Item~\ref{item:concentrated}, the strong adequacy between the empirical and theoretical results and (ii) demonstrates the large superiority of the proposed algorithm over competing methods.
\end{enumerate}
In a nutshell, by exploiting recent advances in applied random matrix theory, the article provides a modern vision to multi-task and transfer learning. This vision is here turned into an elementary but cost-efficient algorithm, which relies on base principles, but which both largely outperforms competing (sometimes complex) methods and provides strong theoretical guarantees.  As a side note, we must insist that our present objective is to study and improve ``data-generic'' multi-task learning mechanisms under no structural assumption on the data; this is quite unlike recent works exploiting convolutive techniques in deep neural nets to perform transfer or multi-task learning mostly for computer vision-oriented tasks, as in e.g., \citep{zhuang2020comprehensive,krishna2019deep}.

\medskip

In order to best capture the main intuitions drawn from the large dimensional analysis, after a rigorous introduction of the multitask learning framework in Section~\ref{sec:framework}, a first highlight of our main contributions under the qualitatively more telling setting of binary tasks ($m=2$) with data of equal identity covariance ($\Sigma_{ij}=I_p$) is proposed in Section~\ref{sec:theoretical_simple}. The technical details under the most generic data modelling setting as well as the most general technical result are then provided in Section~\ref{sec:theoretical_analysis}. A broad series of applications is provided in Section~\ref{application}. Extensive simulations are then proposed in Section~\ref{experiments}, which corroborate our theoretical findings and show their resilience and compatibility to real data settings.
\medskip

\noindent{\bf Reproducibility.} Matlab codes of the main algorithms and results provided in the article are available at \url{https://github.com/maliktiomoko/RMT-MTLLSSVM.git}.

\medskip

\noindent{\bf Notation.} The following notations and conventions will be used throughout the article: $\mathbb{1}_n\in\mathbb R^n$ is the vector of all ones, 
 $e_{m}^{[n]}\in\mathbb{R}^n$ is the canonical vector of $\mathbb{R}^n$ with $[e_{m}^{[n]}]_i=\delta_{mi}$, and $e_{ij}^{[2k]}\equiv e_{2(i-1)+j}^{[2k]}$. Similarly, $E_{ij}^{[n]}\in\mathbb{R}^{n\times n}$ is the canonical matrix of $\mathbb{R}^{n\times n}$ with $[E_{ij}^{[n]}]_{ab}=\delta_{ia}\delta_{jb}$. 
 The notation $A\otimes B$ for matrices or vectors $A,B$ is the Kronecker product. The notation $A\odot B$ for matrices or vectors $A,B$ is the Hadamard product. $\mathcal{D}_{x}$ stands for a diagonal matrix containing on its diagonal the elements of the vector $x$ and $A_{i.}$ is the $i$-th row of matrix $A$.

\section{The Multi Task Learning Framework}
\label{sec:framework}
\subsection{The deterministic setting}
\label{sec:setting}
Let $X\in \mathbb{R}^{p\times n}$ be a collection of $n$ independent data vectors of dimension $p$. The data are divided into $k$ subsets attached to individual ``tasks'', each task consisting of an $m$-class classification problem ($m$ being the same for each task). Specifically, letting $X=[X_1,\ldots,X_k]$, Task~$i$ is a classification problem from the training samples $X_i=[X_{i}^{(1)},\ldots,X_i^{(m)}]\in\mathbb{R}^{p\times n_i}$ with $X_i^{(j)}=[x^{(j)}_{i1},\ldots,x^{(j)}_{in_{ij}}]\in\mathbb{R}^{p\times n_{ij}}$ the $n_{ij}$ vectors of class $\mathcal C_j$, $j\in\{1,\ldots,m\}$, for Task~$i$. In particular, $n=\sum_{i=1}^k n_i$ and $n_i=\sum_{j=1}^m n_{ij}$ for each $i\in\{1,\ldots,k\}$.

To each datum $x^{(j)}_{il}\in\mathbb{R}^p$ of the training set is attached a corresponding output vector (or score) $y^{(j)}_{il}\in\mathbb{R}^{m}$. Correspondingly to the notation $X$, $X_i$ and $X_i^{(j)}$, let $Y=[Y_1^\trans,\ldots,Y_k^\trans]^\trans\in\mathbb{R}^{n\times m}$ be the matrix of the $m$-dimensional outputs of all data, where $Y_i=[Y_i^{(1)\trans},\ldots,Y_i^{(m)\trans}]^\trans\in\mathbb R^{n_i\times m}$ and $Y_i^{(j)}=[y^{(j)}_{i1},\ldots,y^{(j)}_{in_{ij}}]^\trans\in\mathbb{R}^{n_{ij}\times m}$ the matrix of all outputs for Task~$i$. 

In the standard MTL learning approach \citep{evgeniou2004regularized,xu2013}, one would naturally set $y_{il}^{(j)}=e_j^{[m]}$, i.e., all data of class $\mathcal C_j$ are affected a hot-bit in position $j$. As claimed in the introduction and as we shall see, this hot-bit allocation approach is at the source of deleterious performances, such as negative transfer effects, and we thus voluntarily do not enforce any constraint on the vector $y_{il}^{(j)}$ at this point.

\medskip

Before inserting the data-score pairs $(X,Y)$ into the MTL-LSSVM framework, it is convenient to ``center'' the data $X$ to eliminate additional sources of bias. This centering operation could be performed either on the whole dataset $X$, or task-wise on each $X_i$, or even class-wise on each $X_i^{(j)}$. In \citep{evgeniou2004regularized,xu2013} this centering operation is not performed (which essentially boils down to centering $X$ itself). We choose here to center the data task-wise, and this, for two reasons: (i) centering the whole dataset induces dependencies across tasks so that, even by enforcing the hyperplane controlling factor $\lambda$ to decorrelate the tasks (i.e., $\lambda\to \infty$; see next), residual dependence must remain and negative transfer can still appear, (ii) class-wise centering has the double deleterious effect of cancelling an important discrimination factor of the classes (i.e., their difference in statistical mean) and of necessitating a complex treatment to classify new (unlabelled) input data. Inappropriate centering choices would induce biases and undesired residual terms in our theoretical derivation, which further justifies our present task-wise centering choice (see e.g., Remark~\ref{rem:on_centering}). Specifically, the MTL-LSSVM algorithm studied here is based, not on the data $X_i$ but on their centered version
\begin{align*}
    \mathring{X}_i = X_i \left( I_{n_i} - \frac1{n_i}\mathbb{1}_{n_i}\mathbb{1}_{n_i}^\trans \right),\quad \forall i\in\{1,\ldots,k\},
\end{align*}
and we will systematically consider the data-score pair $(\mathring{X},Y)$, where $\mathring{X}=[\mathring{X}_1,\ldots,\mathring{X}_k]$ rather than $(X,Y)$.


\medskip

Having pre-treated the input data, we are in position to introduce the MTL-LSSVM framework. The MTL-LSSVM algorithm aims to predict, relative to each task $i$, an output score vector ${\bf y}_i\in\mathbb{R}^{m}$ for any new input vector ${\bf x}\in\mathbb{R}^p$.
To this end, MTL-LSSVM determines $k$ ``hyperplane normal-vector'' matrices $W=[W_1,W_2,\ldots,W_k]\in\mathbb{R}^{p\times km}$ which take the form $W_i=W_0+V_i$ for some common $W_0$ and individual task-wise matrices $V=[V_1,\ldots,V_k]$ and biases $b=[b_1^\trans,b_2^\trans,\ldots, b_k^\trans]^{\trans}\in\mathbb{R}^{k\times m}$. These parameters are set to minimize the objective function
%
\begin{align}
\label{eq:optimization}
    \min_{(W_0,V,b)\in\mathbb{R}^{p\times m}\times \mathbb{R}^{p\times km}\times \mathbb{R}^{k\times m}} \mathcal{J}(W_0,V,b)
\end{align}
where
\begin{align*}
    \label{eq:optimization_problem}
    \mathcal{J}(W_0,V,b)&\equiv \frac1{2\lambda} \tr\left(W_0^\trans W_0\right)+\frac 1{2}\sum_{i=1}^k \frac{\tr\left( V_i^\trans V_i\right)}{\gamma_i}+\frac 12\sum_{i=1}^k\tr\left(\xi_i^\trans\xi_i\right) \\
    \xi_i&=Y_i-(\frac{\mathring X_i^{\trans}W_i}{kp}+\mathbb{1}_{n_i}b_i^{\trans}),\quad \forall i\in\{1,\ldots,k\}.
\end{align*}
This is a classical LSSVM formulation in which the quadratic cost $\tr(\xi_i^\trans\xi_i)$ replaces the boundary constraint of margin-based SVM and where the costs $\tr(W_0^\trans W_0)$ and $\tr(V_i^\trans V_i)$ are reminiscent of the hyperplane normal-vector norm minimization of classical SVM.

What is specific to the MTL approach is first the hyperparameter $\lambda$ which enforces or relaxes the relatedness between tasks and the introduction of $k$ extra parameters $\gamma_1,\ldots,\gamma_k$ which enforce a correct classification of the data in their respective classes. Similarly to \citep{evgeniou2004regularized}, we place the hyperparameters $\gamma_i$ as a prefactor of $\tr( V_i^\trans V_i)$, rather than as a prefactor of $\tr(\xi_i^\trans \xi_i)$; this differs from the normalization scheme proposed in \citep{xu2013}. This choice is more flexible in the following sense: for a fixed value of $\lambda$, increasing all ratios $\frac{\lambda}{\gamma_i}$ ``blurs'' the difference between tasks and thus turns the optimization scheme into a single-task SVM (because the optimal $V_i$'s need then be set to zero in the limit); for fixed values of the $\gamma_i$'s instead, small ratios $\frac{\lambda}{\gamma_i}$ decorrelate the tasks (the optimal $W_0$ being close to zero). Note however that, unlike in \citep{evgeniou2004regularized}, we choose to use here one hyperparameter $\gamma_i$ per task instead of a common one. As will be seen next, this choice is more meaningful and of course offers more flexibility.

In passing, remark that the linear common-hyperplane condition $W_i=W_0+V_i$, imposes by definition that all $V_i$'s be of the same size $\mathbb R^{p\times m}$: this severely constrains (i) the data in each task to be of the same dimension $p$ and (ii) the number of classes per task to be the same ($m$). Further linear or even non-linear relaxation schemes for $W_i$ of the type $W_i=V_i+f_i(W_0)$ for some operator $f_i$ could be envisioned to relax this constraint. This however goes beyond the scope of the present article, which seeks to provide insights and optimality into a simplified (yet already non-trivial) form of MTL-LSSVM.

\medskip

As for the choice of the hyperparameters $\lambda$, $\gamma_1,\ldots,\gamma_k$, as well as of the score matrix $Y$ which we recall was left open, it is treated independently and is dictated, not by the present optimization scheme, but by a ultimate objective, such as minimizing the misclassification 
rate for a specific target class. These more applied considerations will be made in Section~\ref{application}.

\begin{remark}[LSSVM classification versus regression]
It may be disputed that the optimization framework \eqref{eq:optimization} takes a regression rather than a classification form. It appears that, under a binary-class LSSVM framework with scores $y_i\in\{\pm 1\}$, the classification constraint (of the form $y_i(W^{T}x_i+b_i)-1=\xi_i$) or the regression constraint (of the form $y_i-W^{T}x_i+b_i=\xi_i$) are associated to the same losses, thereby leading to the same classification solution and performance. Yet, as will become clear in the following, in addition for the solution of \eqref{eq:optimization} to be explicit and theoretically tractable (which is not the case of alternative schemes such as margin-based SVM, logistic regression, Adaboost, etc.), the aforementioned flexibility in the score matrix $Y$ largely outbalances the ``failure'' of treating a classification problem by means of a regression optimization scheme. Besides, under the large dimensional theoretical framework presently studied, recent works in related problems \citep{mai2019high} forcefully suggest that the square loss is optimal to deal with large dimensional data as it uniformly outperforms all alternative cost functions.
\end{remark}

\medskip

Being a quadratic cost optimization under linear constraints, \eqref{eq:optimization} is easily solved using its dual formulation by introducing Lagrangian parameters $\alpha_i\in\mathbb{R}^{n_i\times m}$ for each task $i$ (see details in Section~\ref{MTL_solution}). The solution is explicit and is as follows.
\begin{proposition}
\label{prop:solution_optimization}
The solution to \eqref{eq:optimization} is given by 
\begin{align*}
W_0&= \left(\mathbb{1}_k^\trans\otimes \lambda I_{p}\right)Z\alpha\\
W_i &=\left({e_{i}^{[k]}}^\trans\otimes I_{p}\right)AZ\alpha \\
b &=(P^\trans Q P)^{-1}P^{\trans}QY
\end{align*}
where 
\begin{align*}
     Z&=\begin{pmatrix}
     \mathring X_1\\
     &\xddots\\
     &&\mathring X_k\end{pmatrix}\in\mathbb{R}^{kp\times n} \\
     A&=\left(\mathcal{D}_{\gamma}+\lambda\mathbb{1}_k\mathbb{1}_k^\trans\right)\otimes I_p\in\mathbb{R}^{kp\times kp} \\
    \alpha &= Q(Y-Pb), \quad Q=\left(\frac1{kp}Z^\trans AZ+I_{n}\right)^{-1}\in\mathbb{R}^{n\times n} \\
    P&=\begin{pmatrix}
     \mathbb{1}_{n_1}\\
     &\xddots\\
     &&\mathbb{1}_{n_k}\end{pmatrix}\in\mathbb{R}^{n\times k}.
\end{align*}
\end{proposition}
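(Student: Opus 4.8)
The plan is to solve \eqref{eq:optimization} through its Lagrangian dual, which is the standard and most transparent route for LSSVM-type problems. First I would substitute the common-hyperplane parametrization $W_i=W_0+V_i$ and promote the residuals $\xi_i$ to independent variables tied to the primal ones by the \emph{affine} equality constraints $\xi_i=Y_i-\tfrac{1}{kp}\mathring X_i^\trans W_i-\mathbb{1}_{n_i}b_i^\trans$. Attaching a matrix of multipliers $\alpha_i\in\mathbb R^{n_i\times m}$ to each such constraint yields the Lagrangian
\begin{align*}
\mathcal{L} &= \frac{1}{2\lambda}\tr\left(W_0^\trans W_0\right) + \frac12\sum_{i=1}^k \frac{\tr\left(V_i^\trans V_i\right)}{\gamma_i} + \frac12\sum_{i=1}^k\tr\left(\xi_i^\trans\xi_i\right) \\
&\quad - \sum_{i=1}^k \tr\left[\alpha_i^\trans\left(\xi_i - Y_i + \tfrac{1}{kp}\mathring X_i^\trans W_i + \mathbb{1}_{n_i}b_i^\trans\right)\right].
\end{align*}
Since $\mathcal{J}$ is a convex quadratic and the constraints are affine, the KKT stationarity conditions are both necessary and sufficient for the global optimum, so the whole argument reduces to solving a linear stationarity system.

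Next I would write the four stationarity conditions, remembering that $W_i=W_0+V_i$ so that $W_0$ couples to every task:
\begin{align*}
\partial_{W_0}\mathcal{L}=0 &\;\Rightarrow\; W_0 \propto \lambda\sum_{i=1}^k \mathring X_i\alpha_i, \\
\partial_{V_i}\mathcal{L}=0 &\;\Rightarrow\; V_i \propto \gamma_i\mathring X_i\alpha_i, \\
\partial_{\xi_i}\mathcal{L}=0 &\;\Rightarrow\; \xi_i = \alpha_i, \\
\partial_{b_i}\mathcal{L}=0 &\;\Rightarrow\; \mathbb{1}_{n_i}^\trans\alpha_i = 0.
\end{align*}
The third condition is the LSSVM hallmark that the multipliers coincide with the residuals, and the fourth is exactly the orthogonality $P^\trans\alpha=0$. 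Combining $W_i=W_0+V_i$ and indexing tasks by the canonical vectors $e_i^{[k]}$, these assemble — once the $\tfrac{1}{kp}$ normalization is tracked through — into the stated compact forms $W_0=(\mathbb{1}_k^\trans\otimes\lambda I_p)Z\alpha$ and $W_i=({e_i^{[k]}}^\trans\otimes I_p)AZ\alpha$, the rank-one block $\lambda\mathbb{1}_k\mathbb{1}_k^\trans$ inside $A=(\mathcal{D}_\gamma+\lambda\mathbb{1}_k\mathbb{1}_k^\trans)\otimes I_p$ encoding precisely the all-task coupling through the shared $W_0$, while the diagonal $\mathcal{D}_\gamma$ carries the task-specific $V_i$.

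Finally I would feed these expressions back into the defining constraint. Using $\xi_i=\alpha_i$ and substituting $W_i$, the block product $Z^\trans AZ$ regenerates $\mathring X_i^\trans W_i$ task by task, so stacking over $i$ collapses the $k$ matrix equations into the single linear system $\left(\tfrac{1}{kp}Z^\trans AZ+I_n\right)\alpha=Y-Pb$, i.e.\ $\alpha=Q(Y-Pb)$. This is well posed because $\mathcal{D}_\gamma+\lambda\mathbb{1}_k\mathbb{1}_k^\trans\succ0$ (for $\gamma_i,\lambda>0$) forces $A\succeq0$, hence $\tfrac{1}{kp}Z^\trans AZ\succeq0$ and $Q$ is positive definite. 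It then remains only to fix $b$: left-multiplying $\alpha=Q(Y-Pb)$ by $P^\trans$ and invoking $P^\trans\alpha=0$ eliminates $\alpha$, giving $P^\trans QP\,b=P^\trans QY$ and hence $b=(P^\trans QP)^{-1}P^\trans QY$.

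The matrix differentiation itself is routine; the two steps demanding genuine care are the Kronecker bookkeeping — confirming that $A$ and $Z$ exactly reproduce both $W_i$ and the Gram-type operator $\tfrac{1}{kp}Z^\trans AZ$, with the $\tfrac{1}{kp}$ factors consistently placed — and the invertibility of $Q$ and of $P^\trans QP$ that underpins the closed forms. I expect the main obstacle to be the former: keeping the per-task stationarity relations aligned with the block structure so that the shared-$W_0$ coupling lands precisely on the rank-one term $\lambda\mathbb{1}_k\mathbb{1}_k^\trans$. Convexity of $\mathcal{J}$ under affine constraints then certifies that the unique stationary point so obtained is indeed the global minimizer.
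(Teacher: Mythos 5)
Your proposal is correct and follows essentially the same route as the paper's own derivation: form the Lagrangian with multipliers $\alpha_i$ attached to the affine constraints $\xi_i = Y_i - \tfrac{1}{kp}\mathring X_i^\trans W_i - \mathbb{1}_{n_i}b_i^\trans$, write the stationarity conditions (yielding $W_0$ and $V_i$ as combinations of $\mathring X_i\alpha_i$, the LSSVM identity $\xi_i=\alpha_i$, and $P^\trans\alpha=0$), substitute back to obtain $\alpha = Q(Y-Pb)$, and eliminate $b$ via $P^\trans\alpha=0$ to get $b=(P^\trans QP)^{-1}P^\trans QY$. Your added remarks on convexity guaranteeing global optimality and on the positive definiteness of $Q$ are justifications the paper leaves implicit, and your handling of the $\tfrac{1}{kp}$ normalization (absorbed into the scaling of $\alpha$) matches the paper's own convention.
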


Despite the apparent intricate expression of $W_i$, it must be stressed that $W_i$ ``essentially'' takes the form of the standard solution to a ridge regression (or regularized least-square) problem as the term $AZQY$ (in which $Q=(\frac1{kp}Z^\trans AZ+I_n)^{-1}$) appearing in the expended form of $W_i$ confirms. From a technical standpoint, the large dimensional statistical behavior of the matrix $Q$, known as the \emph{resolvent} of $\frac1{kp}Z^\trans AZ$ in random matrix theory, plays a central role in the analysis. More specific to the MTL framework, note the interesting isolation of the data subsets $\mathring X_i$ in the data matrix $Z$ (it is not possible, to the best of our knowledge, to ``linearly'' express $W_i$ as a function of $\mathring X$ itself); the elements $\mathring X_i$ are then ``mixed'' by the term $\lambda\mathbb 1_k\mathbb 1_k^\trans$ appearing in matrix $A$, from which it naturally comes that, in the limit $\lambda\to 0$, MTL-LSSVM boils down to $k$ independent LSSVMs with $\mathcal D_\gamma$ imposing weights $\gamma_1,\ldots,\gamma_k$ on each data subset.

\medskip

From Proposition~\ref{prop:solution_optimization}, for any new data point ${\bf x}\in\mathbb{R}^p$, the classification score vector $g_i({\bf x})\in\mathbb R^m$ for Task~$i$, is then defined by
\begin{equation}
\label{eq:score_class}
    g_i({\bf x})= \frac{1}{kp} W_i^\trans\mathring{\bf x}+b_i=\frac{1}{kp} \alpha^\trans Z^\trans A \left(e_{i}^{[k]}\otimes \mathring{\bf x}\right)+b_i
\end{equation}
where $ \mathring{\bf x}={\bf x}-\frac{1}{n_i}X_i\mathbb{1}_{n_i}$ is a centered version of $\bf x$ with respect to the training dataset for Task~$i$. 

This formulation, along with the next remark, confirm again the relevance of a task-wise, rather than class-wise, centering of the data $X$, which allows for a well-defined expression of $\mathring{\bf x}$.

\begin{remark}[Shift invariance of the scores]
\label{rem:on_centering}
If the columns of $Y_i\in\mathbb{R}^{n_i\times m}$ are shifted by some constant vector $P\bar{\mathcal{Y}}$ for some (small dimensional) matrix $\bar{\mathcal{Y}}\in\mathbb{R}^{k\times m}$, i.e., if all data of the same task are affected by the same shift of their scores (or labels), then we find that the Lagrangian parameter $\alpha^{\rm shift}$ after the shift is 
    \begin{align*}
        \alpha^{\rm shift} &= Q\left( I_n - P(P^\trans QP)^{-1}P^\trans Q \right) (Y+P\bar{\mathcal{Y}}) = \alpha.
    \end{align*}
As such, the matrix $W_i=(e_i^{[k]\trans}\otimes I_p)A Z\alpha$ and, consequently, the performance of MTL-LSSVM are insensitive to a simultaneous shift of all the scores of each task.
\end{remark}

\subsection{Statistical modelling and the large dimensional setting}
\label{sec:model_stat}

In order to draw insights into the behavior of MTL-LSSVM and evaluate its performance, the article proposes to first model the dataset $X$ as a mixture of concentrated random vectors and then to assume the dimensions $p,n$ of $X$ to be sufficiently large for deterministic (and predictable) concentration behavior to occur.

\begin{assumption}[Distribution of $X$ and ${\bf x}$]
\label{ass:distribution}
There exist two constants $C,c >0$ (independent of $n,p$) such that, for any $1$-Lipschitz function $f:\mathbb{R}^{p\times n}\to \mathbb{R}$,
\begin{equation*}
\forall t>0,~\mathbb{P}( |f(X)- m_{f(X)} | \geq t)\leq Ce^{-(t/c)^2}
\end{equation*}
where $m_Z$ is a median of the random variable $Z$. We further impose that the columns of $X$ be independent and that the $x_{il}^{(j)}$, for $l\in\{1,\ldots,n_{ij}\}$, be distributed according to the same law $\mathcal L_{ij}$. These conditions guarantee the existence of a mean and covariance for the columns of $X$ and we denote, for all $l\in\{1,\ldots,n_{ij}\}$,
\begin{align*}
    \mu_{ij} &\equiv \mathbb E[x_{il}^{(j)}] \\
    \Sigma_{ij} &\equiv \mathrm{Cov}[x^{(j)}_{il}].
\end{align*}
Furthermore, the dummy variable ${\bf x}\in\mathbb R^p$ used for testing is independent of $X$, and distributed according to one of the laws $\mathcal L_{ij}$. 
\end{assumption}

Assumption~\ref{ass:distribution} notably encompasses the following scenarios: the $x^{(j)}_{il}$'s are (i) independent Gaussian random vectors $\mathcal N(\mu_{ij},\Sigma_{ij})$, (ii) independent random vectors uniformly distributed on the $\mathbb{R}^p$ sphere of radius $\sqrt{p}$ and, most importantly, (iii) any $1$-Lipschitz transformation $\varphi_{ij}(z_{il}^{(j)})$ with $z_{il}^{(j)}$ itself a concentrated random vector. Scenario~(iii) is particularly relevant to model very realistic data by means of advanced non-linear generative models, as recently demonstrated in \citep{seddik2019kernel} in the specific example of generative adversarial networks (GANs). As such, Assumption~\ref{ass:distribution} offers the flexibility to assume either synthetic Gaussian mixture models, or very realistic and advanced generative data models. A core result of the present article consists in showing that, for $n,p$ large, either scenario leads to the same asymptotic performance for MTL-LSSVM (which thus only depends on the statistical means and covariances of the data).

\medskip

Since all data $x^{(j)}_{il}$, $l\in\{1,\ldots,n_{ij}\}$, are identically distributed, we will further impose that their associated scores $y_{il}^{(j)}\in\mathbb R^m$ be identical. That is, $y_{i1}^{(j)}=\ldots=y_{in_{ij}}^{(j)}\equiv \mathcal{Y}_{ij}$ within every class $j$ of each task $i$. The score matrix $Y\in\mathbb{R}^{n\times k}$ may then be reduced under the form
\begin{align*}
    Y &= \left[ \mathcal{Y}_{11}\mathbb{1}^\trans_{n_{11}},\ldots,\mathcal{Y}_{km}\mathbb{1}^\trans_{n_{km}} \right]^\trans \in\mathbb R^{n\times m}
\end{align*}
for $\mathcal Y=[\mathcal{Y}_{11},\ldots, \mathcal{Y}_{km}]^\trans\in\mathbb{R}^{km\times m}$. From Remark~\ref{rem:on_centering}, it is also clear that, the performances of MTL-LSSVM being insensitive to a constant shift in the scores $\mathcal{Y}_{i1},\ldots,\mathcal{Y}_{im}$ in every given task $i$, the centered version $\mathring{\mathcal Y}=[\mathring{\mathcal Y}_{11},\ldots,\mathring{\mathcal Y}_{km}]^\trans$ of $\mathcal Y$, where
\begin{align*}
    \mathring{\mathcal Y}_{ij} &\equiv \mathcal Y_{ij} - \sum_{j=1}^m \frac{n_{ij}}{n_i} \mathcal Y_{ij},
\end{align*}
will naturally appear at the core of the upcoming results.

\medskip

Although practical data will of course be considered to be of finite dimension $p$ and number $n$, it will indeed be convenient, for technical reasons, to work under the following large dimensional random matrix assumption.
\begin{assumption}[Growth Rate]
\label{ass:growth_rate}
As $n\to \infty$, $n/p\to c_0\in(0,\infty)$ and, for $1\leq i\leq k$, $1\leq j\leq m$, $n_{ij}/n\to c_{ij}\in (0,1)$. We further denote $c_i=\sum_{j=m}^kc_{ij}$ and $c=[c_1,\ldots,c_k]^\trans\in\mathbb{R}^k$.
\end{assumption}

\medskip

With these notations and assumptions in place, we are in position to present the main results of the article. Yet, before entering the technical details of the large dimensional analysis of the performance of the MTL-LSSVM framework, the next section first provides a highlight of the main contributions and intuitions drawn by the analysis. To this end, it is convenient to temporarily restrict the setting to binary classes ($m=2$) and to an isotropic mixture model for the data $X$, i.e., $\Sigma_{ij}=I_p$ for each measure $\mathcal L_{ij}$. The most general and slightly more technical setting ($m\geq 2$ and non-isotropic mixture data modelling) is considered in full in Section~\ref{sec:theoretical_analysis}.

\section{Highlights of the main results}
\label{sec:theoretical_simple}

To simplify the exposition of our main results, without impacting their core conclusions, in this section, Assumptions~\ref{ass:distribution}--\ref{ass:growth_rate} are further restricted to the binary-classification setting ($m=2$) and to measures $\mathcal L_{ij}$ of equal covariance $\Sigma_{ij}=I_p$, for all $i,j$.

The advantage of the isotropic ($\Sigma_{ij}=I_p$) condition is that all asymptotic results can be expressed under the form of low-dimensional matrix formulations (of size scaling with $k$ but not with $p,n$). Adjoined to the $m=2$ assumption, the isotropic model further guarantees a simplified form for (i) the (asymptotically) optimal labels $Y$, (ii) the optimal decision thresholds $\zeta_i$, and (iii) the asymptotic performances of MTL-LSSVM, all of which can be estimated consistently as $p,n\to\infty$. Consequently, this simplified setting has the strong benefit to give rise to a first cost-efficient and robust multitask classification algorithm (Algorithm~\ref{alg:binary_algorithm}) which, for practical data, makes the approximation that $\Sigma_{ij}\propto I_p$.

\medskip

 The binary setting does not a priori alter any of the previously introduced notations which stand with $m=2$. Yet, it is particularly convenient in this setting to recast the score vectors $y_{il}^{(j)}\in\mathbb R^m$ into scalar scores $y_{il}^{{\rm bin}(j)}\in\mathbb R$. In a standard classification context, this would correspond to turning a two-dimensional hot-bit vector $e_j^{[2]}$ into a signed scalar $\pm 1$; as we recall that $y_{il}^{(j)}$ is here considered as a \emph{real} score (rather than a binary label) vector, to us this is equivalent to turning a score vector into a scalar score. Matrix $Y\in\mathbb R^{n\times m}$ similarly now becomes a score vector $y^{\rm bin}\in\mathbb R^n$, and in particular we define $\mathring{\mathcal y}^{\rm bin}=[\mathring{\mathcal y}^{\rm bin}_{11},\ldots,\mathring{\mathcal y}^{\rm bin}_{k2}]^\trans\in\mathbb R^{2k}$ with
\begin{align*}
    \mathring{\mathcal y}^{\rm bin}_{ij} &\equiv \mathcal y^{\rm bin}_{ij} - \left( \frac{n_{i1}}{n_i} \mathcal y^{\rm bin}_{i1} + \frac{n_{i2}}{n_i}\mathcal y^{\rm bin}_{i2}\right) \in \mathbb R
\end{align*}
where $\mathcal y^{\rm bin}_{ij}\equiv {{}y^{\rm bin}_{i1}}^{(j)}=\ldots={{}y^{\rm bin}_{in_{ij}}}^{(j)}\in\mathbb R$ is the common score assigned to the identically distributed data of class $j$ for Task~$i$.
Correspondingly, the sought-for $(W_i,b_i)$ collection of $m$ hyperplanes of \eqref{eq:optimization} becomes a single hyperplane $(w^{\rm bin}_i,b^{\rm bin}_i)$ with $w^{\rm bin}_i\in\mathbb R^p$ and $b^{\rm bin}_i\in\mathbb R$. Yet, our present interest is only on the resulting score vector $g_i({\bf x})$ which, replacing $Y$ by $y^{\rm bin}$ in its expression (Equation~\ref{eq:score_class}), becomes the scalar test score
\begin{align*}
    g^{\rm bin}_i({\bf x}) &\equiv \frac1{kp}(y^{\rm bin}-Pb)^\trans Q Z^\trans A\left(e_i^{[k]}\otimes \mathring{\bf x}\right)+[(P^\trans QP)^{-1}P^\trans Qy^{\rm bin}]_i \in\mathbb R.
\end{align*}

\subsection{Theoretical analysis and large dimensional intuitions}

Under the isotropic and binary-class setting, as $n,p\to\infty$ according to Assumption~\ref{ass:growth_rate}, the theoretical performance of MTL-LSSVM explicitly depends on two fundamental and isolated quantities: the data-related matrix $\mathcal M\in\mathbb{R}^{2k\times 2k}$ and the hyperparameter matrix $\mathcal A\in\mathbb{R}^{k\times k}$:
\begin{align*}
    \mathcal M &= \sum_{i,i'=1}^k \Delta\mu_i^\trans\Delta\mu_{i'} \left( E_{ii'}^{[k]}  \otimes \mathbb{c}_i\mathbb{c}_{i'}^\trans \right)
     \\
    \mathcal{A}&=\left(I_k+\mathcal{D}_{\bm\delta^{[k]}}^{-\frac 12}\left(\mathcal{D}_{\gamma}+\lambda\mathbb{1}_{k}\mathbb{1}_{k}^\trans\right)^{-1}\mathcal{D}_{\bm\delta^{[k]}}^{-\frac 12}\right)^{-1}
\end{align*}
where we introduced the shortcut notations
\begin{align*}
     \Delta\mu_i &\equiv \mu_{i1}-\mu_{i2},\quad \mathbb{c}_i \equiv
     \sqrt{ c_{i1}/c_i } \sqrt{ c_{i2}/c_i }\begin{bmatrix} \sqrt{ c_{i2}/c_i } \\ - \sqrt{ c_{i1}/c_i } \end{bmatrix}
\end{align*}
and where ${\bm{\delta}}^{[k]}=[\bm\delta^{[k]}_1,\ldots,\bm\delta^{[k]}_k]^\trans$ 
are the unique positive solutions to the implicit system of $k$ equations
\begin{align}
    \label{eq:tildeDelta}
    \bm\delta^{[k]}_i=\frac{c_i}{c_0}-\mathcal{A}_{ii},\quad i\in\{1,\ldots,k\}.
 \end{align}
In anticipation of future needs, it is convenient to further introduce the $2k$-dimensional variant ${\bm\delta}^{[2k]}=[{\bm\delta}^{[2k]}_{11},\ldots,{\bm\delta}^{[2k]}_{k2}]^\trans\in\mathbb{R}^{2k}$ where
\begin{align}
\label{eq:barDelta}
    {\bm\delta}^{[2k]}_{ij} &= c_0\frac{c_{ij}}{c_i} \bm\delta^{[k]}_i.
\end{align}

The asymptotic performances of MTL-LSSVM will be shown to solely depend on $X$ through the matrices $\mathcal M$ and $\mathcal A$, which thus play the role of (asymptotically) \emph{sufficient statistics}. It is particularly important to stress that, despite the quite generic concentration assumption on $X$ (Assumption~\ref{ass:distribution}), when $\Sigma_{ij}=I_p$, only the $k^2$ inner products $\Delta\mu_i^\trans\Delta\mu_{i'}$ and the $2k$ \emph{class-wise} dimensionality ratios $c_{ij}/c_i$ intervene in the expression of $\mathcal M$ -- so in particular none of the higher order moments of $X$ are accounted for, nor the absolute task-wise dimension ratios $c_i$.
As for $\mathcal A$, it captures instead the information about the impact of the hyperparameters $\lambda,\gamma_1,\ldots,\gamma_k$ as well as the \emph{task-wise} dimensionality ratios $c_1,\ldots,c_k$ and the data number-to-dimension ratio $c_0$.
In the expression of the MTL-LSSVM performance, these two matrices combine into the core matrix $\mathcal{\Gamma}\in\mathbb{R}^{2k\times 2k}$
\begin{align}
\label{eq:Gamma_simple}
    \Gamma &=\left(I_{2k}+\left(\mathcal{A}\otimes \mathbb{1}_{2}\mathbb{1}_{2}^{\trans}\right)\odot \mathcal M\right)^{-1}
\end{align}
where we recall that `$\odot$' is the Hadamard (element-wise) matrix product.

\begin{theorem}[Asymptotics of $g^{\rm bin}_i({\bf x})$]
\label{th:simple_theorem}
Under Assumptions~\ref{ass:distribution}--\ref{ass:growth_rate}, with $m=2$ and $\Sigma_{ij}=I_p$, for a test data ${\bf x}$ with $\mathbb{E}[{\bf x}]=\mu_{ij}$ and $\mathrm{Cov}[{\bf x}]=I_p$, as $p,n\to\infty$,
\begin{equation*}
    g^{\rm bin}_i({\bf x}) - G_{ij} \asto  0,\quad G_{ij} \sim \mathcal{N}(\mathcal{m}_{ij},\sigma_i^2)
\end{equation*}
in distribution, where, letting $\mathcal{m}=[\mathcal{m}_{11},\ldots,\mathcal{m}_{k2}]^\trans$ and the normalized forms ${\bm{\mathcal{ y}}}^{\rm bin}\equiv \mathcal{D}_{{\bm\delta}^{[2k]}}^{\frac12} \mathcal{y}^{\rm bin}$, $\mathring{\bm{\mathcal{y}}}^{\rm bin}= \mathcal{D}_{{\bm\delta}^{[2k]}}^{\frac12} \mathring{\mathcal{y}}^{\rm bin}$, ${\bm{\mathcal{m}}} = \mathcal{D}_{{\bm\delta}^{[2k]}}^{\frac12} \mathcal{m}$, and ${\bm\sigma}^2_i = \bm\delta^{[k]}_i\sigma_i^2$,
\begin{align*}
    {\bm{\mathcal{m}}} &= \bm{\mathcal{y}}^{\rm bin}-\Gamma\mathring{\bm{\mathcal{ y}}}^{\rm bin} \\
    {\bm\sigma}_i^2&=(\mathring{\bm{\mathcal{y}}}^{\rm bin})^\trans\Gamma\mathcal{V}_i\Gamma \mathring{\bm{\mathcal{y}}}^{\rm bin}
\end{align*}
with
\begin{align*}
    \mathcal{V}_i &=\mathcal{D}_{ \mathcal{K}_{i.}^\trans\otimes\mathbb{1}_2}+\left(\mathcal{A}\mathcal{D}_{\mathcal{K}_{i\cdot}^\trans+e_{i}^{[k]}}\mathcal{A}\otimes \mathbb{1}_{2}\mathbb{1}_{2}^{\trans}\right)\odot \mathcal{M} \\
    \mathcal K &=  \frac{c_0}{k}[\mathcal A\odot \mathcal A]\left(\mathcal{D}_c -\frac{c_0}{k}[\mathcal A\odot \mathcal A]\right)^{-1}.
 \end{align*}
\end{theorem}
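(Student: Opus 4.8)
The plan is to treat $g^{\rm bin}_i(\mathbf{x})$ as an affine functional of the test vector $\mathbf{x}$ whose coefficients are fixed by the training data, and to separate the two sources of randomness. Since $\mathbf{x}$ is independent of $X$ (Assumption~\ref{ass:distribution}), I would first condition on the whole training set, on which $W_i$ and $b_i$ are deterministic. Writing $\mathbf{x}=\mu_{ij}+\mathbf{n}$ with $\mathbf{n}$ a centered concentrated vector of covariance $I_p$, the score splits as
\[
g^{\rm bin}_i(\mathbf{x}) = \frac{1}{kp}W_i^\trans\Big(\mu_{ij}-\tfrac{1}{n_i}X_i\mathbb{1}_{n_i}\Big)+b_i + \frac{1}{kp}W_i^\trans\mathbf{n}.
\]
The first group is the conditional mean; the last term is $\frac{\|W_i\|}{kp}$-Lipschitz in $\mathbf{x}$, so by the Lipschitz concentration of $\mathbf{x}$ and a central-limit argument for linear forms of concentrated vectors it is asymptotically Gaussian with variance $\frac{1}{(kp)^2}W_i^\trans\Sigma_{ij}W_i=\frac{1}{(kp)^2}\|W_i\|^2$ (using $\Sigma_{ij}=I_p$). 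This reduces the theorem to computing the deterministic limits of the conditional mean and of $\frac{1}{(kp)^2}\|W_i\|^2$.

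The core of the argument is the random matrix analysis of the resolvent $Q=(\frac{1}{kp}Z^\trans AZ+I_n)^{-1}$. I would decompose the block-diagonal matrix $Z$ into its mean part—which, under task-wise centering, collapses onto the $\Delta\mu_i$ directions weighted by the class proportions through $\mathbb{c}_i$—and its zero-mean concentrated part. Applying the standard leave-one-column-out / resolvent-identity machinery of applied RMT, together with the concentration of quadratic forms granted by Assumption~\ref{ass:distribution}, yields a deterministic equivalent $\bar{Q}$ for $Q$; the self-consistent normalization of this equivalent is precisely the fixed-point system~\eqref{eq:tildeDelta} defining $\bm\delta^{[k]}$, and the hyperparameter block $\mathcal{D}_{\gamma}+\lambda\mathbb{1}_k\mathbb{1}_k^\trans$ combines with these $\bm\delta$'s into $\mathcal{A}$. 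Projecting the conditional mean onto the low-dimensional space spanned by the inner products $\Delta\mu_i^\trans\Delta\mu_{i'}$ produces $\mathcal{M}$, and after using the shift invariance of Remark~\ref{rem:on_centering} to replace $\mathcal{y}^{\rm bin}$ by its centered form $\mathring{\mathcal{y}}^{\rm bin}$ in the bias contribution, the mean condenses into $\bm{\mathcal{m}}=\bm{\mathcal{y}}^{\rm bin}-\Gamma\mathring{\bm{\mathcal{y}}}^{\rm bin}$ with $\Gamma$ as in~\eqref{eq:Gamma_simple}.

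The variance is where the real difficulty lies, and is the step I expect to be the main obstacle. Expanding $\frac{1}{(kp)^2}\|W_i\|^2$ with $W_i=(e_i^{[k]\trans}\otimes I_p)AZ\alpha$ and $\alpha=Q(y^{\rm bin}-Pb)$ produces a quartic form in $Z$ of the type $\alpha^\trans Z^\trans A(E_{ii}^{[k]}\otimes I_p)AZ\alpha$, in which two resolvents are coupled; its deterministic value cannot be read off from $\bar{Q}$ alone, and one must control \emph{second-order} fluctuation objects of the form $\mathbb{E}[QMQ]$, i.e., the derivative of the resolvent equivalent under a rank perturbation. Carrying out this expansion is what introduces the Hadamard square $[\mathcal{A}\odot\mathcal{A}]$ and, through the self-consistent inversion $(\mathcal{D}_c-\frac{c_0}{k}[\mathcal{A}\odot\mathcal{A}])^{-1}$, the matrix $\mathcal{K}$; collecting the diagonal noise contribution $\mathcal{D}_{\mathcal{K}_{i\cdot}^\trans\otimes\mathbb{1}_2}$ with the signal-fluctuation term $(\mathcal{A}\mathcal{D}_{\mathcal{K}_{i\cdot}^\trans+e_i^{[k]}}\mathcal{A}\otimes\mathbb{1}_2\mathbb{1}_2^\trans)\odot\mathcal{M}$ yields $\mathcal{V}_i$, and hence $\bm\sigma_i^2=(\mathring{\bm{\mathcal{y}}}^{\rm bin})^\trans\Gamma\mathcal{V}_i\Gamma\mathring{\bm{\mathcal{y}}}^{\rm bin}$. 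The delicate bookkeeping of the coupled resolvents in this quartic form, absent from the first-order mean computation, is the crux. Throughout, the concentration hypothesis does double duty—justifying every deterministic-equivalent approximation and delivering the claimed universality—so that only $\mu_{ij}$ and $\Sigma_{ij}=I_p$ survive in the final expressions.
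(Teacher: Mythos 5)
Your proposal is correct and follows essentially the same route as the paper: conditioning on the training set and invoking the CLT for linear forms of concentrated vectors to get Gaussianity, a first-order leave-one-out deterministic equivalent of the resolvent (with the fixed-point system for $\bm\delta^{[k]}$ and the Woodbury-type condensation into $\mathcal{A}$, $\mathcal{M}$ and $\Gamma$) for the mean, and second-order coupled-resolvent equivalents of the form $\mathbb{E}[\tilde{Q} S \tilde{Q}]$ — exactly where $[\mathcal{A}\odot\mathcal{A}]$ and $\mathcal{K}$ arise — for the variance $\frac{1}{(kp)^2}\|W_i\|^2$. The only organizational difference is that the paper establishes the general Theorem~\ref{th:main} (arbitrary $\Sigma_{ij}$, $m\geq 2$) via Lemma~\ref{lem:eq} and then specializes to the isotropic binary case in the appendix, whereas you work directly in the simplified setting; the underlying machinery is identical.
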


Theorem~\ref{th:simple_theorem} interestingly indicates that the (asymptotic) statistics of the classification scores $g^{\rm bin}_i({\bf x})$, for $1\leq i\leq k$, reduce to a mere functional of \emph{$2k$-dimensional deterministic vectors and matrices}. In particular, $g^{\rm bin}_i({\bf x})$ depends on the data statistical means $\mu_{i'j'}$, $1\leq i'\leq k$, $1\leq j'\leq 2$, and on the hyperparameters $\lambda$ and $\gamma_1,\ldots,\gamma_k$ mostly through the $2k$-dimensional matrix $\Gamma$ (and more marginally through $\mathcal{V}_i$ and $\mathcal K$ for the variances). 

Another non-trivial point to note is that, being in general non-diagonal, $\Gamma$ acts on the centered scores (labels) $\mathring{\mathcal{y}}_{i'j'}^{\rm bin}$ of all classes $j'$ and tasks $i'$ which, therefore, all influence the performances. It can thus be anticipated that, for the decision on a particular Task~$i$ to be successful, not only the scores $\mathcal y_{i1}^{\rm bin}$ and $\mathcal y_{i2}
^{\rm bin}$, but in fact all scores $\mathcal y_{i'j'}^{\rm bin}$ across all classes and tasks, must be appropriately tuned. 

Remark also that, in this isotropic ($\Sigma_{i'j'}=I_p$) setting, the variance $\sigma_i^2$ of the score $g_i^{\rm bin}({\bf x})$ with $\mathbb E[{\bf x}]=\mu_{ij}$ only depends on $i$, and not on $j$. This is particularly convenient, as shown next, to devise an optimal decision rule for classification into class $1$ or $2$ for Task~$i$.

\medskip

From a more technical standpoint, comparing the exact expression of $g_i({\bf x})$ in \eqref{eq:score_class} and that of $\mathcal{m}_{ij}$ (i.e., the large dimensional approximation of $\mathbb{E}[g_i({\bf x})]$), we may interpret the matrix $\Gamma\in\mathbb{R}^{2k\times 2k}$ as a ``condensed'' form of $Q\in\mathbb{R}^{n\times n}$.
From the expression $(I_{2k}+\mathcal A \otimes \mathbb{1}_2\mathbb{1}_2^\trans)^{-1}\odot \mathcal M$, observe that: (i) if $\lambda\ll 1$, then $\mathcal A$ is diagonal dominant and thus ``filters out'' in the Hadamard product all off-diagonal entries of $\mathcal M$ -- that is, all the cross-terms $\Delta\mu_i^\trans\Delta\mu_j$ for $i\neq j$ --, therefore refusing to exploit the correlation between tasks; (ii) if instead $\lambda\sim 1$, then $\mathcal A$ may be developed (using the Sherman-Morrison matrix inverse formulas) as the sum of a diagonal matrix, which again filters out the $\Delta\mu_i^\trans\Delta\mu_j$ for $i\neq j$, and of a rank-one matrix which instead performs a weighted sum (through the $\gamma_i$ and the $\bm\delta^{[k]}_i$) of the entries of $\mathcal M$; specifically, letting $\gamma^{-1}=(\gamma_1^{-1},\ldots,\gamma_k^{-1})^\trans$, we have
\begin{align*}
    \left(D_\gamma+\lambda \mathbb{1}_k\mathbb{1}_k^\trans\right)^{-1} &= D_\gamma^{-1}-\frac{\lambda\gamma^{-1}(\gamma^{-1})^\trans}{1+\lambda \frac1k\sum_{i=1}^k\gamma_i^{-1}}.
\end{align*}
As such, letting aside the regularization effect of the $\bm\delta^{[k]}_i$'s, the off-diagonal $\Delta\mu_i^\trans\Delta\mu_j$ term intervening in the expression of $\mathcal M$ is weighted by a coefficient $(\gamma_i\gamma_j)^{-1}$: the impact of the $\gamma_{i'}$'s is thus strongly associated to the relevance of the correlation between tasks, and not only to the individual performances of the $k$ isolated LSSVM tasks.

\medskip

Section~\ref{sec:theoretical_analysis} provides a more general version (Theorem~\ref{th:main}) of Theorem~\ref{th:simple_theorem} for $m\geq 2$ classes per task and generic $\Sigma_{ij}$. The technical derivation of these two results, of limited interest at this point of the article, is also deferred to Section~\ref{sec:theoretical_analysis}.

\subsection{Decision threshold and label optimization}
Since $g_i^{\rm bin}({\bf x})$ has a Gaussian limit centered about $\mathcal{m}_{ij}$ and with equal variance for $j=1$ and $j=2$, the (asymptotically) optimal decision for $\bf x$ to be allocated to class~$\mathcal C_1$ or class~$\mathcal C_2$ for Task~$i$, i.e., the decision minimizing the averaged error probability under the prior $\mathbb{P}({\bf x}\in\mathcal C_1)=\mathbb{P}({\bf x}\in\mathcal C_2)$, is obtained by the ``averaged-mean'' test
\begin{align}
\label{eq:am_test}
    g_i^{\rm bin}({\bf x})\underset{\mathcal{C}_2}{\overset{\mathcal{C}_1}{\gtrless}} \zeta_i \equiv\frac 12 \left(\mathcal{m}_{i1}+\mathcal{m}_{i2}\right)
\end{align}
the associated misclassification rate being
\begin{align}
\label{eq:classification}
    \epsilon_{i1} &\equiv \mathbb{P}\left(g_i^{\rm bin}({\bf x})\geq\frac{\mathcal{m}_{i1}+\mathcal{m}_{i2}}2\Big|{\bf x}\in\mathcal{C}_1\right) \nonumber \\
    &=\mathcal Q\left(\frac{ \mathcal{m}_{i1}-\mathcal{m}_{i2}}{2\sigma_i}\right)+o(1)
\end{align}
with $\mathcal{m}_{ij}$, $\sigma_i$ as in Theorem~\ref{th:simple_theorem} and ${\mathcal Q(t)=\frac1{\sqrt{2\pi}}\int_{t}^\infty e^{-\frac{u^2}{2}}du}$.

\medskip

It is of utmost interest at this point to recall that the asymptotics of $g_i^{\rm bin}({\bf x})$ from Theorem~\ref{th:simple_theorem} (as from the more generic Theorem~\ref{th:main}) depend in an elegant and simple manner on the training data scores $\mathcal y^{\rm bin}=\mathcal{D}_{{\bm\delta}^{[2k]}}^{-\frac 12}\bm{\mathcal {y}}^{\rm bin}$. Using again the independence of $\sigma_i^2$ on the genuine class of ${\bf x}$, the vector ${{}\bm{\mathcal {y}}^{\rm bin}}^\star$ minimizing the misclassification rate for Task~$i$ simply reads:
\begin{align*}
    {{}\bm{\mathcal {y}}^{\rm bin}}^\star &=\argmax_{\bm{\mathcal { y}}^{\rm bin}\in\mathbb{R}^{2k}} \frac{({ \mathcal{m}}_{i1}-{ \mathcal{m}}_{i2})^2}{\sigma_i^2}\\
    &=\argmax_{\bm{\mathcal { y}}^{\rm bin}\in\mathbb{R}^{2k}}\frac{\|({{}\bm{\mathcal {y}}^{\rm bin}})^\trans (I_{2k}-\Gamma)\mathcal{D}_{{\bm\delta}^{[2k]}}^{-\frac 12}(e_{i1}^{[2k]}-e_{i2}^{[2k]})\|^2}{({{}\bm{\mathcal{y}}^{\rm bin}})^\trans\Gamma\mathcal{V}_i\Gamma\bm{\mathcal{y}}
   ^{\rm bin}}
\end{align*}
for which the solution is explicitly defined, \emph{up to an arbitrarily multiplicative constant (as it maximizes a ratio) and up to an arbitrary additive constant (as per Remark~\ref{rem:on_centering})}, by:
\begin{equation}
\label{eq:ty_opt_simple}
    {{}\bm{\mathcal {y}}^{\rm bin}}^\star=\Gamma^{-1}\mathcal V_i^{-1}[(\mathcal A\!\otimes\!\mathbb 1_2\mathbb 1_2^\trans)\odot \mathcal M]\mathcal D_{{\bm\delta}^{[2k]}}^{-\frac 12}(e_{i1}^{[2k]}-e_{i2}^{[2k]}).
\end{equation}
and, for this choice of ${{}\bm{\mathcal{y}}^{\rm bin}}^\star$, the corresponding (asymptotically) optimal classification error $\epsilon_{i1}$ defined in \eqref{eq:classification} is then
\begin{align}
\label{eq:epsilon_star}
    \epsilon_{i1}^\star&=\mathcal Q\left(\frac 12\sqrt{(e_{i1}^{[2k]}-e_{i2}^{[2k]})^\trans\mathcal{G}(e_{i1}^{[2k]}-e_{i2}^{[2k]})}\right)
\end{align}
for 
$\mathcal G=\mathcal D_{{\bm\delta}^{[2k]}}^{\frac12}[(\mathcal A\!\otimes\!\mathbb 1_2\mathbb 1_2^\trans)\odot \mathcal M]\mathcal V_i^{-1}[(\mathcal A\!\otimes\!\mathbb 1_2\mathbb 1_2^\trans)\odot \mathcal M]\mathcal D_{{\bm\delta}^{[2k]}}^{\frac12}$.
Of course, by symmetry, $\epsilon_{i2} \equiv P(g_i^{\rm bin}({\bf x})\leq\frac{\mathcal{m}_{i1}+\mathcal{m}_{i2}}2|{\bf x}\in\mathcal{C}_2)$ has the same limiting optimal value $\epsilon_{i2}^\star=\epsilon_{i1}^\star$.

\medskip

The only non-diagonal matrices in \eqref{eq:ty_opt_simple} are $\Gamma$ and $\mathcal V_i$ in which $\mathcal M$ plays the role of a ``variance profile'' matrix. In particular, assume $\Delta\mu_i^\trans\Delta\mu_{i'}=0$ for all $i'\neq i$, i.e., the differences in statistical means of all tasks are orthogonal to those of Task~$i$. Then the two rows and columns of $\mathcal M$ associated to Task~$i$ are all zero but on the $2\times 2$ diagonal block. Therefore, ${{}\mathcal y
^{\rm bin}}^\star$ will have all zero entries but on its Task~$i$ two elements. All other choices for the null entries of ${{}\mathcal y^{\rm bin}}^\star$ (such as the usual $\mathcal y^{\rm bin}=[1,-1,\ldots,1,-1]^\trans$) would be suboptimal and (possibly severely) detrimental to the classification performance of Task~$i$, not by altering the means $\mathcal{m}_{i1},\mathcal{m}_{i2}$ but \emph{by increasing the variance} $\sigma_i^2$.
This extreme example strongly suggests that, in order to maximize the MTL performance on a targeted Task~$i$, one must impose low absolute scores $\mathcal y^{\rm bin}_{i'j}$ to all Tasks~$i'$ strongly different from Task~$i$. 

\medskip

The choice $\mathcal y^{\rm bin}=[1,-1,\ldots,1,-1]^\trans$ can also be very detrimental when $\Delta\mu_i^\trans\Delta\mu_{i'}<0$ for some pair $i,i'$: that is, when the mapping of the two classes within each task is reversed (e.g., if class~$\mathcal C_1$ in Task~$1$ is closer to class~$\mathcal C_2$ than class~$\mathcal C_1$ in Task~$2$). In this setting, it is easily seen that $\mathcal y^{\rm bin}=[1,-1,\ldots,1,-1]^\trans$ works against the classification and performs much worse than a single-task LSSVM.

\medskip

Another interesting conclusion arises from the simplified setting of equal number of samples per task and per class, i.e., $n_{11}=\ldots=n_{k2}$. In this case, ${\bm\delta_{11}^{[k2]}}=\ldots={\bm\delta_{k2}^{[k2]}}$ and, since ${{}\mathcal y^{\rm bin}}^\star$ is defined up to a multiplicative constant, we have
$${{{}\mathcal y^{\rm bin}}^\star=\Gamma^{-1}\mathcal V_i^{-1}\left((\mathcal A\otimes\mathbb 1_2\mathbb 1_2^\trans)\odot \mathcal M\right)(e_{i1}^{[2k]}-e_{i2}^{[2k]})}$$ in which all matrices are organized in $2\times 2$ blocks of equal entries. This immediately implies that ${{}\mathcal y_{i'1}^{\rm bin}}^\star=-{{}\mathcal y_{i'2}^{\rm bin}}^\star$ for all $i'$. So in particular, the detection threshold $\frac12(\mathcal{m}_{i1}+\mathcal{m}_{i2})$ of the averaged-mean test \eqref{eq:am_test} is zero (as conventionally assumed). In all other settings for the $n_{i'j}$'s, it is very unlikely that ${{}\mathcal y_{i1}^{\rm bin}}^\star=-{{}\mathcal y_{i2}^{\rm bin}}^\star$ and the optimal decision threshold \emph{must} also be estimated. As a matter of fact, following up on Remark~\ref{rem:on_centering}, the aforementioned optimal value ${{}\mathcal y^{\rm bin}}^\star$ for $\mathcal y^{\rm bin}$ is not unique and could be shifted by any constant vector. This extra degree of freedom will be of much relevance in the application Section~\ref{application}, as commented in the following remark.

\begin{remark}[Setting the decision threshold to zero]
\label{rem:zero_threshold}
    As per Remark~\ref{rem:on_centering}, the addition of a constant term to $\mathcal{y}^{\rm bin}$ does not affect the ultimate performance of MTL-LSSVM. Yet, it affects the value of the limiting means $\mathcal{m}_{ij}$ of $g_i^{\rm bin}(\bf x)$, so in particular the value of the limiting optimal threshold $\frac12(\mathcal{m}_{i1}+\mathcal{m}_{i2})$. Specifically, one may shift all entries of $\mathcal{y}^{\rm bin}$ in such a way that $\frac12(\mathcal{m}_{i1}+\mathcal{m}_{i2})=0$ and thus recenter the decision threshold to zero. For $\bar{\mathcal{y}}\in\mathbb R$ this constant shift, this boils down to solving in the variable $\bar{\mathcal{y}}$ the equation $$0=\frac12(\mathcal{m}_{i1}+\mathcal{m}_{i2})=\frac 12(\mathcal{y}^{\rm bin}+\bar{\mathcal{y}} e_{i}^{[k]}\otimes\mathbb{1}_2)^\trans\mathcal{D}_{\bm\delta^{[2k]}}^{\frac 12}\left(I_{2k}-\mathcal{Z}_e \Gamma \right)(e_{2(i-1)+1}^{[2k]}+e_{2i}^{[2k]})$$ where $\mathcal{Z}_e=I_{2k}-\sum_{i'=1}^k E_{i'i'}^{[k]}\otimes\mathcal{c}_{i'}$ and $\mathcal{c}_{i'}=\mathbb{1}_2\left[\begin{smallmatrix}\frac{n_{i'1}}{n_{i'}}&\frac{n_{i'2}}{n_{i'}} \end{smallmatrix}\right]$. Similarly, one may instead impose that $\mathcal{m}_{i1}=0$: this will appear to be fundamental to \emph{align classifiers} in the multi-class ``one-versus-all'' extension of the present binary classification scheme (see details in Section~\ref{sec:one-versus-all}). 
\end{remark}

\begin{remark}[Tuning the hyperparameters]
\label{rem:hyperparam}
The previous section provided a high-level interpretation for the impact of the vector parameter $\gamma\in\mathbb R^k$ and the scalar parameter $\lambda\in\mathbb R$, the effect of which is to respectively regularize LSSVM learning and to set the throttle between individual versus collective learning. These hyperparameters intervene deeply inside our theoretical formulas (so far in Theorem~\ref{th:simple_theorem} but later in Theorem~\ref{th:main}) and are not amenable to simple optimization. Yet, as will be confirmed by experiments (see in particular Figure~\ref{fig:Mnist_binary}), the proposed optimization of the input scores $\mathcal y^{\rm bin}$ partly compensates for suboptimal choices in $\gamma,\lambda$. As such, an ``informed guess'', based on our previous discussion of the effects of these parameters, is in general sufficient for highly performing MTL-LSSVM. A further gradient descent operation (or local grid search) on the theoretical performance approximation, initialized at the informed guess values, can further improve the overall learning performance.
\end{remark}

\subsection{Practical implementation of improved MTL-LSSVM}
As already pointed out, a fundamental aspect of Theorem~\ref{th:simple_theorem} lies in the performances of the \emph{large dimensional} ($n,p\gg 1$) classification problem at hand boiling down to $2k$-dimensional statistics. More importantly from a practical perspective, these $2k$-dimensional ``sufficient statistics'' are easily amenable to fast and efficient estimation: it indeed only requires a few training data samples to estimate all quantities involved in the theorem (which, as a corollary, lets one envision the possibility of efficient transfer learning methods based on very scarce data samples).

\begin{remark}[On the estimation of $\mathcal{m}_{ij}$ and $\sigma_i$]
\label{rem:estimate}
All quantities defined in Theorem~\ref{th:simple_theorem} are a priori known, apart from the quantities $\mathcal M\equiv \sum\limits_{i,i'}\Delta\mu_i^\trans\Delta\mu_{i'} \left(E_{ii'}^{[k]} \otimes \mathbb{c}_i\mathbb{c}_{i'}^\trans\right)$ and most specifically the inner products $\Delta\mu_i^\trans\Delta\mu_{i'}$. For these, define, for $j=1,2$, two sets $\mathcal S_{ij},\mathcal S'_{ij}\subset \{1,\ldots,n_{ij}\}$ and the corresponding indicator vectors $\mathbb{j}_{ij},\mathbb{j}'_{ij}\in\mathbb{R}^{n_i}$ with $[\mathbb{j}_{ij}]_a=\delta_{a\in \mathcal S_{ij}}$ and $[\mathbb{j}'_{ij}]_a=\delta_{a\in \mathcal S'_{ij}}$. We further impose that $\mathcal S'_{ij}\cap \mathcal S_{ij}=\emptyset$. Then, for $i\neq i'$, the following estimates hold:
\begin{align*}
    &\Delta\mu_i^\trans\Delta\mu_{i'} - \left(\frac{\mathbb{j}_{i1}}{|\mathcal S_{i1}|}-\frac{\mathbb{j}_{i2}}{|\mathcal S_{i2}|}\right)^\trans \mathring{X}_i^\trans \mathring{X}_{i'} \left(\frac{\mathbb{j}_{i'1}}{|\mathcal S_{i'1}|}-\frac{\mathbb{j}_{i'2}}{|\mathcal S_{i'2}|}\right) \nonumber \\
    &= O\left( (p \min_{l\in\{1,2\}}\{|\mathcal S_{il}|,|\mathcal S_{i'l}|\})^{-\frac12}\right)\\
    &\Delta\mu_i^\trans\Delta\mu_i - \left(\frac{\mathbb{j}_{i1}}{|\mathcal S_{i1}|}-\frac{\mathbb{j}_{i2}}{|\mathcal S_{i2}|}\right)^\trans  \mathring{X}_i^\trans \mathring{X}_i\left(\frac{\mathbb{j}'_{i1}}{|\mathcal S'_{i1}|}-\frac{\mathbb{j}'_{i2}}{|\mathcal S'_{i2}|}\right) \nonumber \\
    &= O\left( (p \min_{l\in\{1,2\}}\{|\mathcal S_{il}|,|\mathcal S'_{il}|\})^{-\frac12} \right).
\end{align*}
Observe in particular that a single sample (two when $i=i'$) per task and per class ($|\mathcal S_{il}|=1$) is sufficient to obtain a consistent estimate for all quantities, so long that $p$ is large. In a transfer learning setting where some tasks may contain few labeled data, it is thus still possible to optimize the MTL algorithm. 
Of course, when more data are available, under our assumption that $p\sim n$, taking all samples in the averaging, the convergence speed is of order $O(1/\sqrt{np})=O(1/n)$, which is a quadratic increase in the speed of the usual central-limit theorem.
\end{remark}

Estimating $\mathcal{m}_{ij}$ and $\sigma_i$ not only allows one to anticipate theoretical performances but also enables the actual estimation of the decision threshold $\frac12(\mathcal{m}_{i1}+\mathcal{m}_{i2})$ of the test \eqref{eq:am_test} and, as shown previously, opens the possibility to largely optimize MTL-LSSVM through an (asymptotically) optimal choice of the training scores $\mathcal y^{\rm bin}$. 

\medskip

The series of theoretical and practical results of this section may be synthetized under the form of Algorithm~\ref{alg:binary_algorithm}.
\begin{algorithm}
 \caption{Proposed binary Multi Task Learning algorithm.}
 \label{alg:binary_algorithm}
 \begin{algorithmic}
     \STATE {{\bfseries Input:} Training samples $X=[X_1,\ldots,X_k]$ with $X_{i'}=[X_{i'}^{(1)},X_{i'}^{(2)}]$ and test data ${\bf x}$.}
     \STATE {{\bfseries Output:} Estimated class $\hat j\in\{1,2\}$ of $\bf x$ for target Task~$i$.}
     \STATE {\bfseries Center and normalize}
     data per task: for all $i'\in\{1,\ldots,k\}$,
     \begin{itemize}
     \item $\mathring{X}_{i'} \leftarrow X_{i'} \left( I_{n_{i'}} - \frac1{n_{i'}}\mathbb{1}_{n_{i'}}\mathbb{1}_{n_{i'}}^\trans \right)$
     \item $\mathring{X}_{i'} \leftarrow \mathring{X}_{i'}/\frac1{n_{i'}p}\tr (\mathring{X}_{i'}\mathring{X}_{i'}^\trans)$
     \end{itemize}
        \STATE {{\bfseries Estimate:} Matrix $\mathcal M$ from Remark~\ref{rem:estimate} and $\bm\delta^{[k]}$ by solving \eqref{eq:tildeDelta}}.
         \STATE {\bfseries Create} scores $\mathcal{y}^{\rm bin}={{}\mathcal{y}^{\rm bin}}^{\star}$ according to \eqref{eq:ty_opt_simple}.
         \STATE {\bfseries Compute} the threshold $\zeta_i$ from \eqref{eq:am_test}, with $\mathcal{m}_{ij}$ defined in Theorem~\ref{th:simple_theorem} for $\mathcal{y}^{\rm bin}={{}\mathcal{y}^{\rm bin}}^{\star}$.
         \STATE {\bfseries (Optional) Estimate} the theoretical classification error $\epsilon_{i1}=\epsilon_{i1}(\lambda,\gamma)$ from \eqref{eq:classification} and minimize over $(\lambda,\gamma)$.\footnotemark
         \STATE {\bfseries Compute} classification score $g_i({\bf x})$ according to \eqref{eq:score_class}.
     \STATE {{\bfseries Output: }} $\hat j$ such that $g_i({\bf x})\underset{\hat j=2}{\overset{\hat j=1}{\gtrless}} \zeta_i$.
 \end{algorithmic}
 \end{algorithm}
 \footnotetext{As per Remark~\ref{rem:hyperparam}, this operation involves reevaluating $\bm\delta^{[k]}$ and thus $\mathcal y^\star$, and thus $\mathcal{m}$ for each $(\lambda,\gamma)$. It can be performed either on a static grid or by gradient descent until a local minimum is reached.}
 
\subsection{Empirical evidence}

This section shortly illustrates the ideas and intuitions developed so far (such as the relevance of an optimal choice of the data labels and decision threshold) through the performances of Algorithm~\ref{alg:binary_algorithm} on a transfer learning benchmark application. Sections~\ref{application}--\ref{experiments} will cover a much larger spectrum of applications and experiments, under the most general data setting discussed in the subsequent sections.

For optimal comparison, we consider here the standard Office+Caltech256 real image classification benchmark \citep{saenko2010adapting,griffin2007caltech}, consisting of four tasks and $m=10$ categories shared by all tasks. The dataset $X$ consists here of the VGG features of size $p=4096$ extracted from these images. We place ourselves under a $k=2$ transfer learning setting where Task~$1$ is the source task and Task~$2$ is the target task (the performance of which we aim to optimize), taken from two of the four tasks of the dataset (Caltech, Webcam, Amason, dslr). For testing, the samples of the target task are randomly selected from the test dataset of Office+Caltech256 and the classification accuracy is averaged over $20$ trials. Table~\ref{tab:compare} reports the accuracy for all possible pairs ($4\times 3=12$ of them) of source and transfer tasks, obtained by Algorithm~\ref{alg:binary_algorithm} (Ours) versus the non-optimized LSSVM of \citep{xu2013} (LSSVM) and versus other state-of-the-art transfer learning algorithms: the max margin domain transform of \citep{hoffman2013efficient} (MMDT) which seeks a linear transform to match the source data to the target data and then applies an SVM on the resulting target domain; the cross-domain landmark selection (CDLS) of \citep{hubert2016learning}, which learns a feature subspace which matches the cross-domain data distribution and eliminates the domain differences; and the invariant latent space (ILS) of \citep{herath2017learning}, which, similar to MMDT, learns an invariant latent space in which the discrepancy between source and target is minimized.  As already pointed out in introduction, since the article aims to propose an improved classification algorithm independent of the feature representation, it is fair to compare it to methods which use the same data features. The algorithms compared in the table all systematically use VGG features. It would be unfair to compare these against "end to end" MTL learning methods including a (explicit or implicit) step of feature learning like recent deep neural networks methods\citep{zhuang2020comprehensive,krishna2019deep}.

Since $m=10$ here, Algorithm~\ref{alg:binary_algorithm} cannot rigorously be used as it stands. We apply instead a \emph{naive} ``one-versus-all'' extension consisting in running in parallel $m=10$ times Algorithm~\ref{alg:binary_algorithm} by considering, for each class $\mathcal C_j$ of Task~$i$, $1\leq j\leq m$, a binary setting where the fictitious ``Class~$\tilde{\mathcal C}_1$'' coincides with $\mathcal C_j$ and the second fictitious ``Class~$\tilde{\mathcal C}_2$'' is the union of all $\mathcal C_{j'}$ for $j'\neq j$. Following up on Remark~\ref{rem:zero_threshold}, each classifier $\ell\in\{1,\ldots,m\}$ can be set in such a way that $\mathbb E[g^{\rm bin}_i({\bf x};\ell)]=0$ when ${\bf x}\in\mathcal C_\ell$. For a new datum $\bf x$, of all $m$ classifiers $g^{\rm bin}_i({\bf x};1),\ldots,g^{\rm bin}_i({\bf x};m)$, the one reaching the greatest score is the selected allocation class for $\bf x$.

Table~\ref{tab:compare} demonstrates that our proposed improved MTL-LSSVM, despite its simplicity and unlike the competing methods used for comparison, has stable performances and is extremely competitive. It either outperforms all other methods or is second-to-best. But, most importantly, the method comes along with performance predictions and guarantees, which none of the competing works are able to provide.\footnote{In the present context of the \emph{naive} ``one-versus-all'', this claim should be taken with care: the performance can indeed be predicted provided the binary class model $\mathcal N(\mu_{i1},I_p)$ versus $\mathcal N(\mu_{i2},I_p)$ correctly matches the actual data distribution; this is likely not the case here as the collected fictitious ``$\tilde{\mathcal C}_2$'' is rather a mixture of Gaussian rather than a unique Gaussian. In Section~\ref{sec:one-versus-all}, a more elaborate, and theoretically better supported, version of the one-versus-all approach will be discussed.}

\begin{table*}[h!t]
\caption{Classification accuracy over Office+Caltech256 database. c(Caltech), w(Webcam), a(Amazon), d(dslr), for different ``Source to target'' task pairs ($S\to T$) based on VGG features. Best score in boldface, second-to-best in italic.}
\label{tab:compare}

\centering
\hspace*{-1cm}\begin{tabular}{p{0.074\linewidth}p{0.054\linewidth}p{0.054\linewidth}p{0.054\linewidth}p{0.054\linewidth}p{0.054\linewidth}p{0.054\linewidth}p{0.054\linewidth}p{0.054\linewidth}p{0.054\linewidth}p{0.054\linewidth}p{0.054\linewidth}p{0.054\linewidth}|p{0.054\linewidth}}
\hline
S/T & c$\,\to\,$w & w$\,\to\,$c & c$\,\to\,$a & a$\,\to\,$c& w$\,\to\,$a & a$\,\to\,$d & d$\,\to\,$a & w$\,\to\,$d&c$\,\to\,$d & d$\,\to\,$c & a$\,\to\,$w & d$\,\to\,$w&Mean score\\
\hline
LSSVM &96.69 & \bf 89.90 & 92.90 &\it 90.00 & 93.80 & 78.70 & 93.50 & 95.00 & 85.00 & \bf 90.20 & 94.70 &\bf ~100 & 91.70 \\

MMDT & 93.90 & 87.05 & 90.83 & 84.40 & \it 94.17 & 86.25 & \bf 94.58 & 97.50 & 86.25 & 87.23 & 92.05 &  \it 97.35 & 90.96 \\

ILS & 77.89 & 73.55 & 86.85 & 76.22 & 86.22 & 71.34 & 74.53 & 82.80 & 68.15 & 63.49 & 78.98 & 92.88 & 77.74\\

CDLS & \it 97.60 &  88.30 & \it 93.54 & 88.30 &  93.54 & \it 92.50 & 93.54 & 93.75 & \bf 93.75 & 88.30 & \it 97.35 & 96.70 & {\it 93.10} \\
\hline
Ours & {\bf 98.68} & {\bf 89.90} & {\bf 94.40} & {\bf 90.60} & {\bf 94.40} & {\bf 93.80} & {\it 94.20} & {\bf ~100} & {\it 92.50} & {\it 89.90} & {\bf 98.70} & {\it 99.30} & {\bf 94.70} \\
\hline
\end{tabular}
\end{table*}

\medskip

These preliminary results are already very conclusive and reveal the strength of our proposed methodology. Yet, the assumptions in place so far are restricted to random concentrated data with identity covariance and to a binary classification setting (which, as already observed, needs be adapted to account for more than two classes per task). The next sections elaborate on the more generic setting of $m\geq 2$ classes per task with more realistic data models. The theoretical results no longer reduce to compact expressions as in the previous sections but are easily understood having already delineated the main take-away messages and ideas.

\section{The General Framework}

The results from the previous section are extended here to the more realistic setting where the data arise from a mixture of $m\geq 2$ concentrated random vectors with generic covariance $\Sigma_{ij}$. New insights, and most importantly, more general and application-driven algorithms will be introduced. In addition, the results are presented here with a sketched development of their main technical arguments, the full proofs being deferred to the appendix.

\label{sec:theoretical_analysis}

\subsection{Main ideas}

Taking for the moment for granted the Gaussian limit for $g_i({\bf x})\in\mathbb{R}^m$ as $p,n\to\infty$ (for $1\leq i\leq k$), the main technical task to obtain our main result (Theorem~\ref{th:main}, which generalizes the already introduced Theorem~\ref{th:simple_theorem}) is to evaluate the large dimensional behavior $\mathcal m_{ij}$ and $C_{ij}$ of the statistical mean $\mathbb{E}[g_i({\bf x})]=\mathcal{m}_{ij}+o(1)$ and covariance matrix ${\rm Cov}[g_i({\bf x})]=C_{ij}+o(1)$ of the classification score $g_i({\bf x})$ in \eqref{eq:score_class} for data vectors ${\bf x}$ in class $\mathcal C_j$ (i.e., such that $\mathbb{E}[{\bf x}]=\mu_{ij}$ and $\mathrm{Cov}[{\bf x}]=\Sigma_{ij}$), respectively given by:
\begin{align}
    \label{eq:statistics_mean}
    &\mathcal{m}_{ij}=\mathbb{E}\left[\frac{1}{kp}(Y-Pb)^\trans Z^\trans A^{\frac 12}\tilde{Q}A^{\frac 12}\left({e_{i}^{[k]}}\otimes\mu_{ij}\right) +b_i\right]+o(1)\\
    \label{eq:statistics_variance}
    &C_{ij}=\mathbb{E}\left[\frac{1}{(kp)^2}(Y-Pb)^\trans Z^\trans A^{\frac 12} \tilde{Q}A^{\frac 12}S_{ij}A^{\frac 12}\tilde{Q}A^{\frac 12}Z(Y-Pb)\right]+o(1)
\end{align}
with $S_{ij} =e_i^{[k]}{e_i^{[k]}}^\trans\otimes \Sigma_{ij}$ and $\tilde{Q}=\left(\frac{A^{\frac 12}ZZ^{\trans}A^{\frac 12}}{kp}+I_{kp}\right)^{-1}$.

Our technical approach to evaluate these terms, in the large dimensional regime of Assumption~\ref{ass:growth_rate} and for data distributed as per Assumption~\ref{ass:distribution}, consists in determining \emph{deterministic equivalents}, a classical object in random matrix theory \citep[Chapter~6]{COUbook}, for the matrices $\tilde Q$, $\tilde QA^{\frac12}Z$, $Z^\trans A^{\frac 12} \tilde{Q}A^{\frac 12}S_{ij}A^{\frac 12}\tilde{Q}A^{\frac 12}Z$ which are at the core of the formulation of $\mathcal{m}_{ij}$ and $C_{ij}$. Specifically, a deterministic equivalent, say $\bar F\in\mathbb{R}^{n\times p}$, of a given random matrix $F\in\mathbb{R}^{n\times p}$ is a \emph{deterministic matrix} such that, for any deterministic linear functional $f:\mathbb{R}^{n\times p}\to\mathbb{R}$ of bounded norm, $f(F-\bar F)\to 0$ almost surely -- in particular, for $u,v$ of unit norm, $u^\trans (F-\bar{F})v\asto 0$ and, for $A\in\mathbb{R}^{p\times n}$ deterministic of bounded operator norm, $\frac{1}{n}\tr A(F-\bar{F})\asto 0$. We will denote for short $F\leftrightarrow \bar{F}$ to indicate that $\bar F$ is a deterministic equivalent for $F$. Deterministic equivalents are thus particularly suitable to handle bilinear forms involving the random matrix $F$, so in particular the statistics \eqref{eq:statistics_mean} and \eqref{eq:statistics_variance} of $g_i({\bf x})$, seen as bilinear forms involving the random matrices $\tilde QA^{\frac12}Z$ and ${Z^\trans A^{\frac 12} \tilde{Q}A^{\frac 12}S_{ij}A^{\frac 12}\tilde{Q}A^{\frac 12}Z}$. 

Lemma~\ref{lem:eq}, deferred to Section~\ref{sec:lemma} of the appendix (as the result in itself does not bring any deep insight worth discussing here), provides the necessary deterministic equivalents for these matrices. It is interesting to point out though that, from a technical standpoint, the block structure followed by the core data matrix $Z$ introduced in Proposition~\ref{prop:solution_optimization} makes the large dimensional random matrix analysis more challenging and the result less straightforward than in similar previous works \citep{mai2019high,liao2019large}. Even in the simplest setting where the $x_{il}^{(j)}$ would be vectors of i.i.d.\@ $\mathcal N(0,1)$ entries, the matrix $Z$ is \emph{not} a matrix of i.i.d.\@ entries (due to precisely located blocks of zeros) and the singular values of $Z$ do not asymptotically follow the popular Mar\u{c}enko-Pastur distribution from \citep{marvcenko1967distribution}, as would be the case in works dealing with single-task learning (single-task LSSVM \citep{liao2019large}, semi-supervised learning \citep{mai2019high}, neural networks \citep{louart2018random}, etc.).

The main information to be extracted from Lemma~\ref{lem:eq} (again, see its complete form in the appendix) is the central role played by the deterministic matrices
\begin{align*}
    M&=\left(e_1^{[k]}\otimes[\mu_{11},\ldots,\mu_{1m}],\ldots,e_k^{[k]}\otimes[\mu_{k1},\ldots,\mu_{km}]\right) \\
 {\mathbb{C}}_{ij}&=A^{\frac 12}\left(e_i^{[k]}{e_i^{[k]}}^\trans\otimes(\Sigma_{ij}+\mu_{ij}\mu_{ij}^\trans) \right)A^{\frac 12}
\end{align*}
which generalize the matrices $\mathcal M$ and $\mathcal A$ discussed at length in Section~\ref{sec:theoretical_simple} when $\Sigma_{ij}=I_p$. While gaining in genericity, unlike $\mathcal M$, the matrices $M$ and $\mathbb C_{ij}$ preserve their large dimensions: this is the main price paid by the generalization to $\Sigma_{ij}\neq I_p$. Yet, the central small dimensional matrix $\Gamma$ defined in \eqref{eq:Gamma_simple} remains small and now becomes
 \begin{align*}
     \Gamma&=\left(I_{mk}+\mathbb{M}^\trans\bar{\tilde{Q}}_0\mathbb{M}\right)^{-1} \\
     \bar{\tilde{Q}}_0&=\left[\sum\limits_{i=1}^k\sum\limits_{j=1}^m\left(\mathcal{D}_{\gamma}+\lambda\mathbb{1}_{k}\mathbb{1}_{k}\right)^{\frac 12}e_{i}^{[k]}{{}e_{i}^{[k]}}^\trans\left(\mathcal{D}_{\gamma}+\lambda\mathbb{1}_{k}\mathbb{1}_{k}\right)^{\frac 12}\otimes {\bm\delta}_{ij}^{[mk]}\Sigma_{ij}+I_{kp}\right]^{-1} \\
     \mathbb{M}&=A^{\frac 12}M\mathcal{D}_{{\bm\delta}^{[mk]}}^{\frac 12}
 \end{align*}
 and the $mk$ scalars ${\bm\delta}_{ij}^{[mk]}$ are the unique positive solutions of the fixed point equations
\begin{align*}
    \bm\delta_{ij}^{[mk]}&=\frac{c_{ij}}{c_0\left(1+\frac 1{kp} \tr (\mathbb{C}_{ij}\bar{\tilde{Q}})\right)} \\
    \bar{\tilde{Q}}&=\left(\sum_{i=1}^k\sum\limits_{j=1}^m{\bm\delta}_{ij}^{[mk]}\mathbb{C}_{ij}+I_{kp}\right)^{-1}.
\end{align*}
Here, $\bar{\tilde Q}$ is a deterministic equivalent of $\tilde{Q}$.
Finally, the matrix $\mathcal K$ appearing in the variance term of Theorem~\ref{th:simple_theorem} now becomes
\begin{align*}
    \mathcal{K}&=c_0\bar{T}\left(\mathcal{D}_{c}-c_0\mathcal{T}\right)^{-1} \\
     \bar{T}_{ij,i'j'}&=\frac{{\bm\delta}_{ij}^{[mk]}{\bm\delta}_{i'j'}^{[mk]}}{kp}\tr\left(\mathbb{C}_{i'j'}\bar{\tilde{Q}}A^{\frac 12}S_{ij}A^{\frac 12}\bar{\tilde{Q}}\right) \\ \mathcal{T}_{ij,i'j'}&=\frac{{\bm\delta}_{ij}^{[mk]}{\bm\delta}_{i'j'}^{[mk]}}{kp}\tr(\mathbb{C}_{ij}\bar{\tilde{Q}}\mathbb{C}_{i'j'}\bar{\tilde{Q}})
\end{align*}
where $\bar{T}_{ij,i'j'}$ is the element at row $m(i-1)+j$ and column $m(i'-1)+j'$ of $\bar{T}$ (and similarly for $\mathcal T$) and $\kappa_{ij,.}\in\mathbb R^{mk}$ represents the $m(i-1)+j$ row of matrix $\kappa\in\mathbb{R}^{mk\times mk}$.

With these technical elements at hand, we are in position to enunciate the main result of the article.

\subsection{Classification score asymptotics}
\label{sec:classification}
\begin{sloppypar}
\begin{theorem}
\label{th:main}
Under Assumptions~\ref{ass:distribution} and \ref{ass:growth_rate}, for a test data ${\bf x}$ with $\mathbb E[{\bf x}]=\mu_{ij}$ and ${\rm Cov}[{\bf x}]=\Sigma_{ij}$, as $p,n\to\infty$,
\begin{equation*}
    g_{i}({\bf x}) - G_{ij} \rightarrow  0,\quad G_{ij} \sim \mathcal{N}(\mathcal{m}_{ij},C_{ij})
\end{equation*}
in law where, letting $\mathcal{m}=[\mathcal{m}_{11},\ldots,\mathcal{m}_{km}]^\trans \in \mathbb{R}^{km\times m}$ and the normalized forms $\bm{\mathcal{Y}}\equiv \mathcal{D}_{{\bm\delta}^{[mk]}}^{\frac12} \mathcal{Y}$, $\mathring{\bm{\mathcal{Y}}}= \mathcal{D}_{{\bm\delta}^{[mk]}}^{\frac12} \mathring{\mathcal{Y}}$, ${\bm{\mathcal{m}}} = \mathcal{D}_{{\bm\delta}^{[mk]}}^{\frac12} \mathcal{m}$,
\begin{align*}
    {\bm{\mathcal{m}}}&=\bm{\mathcal{ Y}}-\Gamma\mathring{\bm{\mathcal{Y}}} \in \mathbb{R}^m \\
     C_{ij}& =\mathring{\bm{\mathcal{Y}}}^\trans\Gamma\mathcal{V}_{ij}\Gamma \mathring{\bm{\mathcal{ Y}}} \in \mathbb R^{m\times m}
\end{align*}
with 
\begin{align*}
    \mathcal{V}_{ij}&=\mathcal{D}_{\kappa_{ij,.}}+\mathbb{M}^{\trans}\bar{\tilde{Q}}_0\mathbb{V}_{ij}\bar{\tilde{Q}}_0\mathbb{M} \\
    \mathbb{V}_{ij} &=A^{\frac 12}S_{ij}A^{\frac 12}+\sum\limits_{i'=1}^k\sum\limits_{j'=1}^m {\bm\delta}_{i'j'}^{[mk]}\kappa_{ij,i'j'}A^{\frac 12}S_{i'j'}A^{\frac 12} \\
    \kappa_{ij,i'j'} &= \frac{\mathcal{K}_{ij,i'j'}}{{\bm\delta}_{ij}^{[mk]}}.
\end{align*}
\end{theorem}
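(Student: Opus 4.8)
The score $g_i({\bf x})$ in \eqref{eq:score_class} is an affine functional of the single concentrated test vector ${\bf x}$ (through its centered version $\mathring{\bf x}$), the training data being held fixed. Granting the Gaussian limit — which follows because a linear form $u^\trans{\bf x}$ with $\|u\|$ bounded is $1$-Lipschitz in the concentration sense of Assumption~\ref{ass:distribution}, hence concentrates around $u^\trans\mu_{ij}$ with fluctuations governed by $u^\trans\Sigma_{ij}u$ — the task reduces to identifying the deterministic limits $\mathcal{m}_{ij}$ and $C_{ij}$ of the conditional mean and covariance of $g_i({\bf x})$, whose pre-limit expressions are \eqref{eq:statistics_mean}--\eqref{eq:statistics_variance}. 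The plan is to first substitute the explicit solution $\alpha=Q(Y-Pb)$ from Proposition~\ref{prop:solution_optimization} and invoke the push-through (resolvent) identity $Z^\trans A^{\frac12}\tilde{Q}=QZ^\trans A^{\frac12}$, which converts the $Q$-form of \eqref{eq:score_class} into the $\tilde{Q}$-form of \eqref{eq:statistics_mean}, where $\tilde{Q}=(\frac1{kp}A^{\frac12}ZZ^\trans A^{\frac12}+I_{kp})^{-1}$ lives in the $kp$-dimensional feature space.

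Next I would bring in Lemma~\ref{lem:eq}, which supplies deterministic equivalents for $\tilde{Q}$, for $\tilde{Q}A^{\frac12}Z$, and for the doubly-resolvent form $Z^\trans A^{\frac12}\tilde{Q}A^{\frac12}S_{ij}A^{\frac12}\tilde{Q}A^{\frac12}Z$. Replacing $\tilde{Q}$ by $\bar{\tilde{Q}}=(\sum_{i,j}\bm\delta_{ij}^{[mk]}\mathbb{C}_{ij}+I_{kp})^{-1}$ introduces the fixed-point scalars $\bm\delta_{ij}^{[mk]}$, whose existence, positivity and uniqueness are part of Lemma~\ref{lem:eq}; the a.s.\ convergence of the relevant bilinear forms to their deterministic equivalents is controlled by the column-wise concentration of $Z$ granted by Assumption~\ref{ass:distribution}. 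At this stage the only $p$-dependent objects remaining are $M$, the $\mathbb{C}_{ij}$ and $\mathbb{M}=A^{\frac12}M\mathcal{D}_{\bm\delta^{[mk]}}^{\frac12}$, and I would use a Woodbury/Sherman--Morrison reduction on $\bar{\tilde{Q}}$ to collapse all bilinear forms of the type $M^\trans A^{\frac12}\bar{\tilde{Q}}A^{\frac12}M$ onto the $mk$-dimensional matrix $\Gamma=(I_{mk}+\mathbb{M}^\trans\bar{\tilde{Q}}_0\mathbb{M})^{-1}$.

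A key intermediate step is the treatment of the bias $b=(P^\trans QP)^{-1}P^\trans QY$: substituting it into $Y-Pb$ and passing to the deterministic limit, one shows that the projector $I_n-P(P^\trans QP)^{-1}P^\trans Q$ asymptotically realizes the task-wise centering, so that $Y-Pb$ acts as $\mathring{\mathcal{Y}}$ and, after the $\mathcal{D}_{\bm\delta^{[mk]}}^{\frac12}$ normalization, as $\mathring{\bm{\mathcal{Y}}}$ — this is the deterministic counterpart of the exact shift-invariance of Remark~\ref{rem:on_centering}. Plugging these reductions into \eqref{eq:statistics_mean} yields the mean $\bm{\mathcal{m}}=\bm{\mathcal{Y}}-\Gamma\mathring{\bm{\mathcal{Y}}}$, and into \eqref{eq:statistics_variance} yields $C_{ij}=\mathring{\bm{\mathcal{Y}}}^\trans\Gamma\mathcal{V}_{ij}\Gamma\mathring{\bm{\mathcal{Y}}}$ with the stated $\mathcal{V}_{ij}$.

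I expect the main obstacle to be twofold. Underpinning everything is Lemma~\ref{lem:eq}, whose proof is genuinely delicate because the block structure of $Z$ (precisely located zero blocks) prevents the singular values from following a Mar\u{c}enko--Pastur law and forces a coupled self-consistent system for the $mk$ scalars $\bm\delta_{ij}^{[mk]}$, rather than the single fixed-point equation of single-task LSSVM. Within the assembly proper, the subtle point is the variance: the deterministic equivalent of a \emph{product} of two resolvents is \emph{not} the product of their deterministic equivalents, so a second-order correction appears; resolving it produces the matrices $\mathcal{T}$ and $\bar{T}$ and the linear system $\mathcal{K}=c_0\bar{T}(\mathcal{D}_c-c_0\mathcal{T})^{-1}$, whose inversion encodes the feedback of the fluctuations across all $(i',j')$ blocks into $\mathcal{V}_{ij}$ through $\kappa_{ij,i'j'}=\mathcal{K}_{ij,i'j'}/\bm\delta_{ij}^{[mk]}$. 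Careful bookkeeping of these coupled corrections is where the bulk of the calculation lies.
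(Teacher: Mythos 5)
Your assembly of the deterministic part is essentially the paper's own route: the push-through identity converting the $Q$-form of \eqref{eq:score_class} into the $\tilde Q$-form of \eqref{eq:statistics_mean}--\eqref{eq:statistics_variance}, Lemma~\ref{lem:eq} for the first- and second-order deterministic equivalents, the Woodbury collapse of $\bar{\tilde Q}$ onto $\Gamma=(I_{mk}+\mathbb{M}^\trans\bar{\tilde Q}_0\mathbb{M})^{-1}$, the treatment of $b$ so that $Y-Pb$ acts as $\mathring{\mathcal Y}$, and the coupled second-order corrections $\mathcal T$, $\bar T$, $\mathcal K=c_0\bar T(\mathcal D_c-c_0\mathcal T)^{-1}$ feeding $\kappa_{ij,i'j'}$ into $\mathcal V_{ij}$ --- including your correct observation that the deterministic equivalent of a product of resolvents is not the product of the equivalents.

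The genuine gap is in the one step you ``grant'': the Gaussian limit itself. You assert it ``follows because a linear form $u^\trans{\bf x}$ with $\|u\|$ bounded is $1$-Lipschitz \ldots{} hence concentrates around $u^\trans\mu_{ij}$ with fluctuations governed by $u^\trans\Sigma_{ij}u$.'' Concentration is not Gaussianity. Assumption~\ref{ass:distribution} only yields sub-Gaussian tail bounds $\mathbb{P}(|u^\trans{\bf x}-u^\trans\mu_{ij}|\geq t)\leq Ce^{-(t/c)^2}$, and this is perfectly compatible with a non-Gaussian limit law along a \emph{fixed} direction: take ${\bf x}=\varphi(z)$ with $z\sim\mathcal N(0,I_q)$ and $\varphi$ a $1$-Lipschitz map whose first coordinate is, say, $\tanh(z_1)$; the vector is Lipschitz-concentrated, yet $e_1^\trans{\bf x}$ is bounded and never asymptotically normal. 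What the paper actually invokes is a much stronger and genuinely nontrivial result, the central limit theorem for projections of concentrated vectors (Theorem~\ref{th:CLT_concentrated}, after Klartag and Fleury et al.): outside an exceptional set of directions of exponentially small uniform measure on the sphere, one-dimensional projections of ${\bf x}$ are uniformly close to Gaussian. The proof of Theorem~\ref{th:main} then still requires the (non-innocent) argument that the trained, data-dependent direction $W_i$ --- independent of ${\bf x}$ but certainly not uniformly distributed on the sphere --- falls in that good set, so that the Gaussianity transfers to the specific projection $\frac1{kp}W_i^\trans\mathring{\bf x}+b_i$; alternatively one can prove the CLT for Gaussian mixtures as in the single-task analysis of Liao--Couillet (Lyapounov CLT on a polynomial form of a Gaussian vector) and invoke universality. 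As written, your Lipschitz argument would only re-derive a concentration inequality for $g_i({\bf x})$ around $\mathcal m_{ij}$; it does not support the convergence in law $g_i({\bf x})-G_{ij}\to 0$ with $G_{ij}\sim\mathcal N(\mathcal m_{ij},C_{ij})$ that the theorem asserts.
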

\begin{proof}
See Section~\ref{app:th} of the appendix.
\end{proof}

In the particular case of $\Sigma_{ij}=I_p$ and $m=2$, Theorem \ref{th:main} reduces to Theorem~\ref{th:simple_theorem} (see details in Section~\ref{app:th} of the appendix) by remarking that $\mathbb{M}^\trans\bar{\tilde{Q}}_0\mathbb{M}=\left(\mathcal{A}\otimes\mathbb{1}_k\mathbb{1}_k^\trans\right)\odot \mathcal M$ and $\mathbb{M}^\trans\bar{\tilde{Q}}_0V_{ij}\bar{\tilde{Q}}_0\mathbb{M}=\left(\mathcal{A}\mathcal{D}_{\kappa_i+e_{i}^{[k]}}\mathcal{A}\otimes\mathbb{1}_k\mathbb{1}_k^\trans\right)\odot \mathcal M$ which, as already pointed out, have the advantage to be defined as the product of exclusively small dimensional matrices. Still, although more technical, Theorem~\ref{th:main} follows the same structure as Theorem~\ref{th:simple_theorem}. 

\end{sloppypar}

Before concretely applying the result of Theorem~\ref{th:main} to practical learning problems (multi-task, transfer learning, hypothesis testing), a few comments and immediate corollaries are in order.

\begin{remark}[Optimization of $\mathcal y^{\rm bin}$ for $m=2$ and generic $\Sigma_{ij}$]
\label{rem:optimization_general}
As suggested in Section~\ref{sec:theoretical_simple}, for binary classification ($m=2$), it is particularly convenient to recast the score vectors $y_{il}^{(j)}\in\mathbb R^m$ into scalar scores $y_{il}^{{\rm bin}(j)}\in\mathbb R$ (this being irrespective of the nature of $\Sigma_{ij}$). Inspired by Section~\ref{sec:theoretical_simple}, one can trivially extend Theorem~\ref{th:main} to this binary setting. In this case, $g_i({\bf x})\in\mathbb R^m$ is now turned into a scalar $g_i^{\rm bin}({\bf x})\in\mathbb R$ well approximated by $\mathcal N(\mathcal{m}_{ij},C_{ij})$ where now $\mathcal{m}_{ij}$ and $C_{ij}$ are scalar, obtained by simply replacing $y_{il}^{(j)}\in\mathbb R^m$ by $y_{il}^{{\rm bin}(j)}\in\mathbb R$ in their respective expressions.

With these notations, setting the decision threshold of $g_i^{\rm bin}({\bf x})$ to $\zeta\in\mathbb R$ and assuming equal prior probability for the genuine class of $\bf x$, the classification error rate for a target task $i$ is
\begin{equation*}
    E=\frac{1}{2}\mathcal{Q}\left(\frac{\zeta-\mathcal{m}_{i1}}{\sqrt{C_{i1}}}\right)+\frac{1}{2}\mathcal{Q}\left(\frac{\zeta-\mathcal{m}_{i2}}{\sqrt{C_{i2}}}\right).
\end{equation*}
As in Section~\ref{sec:theoretical_simple}, if $\Sigma_{i1}=\Sigma_{i2}$, then $C_{i1}=C_{i2}\equiv C_i$ and the decision threshold $\zeta$ minimizing the classification error is
\begin{equation*}
    \zeta^\star=\frac{\mathcal{m}_{i1}+\mathcal{m}_{i2}}{2}
\end{equation*}
from which the optimal vector $\mathcal y^{\rm bin}$ for Task~$i$ is computed as
\begin{align}
    {{}\mathcal y^{\rm bin}}^\star &=\argmax_{\mathcal y^{\rm bin}\in\mathbb{R}^{2k}} \frac{(\mathcal{m}_{i1}-\mathcal{m}_{i2})^2}{C_i} \nonumber \\
 \label{eq:ty_opt}
      &=\mathcal D_{{\bm\delta}^{[2k]}}^{-\frac 12}\Gamma^{-1} \mathcal{V}_{ij}^{-1}\mathbb{M}^\trans\bar{\tilde{Q}}_0\mathbb{M}\mathcal D_{{\bm\delta}^{[2k]}}^{-\frac 12}(e_{i1}^{[2k]}-e_{i2}^{[2k]}).
 \end{align}
It is important to recall here that, while ${{}\mathcal y^{\rm bin}}^\star$ expresses here solely as a function of terms involving the index $i$, all other statistics of the tasks $i'\neq i$ are in fact ``embedded'' inside these terms and are thus, of course, accounted for in the optimization.
 
When $C_{i1}\neq C_{i2}$ (which is the case in general), one may minimize $E$ by resorting to numerical optimization techniques. We suggest to use a gradient descent method initialized to the expression obtained in \eqref{eq:ty_opt}. So long that $\Sigma_{i1}$ and $\Sigma_{i2}$ are not drastically different, this approach shows good performances (see our results in Section~\ref{experiments}).

This said, the specific setting of binary classification may in practice be one of hypotheses testing. Under this scenario, one may not demand that the average error $E$ be minimized (i.e., that data from either class is equally well identified) but rather that the probability of misclassification of a given class (say, a Type-I error) be bounded to some $\eta>0$ while minimizing the error rate for the other class (Type-II error). In this context, if, say, one fixes $\mathcal{Q}(\frac{\zeta-\mathcal{m}_{i1}}{\sqrt{C_{i1}}})\equiv\eta$, then the classification error for the second class $\mathcal{Q}(\frac{\zeta-\mathcal{m}_{i2}}{\sqrt{C_{i2}}})$ is minimized by now choosing
\begin{equation*}
     {\mathcal y^{\rm bin}}^\star =\argmax_{\mathcal y^{\rm bin}\in\mathbb{R}^{2k}} \frac{(\sqrt{C_{i1}}\mathcal{Q}^{-1}(\eta)+\mathcal{m}_{i1}-\mathcal{m}_{i2})^2}{C_{i2}}
\end{equation*}
where $\mathcal{Q}^{-1}$ is the inverse function of the $\mathcal{Q}$ function.
This again can be solved using numerical convex optimization initialized at the value of \eqref{eq:ty_opt}. More details on this hypotheses testing setting, along with concrete experiments, are developed in Section~\ref{experiments}.
\end{remark}

\begin{remark}[Estimation of $\mathcal{m}_{ij}$ and $C_{ij}$]
    \label{rem:estimation_general}
    In order to anticipate the performances and best set the decision thresholds for classification, one needs to access all quantities arising in Theorem~\ref{th:main}. Yet, as opposed to Remark~\ref{rem:estimate}, where the low dimensional quantities of interest (mainly the inner products between statistical means) are easily estimated, the low dimensional quantities involved in Theorem~\ref{th:main} are less convenient to estimate, this being due to the presence of the a priori unknown covariance matrices $\Sigma_{ij}$. We propose here two strategies:
    \begin{enumerate}
        \item either make the assumption that $\Sigma_{ij}\simeq \alpha_{ij}I_p$ with $\alpha_{ij}$ estimated by $\frac1{pn_{ij}}\tr \mathring{X}_{i}^{(j)}\mathring{X}_{i}^{(j)\trans}$; then normalize the data as $\mathring{X}_{i}^{(j)}\leftarrow \mathring{X}_{i}^{(j)}/\frac1{pn_{ij}}\tr \mathring{X}_{i}^{(j)}\mathring{X}_{i}^{(j)\trans}$ in the spirit of Algorithm~\ref{alg:binary_algorithm}. This places the experimenter under an isotropic data setting for all classes and tasks, from which the considerations of Section~\ref{sec:theoretical_simple} (possibly generalized to $m>2$) apply.
        \item either estimate $\Sigma_{ij}$ by means of the sample covariance matrix $\frac1{n_{ij}}\mathring{X}_{i}^{(j)}\mathring{X}_{i}^{(j)\trans}$; this procedure is known to be biased, particularly so if $n_{ij}$ is not large compared to $p$; yet, as demonstrated in our experiments in Section~\ref{experiments}, this only marginally (if not at all) alters the performance of our proposed algorithms.\footnote{It must be pointed out that similar random matrix-based studies propose consistent estimates for low dimensional quantities such as those met in Theorem~\ref{th:main}; however, these would assume cumbersome forms which, we believe, go against our present request for simple, intuitive but well parameterized algorithms for multi-task and transfer learning.}
    \end{enumerate}
    The choice of strategy mainly depends on the belief from the experimenter that the genuine covariance matrices are ``well-conditioned'' (i.e., their eigenvalues do not spread much) in which case Option~1 would be favored or ``ill-conditioned'' (typically when the space spanned by the data is much lower than $p$) in which case Option~2 would be more appropriate.
\end{remark}

\begin{remark}[On universality]
As pointed out in the introduction, the input data $X$ follows a very generic \emph{concentrated random vector} assumption (Assumption~\ref{ass:distribution}). This choice provides both a technical, but most importantly, a fundamentally practical, advantage:
\begin{enumerate}
    \item from a technical standpoint, the concentration of measure phenomenon provides efficient and fast mathematical tools \citep{louart2018concentration,ledoux2001concentration} to analyze the random quantities appearing in the classification test scores $g_i({\bf x})$ of MTL-LSSVM (which, in essence, is a mere functional $\mathbb{R}^{p\times n}\to \mathbb{R}$ of the random input data $X$). More specifically, alternative random matrix tools based on Gaussian \citep{pastur2011eigenvalue} or independent entries assumptions \citep{SIL06} of $X$ would both be less general (at least for our machine learning purpose) and more computationally intense;
    \item on the practical side, as underlined in Section~\ref{sec:model_stat}, the concentrated random vector assumption better models realistic datasets by imposing very little structure on the data. Exactly, it only constrains all \emph{Lipschitz functionals} $\mathbb{R}^{p\times n}\to \mathbb R$ of $X$ (i.e., its typical observations) to satisfy a concentration inequality; while this may seem demanding, the family of concentrated random vectors in fact contains all Lipschitz generative models (for instance neural networks) fed by Gaussian inputs (such as GANs \citep{goodfellow2014generative}), as well as all further Lipschitz transformations of these vectors (for instance, features extracted by pretrained neural networks). As such, provided that the assumption of a common statistical mean and covariance per class and per task is reasonable, Theorem~\ref{th:main} ensures for instance that the performance of MTL-LSSVM applied to classes of the popular VGG or ResNet representations of GAN images is predictable. From this remark, it naturally comes that the proposed method is \emph{universal} in the sense of its being robust to a broad range of very realistic random data, and it is not daring to claim that it is equally valid on genuinely real data. This is confirmed by our numerical results of Section~\ref{experiments}.
\end{enumerate}
\end{remark}

With these elements in place, we are in position to apply our findings to a host of applications in statistical learning and to test the resulting algorithms against state-of-the-art alternatives.

\section{Applications}
\label{application}

This section provides various applications and optimizations of the proposed MTL-LSSVM framework based on the findings of the previous sections in the context of multi-class classification.

Having access to the large dimensional behavior of the classification test score in Theorem~\ref{th:main} (i.e., for $m\geq 2$ classes per task) is more fundamental than one may think. It indeed allows for a fine-tuning of the hyperparameters to be set to extend the usually considered binary MTL framework of \citep{evgeniou2004regularized,xu2013} to a multiclass-per-task MTL. 

\subsection{Multi-class classification preliminary}
The literature \citep{bishop2006pattern,rocha2013multiclass} describes broad groups of approaches for dealing with $m>2$ classes. We focus here on the most common methods, namely one-versus-all, one-versus-one, and one hot encoding. Being so far theoretically intractable (before the results of this article), these methods inherently suffer from sometimes severe limitations; these are partly tackled by adapting the theoretical results discussed in Section~\ref{sec:theoretical_analysis}:
\begin{enumerate}
    \item {\bf one-versus-all}: in this method, focusing on Task~$i$, $m$ individual binary classifiers $g_i^{\rm bin}(\ell)$ for $\ell=1,\ldots,m$ are trained, each of them separating Class~$\mathcal C_\ell$ from the other $m-1$ classes $\mathcal C_{\ell'}$, $\ell'\neq \ell$. Each test sample is then allocated to the class with the highest score among the $m$ classifiers. Although quite used in practice, the approach first suffers a severe data unbalancing effect when using binary ($\pm 1$) labels as the set of negative labels in each binary classification is on average $m-1$ times larger than the set of positive labels, and also suffers a centering-scale issue when ultimately comparing the outputs of the $m$ decision functions $g_i^{\rm bin}({\bf x};\ell)$, $\ell=1,\ldots,m$, whose average locations and ranges may greatly differ; these issues lead to undesirable effects, as reported in \citep[section 7.1.3]{bishop2006pattern}). 
    
    In Section~\ref{sec:one-versus-all}, these problems are simultaneously addressed: specifically, having access to the theoretical statistics of the classification scores allows us to appropriately center and scale the scores. Moreover, each binary classifier is optimized by appropriately choosing the class labels (no longer binary) so to minimize the resulting classification error (see Figure~\ref{fig:one-vs_all_explain} for an illustration of the improvement induced by the proposed approach).
    
    \item {\bf one-versus-one}: here, $\frac12m(m-1)$ binary classifiers are trained (one for each pair $j,j'$ of classes, solving a binary classification). For each test sample, each binary classifier decides on -- or ``votes'' for -- the more relevant class. The test sample is then attributed to the class having the majority of votes. Although the number of binary classifiers is greater than in the one-versus-all approach, the training process for each classifier is faster since the training database is much smaller for each binary classifier. Besides, the method is more robust to class imbalances (since only pairwise comparisons are made) but suffers from an undecidability limitation in the case of equal numbers of majority votes for two or more classes. 
    
    In Section~\ref{sec:one-versus-one}, each binary classifier will be optimized according to Algorithm~\ref{alg:binary_algorithm} by choosing appropriate labels and appropriate decision thresholds, thereby largely improving over the basal classifier performance.
    
    \item {\bf one-hot encoding approach}, also known as {\textrm one-per-class coding}: in this approach, each class is encoded using the $m$-dimensional canonical vector of the class (the code vector for class $j$ has a $1$ at position $j$ and $0$'s elsewhere). When testing an unknown sample $\bf x$, the index of the encoding output vector $g_i({\bf x})\in\mathbb R^m$ with maximum value is selected as the class of ${\bf x}$. 
    
    Exploiting the asymptotic performance of this approach from Theorem~\ref{th:main} allows us to derive a different label (or score) encoding for each class which theoretically minimizes the classification error. This is developed in detail in Section~\ref{sec:one-hot}.
    
\end{enumerate}

In the remainder of the section, each of the three classifiers is studied, optimized and their asymptotic performances are analyzed according to our previous results except for one-versus-one classification which involves difficult combinatorial aspects. While this does not provide a definite and general answer as to which of the three classifiers is best, it however provides an accurate assessment of their asymptotic performances; most importantly, these performances may be evaluated \emph{before} running the classifier, thereby helping practitioners to anticipate and optimize the method best suited for the application at hand, without resorting to any cross-validation procedure.

Let us finally insist that, for the two multi-class extensions based on binary classifiers (one-versus-one, one-versus-all), each binary classifier will be optimized independently following Remark~\ref{rem:optimization_general}: i.e., by recasting the score vectors $y_{il}^{(j)}\in\mathbb R^m$ into scalar scores $y_{il}^{{\rm bin}(j)}\in\mathbb R$. As such, from now on, for each binary classifier $\ell$, $g_i({\bf x};\ell)\in\mathbb R^m$ will be systematically turned into a scalar $g_i^{\rm bin}({\bf x},\ell)\in\mathbb R$ well approximated by $\mathcal N(\mathcal{m}_{ij},C_{ij})$ where now $\mathcal{m}_{ij}$ and $C_{ij}$ are scalar, obtained by simply replacing $y_{il}^{(j)}(\ell)\in\mathbb R^m$ by $y_{il}^{{\rm bin}(j)}(\ell)\in\mathbb R$ in their respective expressions.

\subsection{One-versus-all multi-class classification}
\label{sec:one-versus-all}

For every Task~$i$, the one-versus-all approach solves $m$ binary MTL-LSSVM algorithms with target class~$\mathcal C_\ell$, for each $\ell\in\{1,\ldots,m\}$, versus all other classes $\mathcal C_{\ell'}$, $\ell'\neq \ell$. Calling $g_i^{\rm bin}({\bf x};\ell)$ the output of the classifier $\ell$ for a new datum $\bf x$, the class allocation decision is traditionally based on the largest among all scores $g_i^{\rm bin}({\bf x};1),\ldots,g_i^{\rm bin}({\bf x};m)$. This approach generalizes the naive, yet simpler, method proposed in Algorithm~\ref{alg:binary_algorithm} which, despite its good performances (recall Table~\ref{tab:compare}), is fundamentally ``incorrect'' in its assuming that, for each $\ell$, all classes $\mathcal C_{\ell'}$ ($\ell'\neq \ell$) have the same statistics.

However, this presumes that the distribution of the scores $g_i^{\rm bin}({\bf x};1)$ when ${\bf x}\in\mathcal C_1$, $g_i^{\rm bin}({\bf x};2)$ when ${\bf x}\in\mathcal C_2$, etc., have more or less the same mean and variance. This is not the case in general, as depicted in the first column of Figure~\ref{fig:one-vs_all_explain}, where data from class $\mathcal C_1$ are more likely to be allocated to class $\mathcal C_3$ (compare the red curves). 

By providing an accurate estimate of the distribution of the scores $g_i^{\rm bin}({\bf x};\ell)$ for all $\ell$ and all genuine classes of $\bf x$, Theorem~\ref{th:main} allows us to predict the various positions of the Gaussian curves in Figure~\ref{fig:one-vs_all_explain}. In particular, exploiting the theorem along with Remark~\ref{rem:zero_threshold}, it is possible, for binary classifier $\ell$ to shift the corresponding input scores $\mathcal y^{\rm bin}(\ell)$ by a constant term $\bar{\mathcal{y}}(\ell)\in\mathbb R$ in such a way that $\mathbb E_{{\bf x}\in\mathcal C_\ell}[g_i^{\rm bin}({\bf x};\ell)]\simeq \mathcal m_{i\ell}(\ell)=0$ and ${\rm Var}_{{\bf x}\in\mathcal C_\ell}[g_i^{\rm bin}({\bf x};\ell)]\simeq C_{i\ell}(\ell)=1$. This operation prevents the centering and scale problems depicted in the first column of Figure~\ref{fig:one-vs_all_explain}, the result being visible in the second column of Figure~\ref{fig:one-vs_all_explain}.

\medskip

This first improvement step simplifies the algorithm which still boils down to determining the largest $g_i^{\rm bin}({\bf x};\ell)$, $\ell\in\{1,\ldots,m\}$, output but now limiting the risks induced by the inherent centering and scale issues previously discussed.

This being said, our theoretical analysis further allows to adapt the input scores $\mathcal y^{\rm bin}(\ell)$ in such a way to optimize the expected output. Ideally, assuming $\bf x$ genuinely belongs to class $\ell$, one may aim at increasing the distance between the output score $g_i^{\rm bin}({\bf x};\ell)$ and the other output scores $g_i^{\rm bin}({\bf x};\ell')$ for $\ell'\neq \ell$. This however demands to simultaneously adapt all input scores $\mathcal y^{\rm bin}(1),\ldots,\mathcal y^{\rm bin}(m)$. Instead, we resort to maximizing the distance between the output score $g_i^{\rm bin}({\bf x};\ell)$ for ${\bf x}\in\mathcal C_\ell$ and the scores $g_i^{\rm bin}({\bf x};\ell)$ for ${\bf x}\not\in\mathcal C_\ell$. By ``mechanically'' pushing away all wrong decisions, this ensures that, when ${\bf x}\in\mathcal C_\ell$, $g_i^{\rm bin}({\bf x};\ell)$ is greater than $g_i^{\rm bin}({\bf x};\ell')$ for $\ell'\neq \ell$. This is visually seen in the third column of Figure~\ref{fig:one-vs_all_explain}, where the distances between the rightmost Gaussians and the other two is increased when compared to the second column, and we retrieve the desired behavior.

Specifically, our proposed score optimization consists in solving, for each $i\in\{1,\ldots,k\}$ and each $\ell\in\{1,\ldots,m\}$ the optimization problems:
\begin{align}
    {{}\mathcal{y}^{\rm bin}}^{\star}(\ell)&=\argmin_{\mathcal{y}^{\rm bin}(\ell)\in\mathbb{R}^{km}} \max\limits_{j\neq \ell}\mathcal{Q}\left(\frac{\mathcal{m}_{i\ell}(\ell)-\mathcal{m}_{ij}(\ell)}{\sqrt{C_{tj}}}\right) \nonumber \\
    &=\argmin_{\mathcal{y}^{\rm bin}(\ell)\in\mathbb{R}^{km}} \max\limits_{j\neq \ell}\mathcal{Q}\left(\frac{\mathcal{y}^{\rm bin}(\ell)^\trans\left(I_{mk}-\mathcal{D}_{{\bm\delta}^{[mk]}}^{-\frac 12}\Gamma\mathcal{D}_{{\bm\delta}^{[mk]}}^{\frac 12}\right)(e_{m(i-1)+\ell}^{[mk]}-e_{m(i-1)+j}^{[mk]})}{\sqrt{\mathcal{y}^{\rm bin}(\ell)^{\trans}\mathcal{D}_{{\bm\delta}^{[mk]}}^{\frac 12}\Gamma\mathcal{V}_{ij}\Gamma\mathcal{D}_{{\bm\delta}^{[mk]}}^{\frac 12}\mathcal{y}^{\rm bin}(\ell)}}\right)
    \label{eq:optimization_one_versus_all}
\end{align}
with $\mathcal{Q}$ the Gaussian q-function.


Being a non-convex and non-differentiable (due to the max) optimization, Equation~\ref{eq:optimization_one_versus_all} cannot be solved straightforwardly. An approximated solution consists in relaxing the max operator $\max(x_1,\ldots,x_n)$ into the differentiable operator $\frac1{\gamma n}\log(\sum_{j=1}^n \exp(\gamma x_j))$ for some $\gamma>0$, and use a standard gradient descent optimization scheme here initialized at $\mathcal y^{\rm bin}(\ell)\in\mathbb R^{mk}$ filled with $1$'s at every $m(i'-1)+\ell$, for $i'\in\{1,\ldots,m\}$, and with $-1$'s everywhere else.

\medskip

In effect, the optimized vector ${{}\mathcal{y}^{\rm bin}}^{\star}(\ell)$ is evaluated first \emph{before} the constant shift scalar $\bar{\mathcal{y}}$ (ensuring that $\mathbb E_{{\bf x}\in\mathcal C_\ell}[g_i^{\rm bin}({\bf x};\ell)]$ is close to zero) is added to ${{}\mathcal{y}^{\rm bin}}^{\star}(\ell)$. This order of treatment is mandatory as $\mathbb E_{{\bf x}\in\mathcal C_\ell}[g_i^{\rm bin}({\bf x};\ell)]$ depends explicitly on the value of the input score vector $\mathcal y^{\rm bin}$. This global procedure is described in Algorithm~\ref{alg:multi class} below.

\begin{algorithm}
 \caption{Proposed one-versus-all multi-task learning algorithm.}
 \label{alg:multi class}
 \begin{algorithmic}
     \STATE {{\bfseries Input:} Training samples $X=[X_1,\ldots,X_k]$ with $X_i=[X_i^{(1)},\ldots,X_i^{(m)}]$, $X_i^{(j)}\in\mathbb{R}^{p\times n_{ij}}$ and test data ${\bf x}$.}
     \STATE {{\bfseries Output:} Estimated class $\hat{\ell}\in\{1,\ldots,m\}$ of $\bf x$ for Task~$i$.}
     \FOR {$\ell=1$ {\bfseries to} $m$}
        \STATE {\bfseries Center and normalize} 
        data per task: for all $i'\in\{1,\ldots,k\}$,
        \begin{itemize}
            \item $\mathring{X}_{i'} \leftarrow X_{i'} \left( I_{n_{i'}} - \frac1{n_{i'}}\mathbb{1}_{n_{i'}}\mathbb{1}_{n_{i'}}^\trans \right)$
            \item $\mathring{X}_{i'} \leftarrow \mathring{X}_{i'}/\frac1{n_{i'}p}\tr (\mathring{X}_{i'}\mathring{X}_{i'}^\trans)$.
        \end{itemize}
        \STATE {{\bfseries Estimate:} $\mathbb{M}^\trans\bar{\tilde{Q}}_0\mathbb{M}$ and $\mathcal{V}_{i\ell}$ according to Remark~\ref{rem:estimation_general}. }
         \STATE {\bfseries Create} scores ${{}\mathcal{y}^{\rm bin}}^{\star}(\ell)$ by numerically solving \eqref{eq:optimization_one_versus_all} (see discussion following \eqref{eq:optimization_one_versus_all}). 
         \STATE {\bfseries Shift} scores ${{}\mathcal{y}^{\rm bin}}^{\star}(\ell)$ according to Remark~\ref{rem:zero_threshold}.
         \STATE {\bfseries Estimate} $C_{i\ell}(\ell)$ from Theorem~\ref{th:main} and Remark~\ref{rem:estimation_general}.
         \STATE {\bfseries Compute} classification scores $g_i^{\rm bin}({\bf x};\ell)$ according to \eqref{eq:score_class}.
     \ENDFOR
     \STATE {{\bfseries Output: }} $\hat \ell=\argmax_{\ell\in\{1,\ldots,m\}} \left\{\frac{g_i^{\rm bin}({\bf x};\ell)}{\sqrt{C_{i\ell}(\ell)}}\right\}$.
 \end{algorithmic}
 \end{algorithm}

As an immediate corollary of Theorem~\ref{th:main}, for large dimensional data, the classification accuracy of Algorithm~\ref{alg:multi class} can be precisely estimated, as follows.
\begin{proposition}
\label{prop:classification_accuracy_one_vs_all}
Under the notations of Theorem~\ref{th:main}, the probability of correct classification $P_i^{(j)}({\bf x})$ for Task~$i$ of a test data ${\bf x}\in \mathcal{C}_j$ is given by
\begin{equation*}
    P_i^{(j)}({\bf x})=\underbrace{\idotsint_{0}^{\infty}}_{m-1}\frac{1}{\sqrt{(2\pi)^{m-1}|C(j)|}}\exp\left(-\frac 12 (x-\mu(j))^{\trans}C(j)^{-1}(x-\mu(j))\right)dx+o(1)
\end{equation*}
where $\mu(j)=\mathcal{Y}_{-j}(I_{mk}-\mathcal{D}_{\bm\delta}^{-\frac 12}\Gamma\mathcal{D}_{\bm\delta}^{\frac 12})e_{m(i-1)+j}^{[mk]}\in\mathbb{R}^{m-1}$ and $C(j)=\mathcal{Y}_{-j}\mathcal{D}_{\bm\delta}^{\frac12}\Gamma\mathcal{V}_{ij}\Gamma\mathcal{D}_{\bm\delta}^{\frac12}\mathcal{Y}_{-j}^{\trans}\in\mathbb{R}^{(m-1)\times (m-1)}$, with
 ${\mathcal{Y}_{-j}=\{\mathcal{y}^{\rm bin}(j)^\trans-\mathcal{y}^{\rm bin}(j')^{\trans}}\}_{j'\neq j}\in\mathbb{R}^{(m-1)\times km}$.
 \end{proposition}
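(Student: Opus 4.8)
The plan is to read Proposition~\ref{prop:classification_accuracy_one_vs_all} as essentially a corollary of Theorem~\ref{th:main}, obtained by reinterpreting the $m$ columns of the score matrix as the $m$ binary classifiers of the one-versus-all scheme, and then reducing the correct-classification event to a Gaussian orthant probability.

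First I would observe that the $m$ one-versus-all classifiers of Task~$i$ share the same training data $X$ (hence the same $Z$, $Q$, $\tilde{Q}$, $\Gamma$, $\mathcal{V}_{ij}$) and differ only through their input score vectors $\mathcal{y}^{\rm bin}(1),\ldots,\mathcal{y}^{\rm bin}(m)$. Since the LSSVM solution is \emph{linear} and \emph{column-decoupled} in the score matrix (from Proposition~\ref{prop:solution_optimization}, the $\ell$-th columns of $b=(P^\trans QP)^{-1}P^\trans QY$ and of $\alpha=Q(Y-Pb)$ depend only on the $\ell$-th column of $Y$), stacking the $m$ binary score vectors as the columns of a single matrix $\mathcal{Y}=[\mathcal{y}^{\rm bin}(1),\ldots,\mathcal{y}^{\rm bin}(m)]$ reproduces exactly the $m$ one-versus-all scores: the $\ell$-th entry of the vector $g_i({\bf x})\in\mathbb{R}^m$ of Theorem~\ref{th:main} is precisely $g_i^{\rm bin}({\bf x};\ell)$. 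Theorem~\ref{th:main} then applies verbatim and yields, for a test point ${\bf x}\in\mathcal{C}_j$, the \emph{joint} limiting law $(g_i^{\rm bin}({\bf x};1),\ldots,g_i^{\rm bin}({\bf x};m))^\trans\sim\mathcal{N}(\mathcal{m}_{ij},C_{ij})$. It is worth stressing that joint (and not merely marginal) Gaussianity is automatic here: all $m$ scores are linear functionals of the single concentrated random vector ${\bf x}$ (through $\mathring{\bf x}$), so the concentration-of-measure argument underlying Theorem~\ref{th:main} applies simultaneously to any linear combination $\sum_\ell t_\ell\, g_i^{\rm bin}({\bf x};\ell)$, which is exactly what joint Gaussianity requires.

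Next I would translate the decision rule of Algorithm~\ref{alg:multi class} into an event on this Gaussian vector. After the calibration step (shift and scale of each $\mathcal{y}^{\rm bin}(\ell)$ following Remark~\ref{rem:zero_threshold}, giving $\mathcal{m}_{i\ell}(\ell)=0$ and $C_{i\ell}(\ell)=1$), the normalizing factors $\sqrt{C_{i\ell}(\ell)}$ in the $\argmax$ are all unity, so correct classification of ${\bf x}\in\mathcal{C}_j$ is exactly the event $\{g_i^{\rm bin}({\bf x};j)>g_i^{\rm bin}({\bf x};j')\ \text{for all}\ j'\neq j\}$. Introducing the linear map $L_j\in\mathbb{R}^{(m-1)\times m}$ whose rows are $(e_j^{[m]}-e_{j'}^{[m]})^\trans$, $j'\neq j$, this event reads $L_j g_i({\bf x})>0$ componentwise. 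Since affine images of Gaussians are Gaussian, $L_j g_i({\bf x})\sim\mathcal{N}(L_j\mathcal{m}_{ij},\,L_jC_{ij}L_j^\trans)$, whence
\begin{equation*}
    P_i^{(j)}({\bf x})=\mathbb{P}\left(L_j g_i({\bf x})>0\right)+o(1),
\end{equation*}
which is precisely the stated $(m-1)$-fold orthant integral with $\mu(j)=L_j\mathcal{m}_{ij}$ and $C(j)=L_jC_{ij}L_j^\trans$; the $o(1)$ follows from the almost-sure convergence in Theorem~\ref{th:main} and the continuity of the Gaussian orthant integral in its mean and covariance parameters.

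Finally I would identify these push-forward statistics with the closed forms announced in the proposition. Substituting $\mathcal{m}_{ij}=\mathcal{Y}-\mathcal{D}_{{\bm\delta}^{[mk]}}^{-1/2}\Gamma\mathcal{D}_{{\bm\delta}^{[mk]}}^{1/2}\mathring{\mathcal{Y}}$ and $C_{ij}=\mathring{\mathcal{Y}}^\trans\mathcal{D}_{{\bm\delta}^{[mk]}}^{1/2}\Gamma\mathcal{V}_{ij}\Gamma\mathcal{D}_{{\bm\delta}^{[mk]}}^{1/2}\mathring{\mathcal{Y}}$ from Theorem~\ref{th:main}, the action of $L_j$ turns the columns of $\mathcal{Y}$ (resp.\ $\mathring{\mathcal{Y}}$) into the score differences, matching the matrix $\mathcal{Y}_{-j}$ of the statement. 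The one delicate point — and the step I expect to be the main obstacle — is reconciling the \emph{centered} scores $\mathring{\mathcal{Y}}$ appearing in Theorem~\ref{th:main} with the \emph{uncentered} differences $\mathcal{y}^{\rm bin}(j)-\mathcal{y}^{\rm bin}(j')$ used in $\mathcal{Y}_{-j}$, together with the exact placement of the $\mathcal{D}_{{\bm\delta}^{[mk]}}^{\pm1/2}$ factors. This I would resolve using the symmetry of $\Gamma$ and the shift-invariance of Remark~\ref{rem:on_centering}: a \emph{difference} of two classifier scores is insensitive to the task-wise constant that distinguishes $\mathring{\mathcal{y}}^{\rm bin}(\ell)$ from $\mathcal{y}^{\rm bin}(\ell)$ — such a constant only shifts the corresponding Gaussian mean, hence cancels in any difference of means, and leaves every covariance unchanged — so the centered and uncentered differences produce identical $\mu(j)$ and $C(j)$. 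Carrying out this bookkeeping, while routine in spirit, is the part requiring the most care.
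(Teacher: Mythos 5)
Your overall route is the same as the paper's: the paper likewise reduces correct classification to the event $g_i^{\rm bin}({\bf x};j)-g_i^{\rm bin}({\bf x};j')\geq 0$ for all $j'\neq j$, writes the vector of score differences as a single linear functional of the data, namely $\frac{1}{kp}\mathcal{Y}_{-j}J^\trans QZ^\trans A(e_i^{[k]}\otimes\mathring{\bf x})$, and then invokes Theorem~\ref{th:main} \emph{with $\mathcal{Y}$ replaced by $\mathcal{Y}_{-j}$} to obtain the $(m-1)$-dimensional Gaussian limit and hence the orthant integral. Your variant -- apply Theorem~\ref{th:main} to the stacked score matrix $[\mathcal{y}^{\rm bin}(1),\ldots,\mathcal{y}^{\rm bin}(m)]$ and push forward through the difference map $L_j$ -- is equivalent to this, because $\alpha$, $b$, and the limiting statistics of Theorem~\ref{th:main} are all linear in $Y$; so the first two steps of your proposal are sound and match the paper.

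The genuine flaw is in your last step, the centered-versus-uncentered reconciliation. You claim that the task-wise constant distinguishing $\mathring{\mathcal{y}}^{\rm bin}(\ell)$ from $\mathcal{y}^{\rm bin}(\ell)$ ``cancels in any difference of means.'' It does not: that constant equals $\sum_{j''}\frac{n_{ij''}}{n_i}\mathcal{y}^{\rm bin}_{ij''}(\ell)$ and therefore \emph{depends on the classifier index} $\ell$, so in the difference of means between classifiers $j$ and $j'$ the two shifts are different and leave a nonzero residual; moreover, Remark~\ref{rem:on_centering} only guarantees invariance of $\alpha$ (hence of each classifier's own accuracy), not of cross-classifier score comparisons, since the bias $b$ does absorb the shift. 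The reconciliation that actually works -- and the one implicit in the paper's proof -- is different: task-wise centering is itself a \emph{linear} map, so $\mathring{\mathcal{y}}^{\rm bin}(j)-\mathring{\mathcal{y}}^{\rm bin}(j')$ is exactly the centered version of $w=\mathcal{y}^{\rm bin}(j)-\mathcal{y}^{\rm bin}(j')$; applying Theorem~\ref{th:main} to the input score vector $w$ then delivers directly the mean $w-\mathcal{D}_{\bm\delta}^{-\frac12}\Gamma\mathcal{D}_{\bm\delta}^{\frac12}\mathring{w}$ and covariance $\mathring{w}^\trans\mathcal{D}_{\bm\delta}^{\frac12}\Gamma\mathcal{V}_{ij}\Gamma\mathcal{D}_{\bm\delta}^{\frac12}\mathring{w}$, i.e., the stated formulas but with the $\Gamma$-term and the covariance built from \emph{centered} differences (this is indeed how the paper's own proof defines $\mathcal{Y}_{-j}$, at odds with the uncentered definition appearing in the statement). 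Note that your push-forward $L_j\mathcal{m}_{ij}$ and $L_jC_{ij}L_j^\trans$ already produces exactly these centered-difference expressions, so no shift-cancellation argument is needed -- and none is available to convert them into the uncentered ones.
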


Figure~\ref{fig:one-vs_all_explain}, succinctly introduced above, illustrates the successive improvements of the proposed algorithms. Specifically, it shows the gains of the centering-scaling operation on the input and output scores (2nd column) and of the optimization of the input scores (3rd column) when compared with the standard approach (1st column). Here synthetic data arising from a Gaussian mixture model are considered in a two-task ($k=2$) and three-class ($m=3$) setting in which $x_{1l}^{(j)}\sim\mathcal{N}(\mu_{1j},I_p)$ and $x_{2l}^{(j)}\sim\mathcal{N}(\mu_{2j},I_p)$, where $\mu_{2j}=\beta\mu_{1j}+\sqrt{1-\beta^2}\mu_{1j}^{\perp}$, with $\mu_{1j}=2 e_{j}^{[p]}$ and $\mu_{1j}^{\perp}=e_{p-j}^{[p]}$. Here $p=200$, $\beta=0.2$, $[n_{11},n_{12},n_{13},n_{21},n_{22},n_{23}]=[393, 309,394,20,180,480]$ and the optimization framework used for input score (label) $\mathcal y^{\rm bin}$ optimization is a standard interior point method \citep{boyd2004convex}.\footnote{Here we use the \texttt{fmincon} function implemented in Matlab.}
\begin{figure}
    \centering
    \begin{tabular}{lll}
 \hspace{-1cm}
  \begin{tikzpicture}[scale=0.55]
      \begin{axis}[ymin=0,title={\huge\texttt{Classical}},domain=-1.5:1,samples=50,smooth]
          \addplot[red,line width=2pt] {gauss(-0.7558,0.2274)};
          \addplot[blue,line width=2pt] {gauss(-0.9275,0.2249)};
          \addplot[green,line width=2pt] {gauss(-0.9540,0.2247)};
          \draw [dashed,line width=2pt] (0,0) -- (0,2);
          \legend{$g_i(x)|x\in\mathcal{C}_1$,$g_i(x)|x\in\mathcal{C}_2$,$g_i(x)|x\in\mathcal{C}_3$}
      \end{axis}
      
  \end{tikzpicture}
 &
 \begin{tikzpicture}[scale=0.55]
     \begin{axis}[ymin=0,title={\huge\texttt{Scaled scores}},domain=-4:4,samples=50,smooth]
         \addplot[red,line width=2pt] {gauss(0,1)};
          \addplot[blue,line width=2pt] {gauss(-0.7552,0.9898)};
          \addplot[green,line width=2pt] {gauss(-0.8719,0.9886)};
          \draw [dashed,line width=2pt] (0,0) -- (0,2);
          \draw [dashed] (-0.7552,0) -- (-0.7552,2);
           \draw[<->] (-0.7552,0.42)--(0,0.42);
     \end{axis}
 \end{tikzpicture}
 &
 \begin{tikzpicture}[scale=0.55]
     \begin{axis}[ymin=0, title={\huge\texttt{Optimized labels}},domain=-4:4,samples=50,smooth]
         \addplot[red,line width=2pt] {gauss(0,1)};
         \addplot[blue,line width=2pt] {gauss(-1.37,0.9575)};
         \addplot[green,line width=2pt] {gauss(-1.3648,0.9551)};
         \draw [dashed] (-1.3648,0) -- (-1.3648,2);
         \draw [dashed,line width=2pt] (0,0) -- (0,2);
          \draw[<->] (-1.3648,0.43)--(0,0.43);
     \end{axis}
 \end{tikzpicture}
  \\
  \hspace{-1cm}
  \begin{tikzpicture}[scale=0.55]
     \begin{axis}[ymin=0,domain=-1.5:1,samples=50,smooth]
          \addplot[blue,line width=2pt] {gauss(0.1389,0.3835)};
          \addplot[red,line width=2pt] {gauss(-0.3546,0.3831)};
          \addplot[green,line width=2pt] {gauss(-0.7040,0.3824)};
          \draw [dashed,line width=2pt] (0,0) -- (0,2);
     \end{axis}
 \end{tikzpicture}
 &
  \begin{tikzpicture}[scale=0.55]
      \begin{axis}[ymin=0,domain=-4:4,samples=50,smooth]
          \addplot[blue,line width=2pt] {gauss(0,1)};
          \addplot[red,line width=2pt] {gauss(-1.2879,1.0008)};
          \addplot[green,line width=2pt] {gauss(-2.1998,0.9980)};
          \draw [dashed,line width=2pt] (0,0) -- (0,2);
      \end{axis}
  \end{tikzpicture}
  &
  \begin{tikzpicture}[scale=0.55]
      \begin{axis}[ymin=0,domain=-4:4,samples=50,smooth]
          \addplot[blue,line width=2pt] {gauss(0,1)};
          \addplot[red,line width=2pt] {gauss(-1.3789,1.0004)};
          \addplot[green,line width=2pt] {gauss(-2.4824,0.9975)};
          \draw [dashed,line width=2pt] (0,0) -- (0,2);
      \end{axis}
      \end{tikzpicture}
    \\
    \hspace{-1cm}
    \begin{tikzpicture}[scale=0.55]
     \begin{axis}[ymin=0,domain=-1.5:1,samples=50,smooth]
          \addplot[green,line width=2pt] {gauss(0.6580,0.3995)};
          \addplot[red,line width=2pt] {gauss(0.1103,0.4009)};
          \addplot[blue,line width=2pt] {gauss(-0.2114,0.4002)};
          \draw [dashed,line width=2pt] (0,0) -- (0,2);
     \end{axis}
 \end{tikzpicture}
 &
 \begin{tikzpicture}[scale=0.55]
     \begin{axis}[ymin=0,domain=-4:4,samples=50,smooth]
          \addplot[green,line width=2pt] {gauss(0,1)};
          \addplot[red,line width=2pt] {gauss(-1.3708,1.0035)};
          \addplot[blue,line width=2pt] {gauss(-2.1761,1.0018)};
          \draw [dashed] (0,0) -- (0,2);
     \end{axis}
 \end{tikzpicture}
 &
 \begin{tikzpicture}[scale=0.55]
     \begin{axis}[ymin=0,domain=-4:4,samples=50,smooth]
          \addplot[green,line width=2pt] {gauss(0,1)};
          \addplot[red,line width=2pt] {gauss(-1.3611,1.0027)};
          \addplot[blue,line width=2pt] {gauss(-2.5043,1.0023)};
          \draw [dashed,line width=2pt] (0,0) -- (0,2);
     \end{axis}
 \end{tikzpicture}
 \end{tabular}
    \caption{Test score distribution in a $2$-task and $3$ classes-per-task setting, using a one-versus-all multi-class classification. Every graph in row~$\ell$ depicts the limiting distributions of $g_i({\bf x};\ell)$ for ${\bf x}$ in different classes. Column~$1$ (Classical) is the standard implementation of the one-versus-all approach. Column~$2$ (Scaled scores) is the output for centered and scaled $g_i({\bf x};\ell)$ for ${\bf x}\in\mathcal C_\ell$. Column~$3$ (Optimized labels) is the same as Column~$2$ but with optimized input scores (labels) ${{}\mathcal y^{\rm bin}}^{\star}(\ell)$. Under ``classical'' approach, data from $\mathcal C_1$ (red curves) will often be misclassified as $\mathcal C_2$. With ``optimized labels'', the discrimination of scores for $\bf x$ in either class $\mathcal C_2$ or $\mathcal C_3$ is improved (blue curve in 2nd row further away from blue curve in 1st row; and similarly for green curve in 3rd versus 1st row).}
    \label{fig:one-vs_all_explain}
\end{figure}
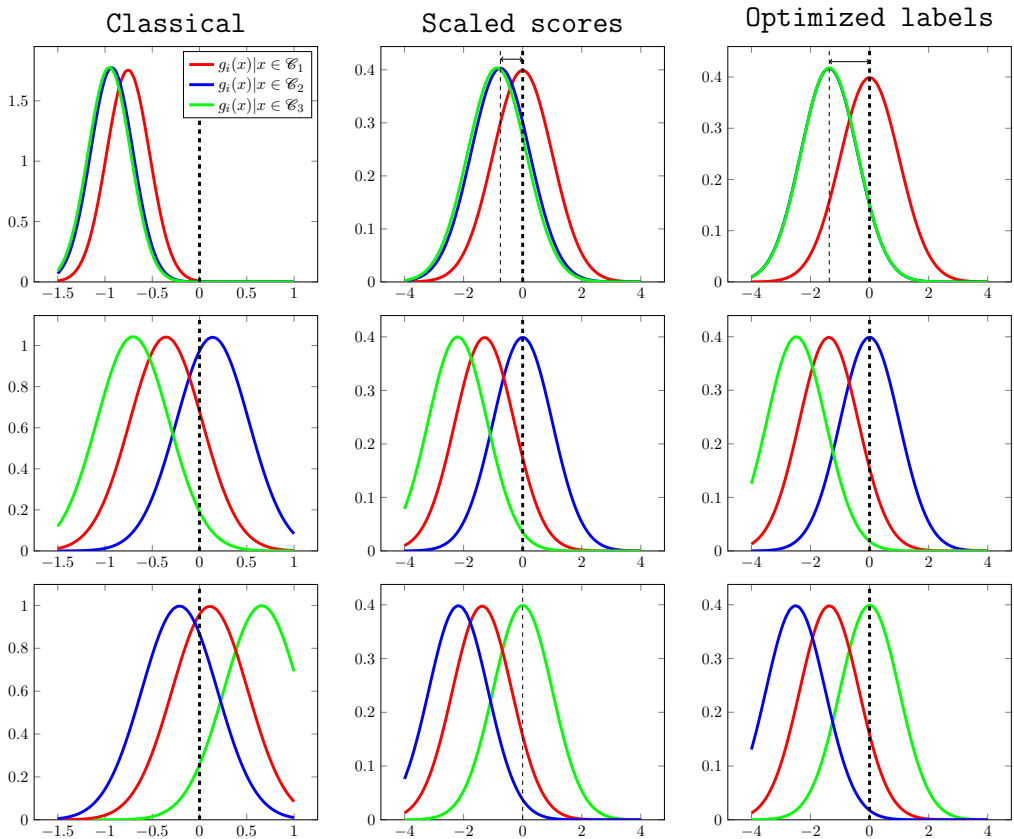
\subsection{One-versus-one multi-class classification}
\label{sec:one-versus-one}

For a given Task $i$, the one-versus-one multi-class method trains $\frac 12 m(m-1)$ binary classifiers for each pair $j\neq j'\in\{1,\ldots,m\}$. As intensively discussed in the previous section, as well as in Section~\ref{sec:theoretical_simple} and Remark~\ref{rem:optimization_general}, each resulting binary classifier $g_i^{\rm bin}({\bf x};j,j')$ can be optimized by choosing optimal input labels $\mathcal y^{\rm bin}(j,j')$. This leads to Algorithm \ref{alg:multi class_one_vs_one} described below.

\begin{algorithm}
 \caption{Proposed one-versus-one multi-task learning algorithm.}
 \label{alg:multi class_one_vs_one}
 \begin{algorithmic}
     \STATE {{\bfseries Input:} Training samples $X=[X_1,\ldots,X_k]$ with $X_i=[X_i^{(1)},\ldots,X_i^{(m)}]$, $X_i^{(j)}\in\mathbb{R}^{p\times n_{ij}}$ and test data ${\bf x}$.}
     \STATE {{\bfseries Output:} Estimated class $\hat \ell\in\{1,\ldots,m\}$ of $\bf x$ for Task~$i$.}
     \STATE {{\bfseries Center and normalize} 
    data per task: for all $i'\in\{1,\ldots,k\}$,
        \begin{itemize}
            \item $\mathring{X}_{i'} \leftarrow X_{i'} \left( I_{n_{i'}} - \frac1{n_{i'}}\mathbb{1}_{n_{i'}}\mathbb{1}_{n_{i'}}^\trans \right)$
            \item $\mathring{X}_{i'} \leftarrow \mathring{X}_{i'}/\frac1{n_{i'}p}\tr (\mathring{X}_{i'}\mathring{X}_{i'}^\trans)$.
        \end{itemize}
        }
     \FOR {$j=1$ {\bfseries to} $m$}
        \FOR {$j'\in\{1,\ldots,m\}\setminus \{j'\}$}
            \STATE {{\bfseries Estimate:} $\mathbb{M}^\trans\bar{\tilde{Q}}_0\mathbb{M}$ and $\mathcal{V}_{ij}$ according to Remark~\ref{rem:estimation_general}.
            }
            \STATE {{\bfseries Create} optimal scores $\mathcal y^{{\rm bin}\star}(j',j)$ according to Remark~\ref{rem:optimization_general}.}
             \STATE {\bfseries Compute} classification scores according to \eqref{eq:score_class} and deduce the predicted class $c(j,j')=j$ or $c(j,j')=j'$ based on the decision rule in \eqref{eq:am_test}.
         \ENDFOR
     \ENDFOR
     \STATE {{\bfseries Output: }} $\hat j=\mode\limits_{j',j\in\{1,\ldots,m\}} \{c(j,j')\}$.\footnotemark
 \end{algorithmic}
 \end{algorithm}
  \footnotetext{The mode of a set of indices is defined as the most frequent value. When multiple indices occur equally frequently, the smallest of those indices is considered by convention.}
In order to derive the asymptotic correct classification of class $\ell$ based on Algorithm~\ref{alg:multi class_one_vs_one}, it is necessary to enumerate all scenarios which lead to the prediction of the class $\ell$. Although this could be done in theory, the combinatorics, already for three classes, are cumbersome and not worth developing here. For the specific one-versus-one setting, we therefore do not provide a theoretical performance analysis.

\subsection{One-hot encoding approach}
\label{sec:one-hot}
For a given Task~$i$ in a one-hot encoding approach, using the canonical vector encoding for each class (i.e., $\mathcal Y_{ij}=e_j^{[m]}$ encodes all training input data $x_{il}^{(j)}$ of class $\mathcal C_j$), the class allocated to an unknown test sample $\bf x$ is the index of the output vector $g_i({\bf x})\in\mathbb R^m$ with maximum value. 

We disrupt here from this approach by explicitly not imposing a one-hot encoding for $\mathcal Y_{ij}$. Instead we consider a generic encoding $\mathcal{Y}\in\mathbb{R}^{km\times m}$, which will be optimized in such a way to maximize the classification accuracy.
\begin{proposition}
\label{prop:classification_accuracy_one_hot}
Under a ``one-hot encoding'' scheme with generic $\mathcal Y$, the probability of correct classification $P_i^{(j)}({\bf x})$ for Task~$i$ of a test data ${\bf x}\in \mathcal{C}_j$ is given by
\begin{equation*}
    P_i^{(j)}({\bf x})=\underbrace{\idotsint_{0}^{\infty}}_{m-1}\frac{1}{\sqrt{(2\pi)^{m-1}|C(j)|}}\exp\left(-\frac 12 (x-\mu(j))^{\trans}C(j)^{-1}(x-\mu(j))\right)dx,
\end{equation*}
where $\mu(j)=\mathcal{E}_j\mathcal{Y}^{T}\left(I_{mk}-\mathcal{D}_{\bm\delta}^{\frac 12}\Gamma\mathcal{D}_{\bm\delta}^{\frac 12}\right)e_{(m(i-1)+j)}^{[km]}\in\mathbb{R}^{m-1}$ and
$C(j)=\mathcal{E}_j\mathcal{Y}^\trans\mathcal{D}_{\bm\delta}^{\frac 12}\Gamma\mathcal{V}_{ij}\Gamma\mathcal{D}_{\bm\delta}^{\frac 12}\mathcal{Y}\mathcal{E}_j^\trans\in\mathbb{R}^{(m-1)\times (m-1)}$
with
$\mathcal{E}_j=\{(e_j^{(m)}-e_{j'}^{(m)})^\trans\}_{j\neq j'}\in\mathbb{R}^{(m-1)\times m}$.
\end{proposition}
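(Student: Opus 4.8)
The plan is to reduce the event ``${\bf x}\in\mathcal C_j$ is correctly classified'' to a Gaussian orthant probability, and then to read off the parameters of the limiting Gaussian directly from Theorem~\ref{th:main}. First I would translate the decision rule: assigning ${\bf x}$ to $\argmax_{\ell}[g_i({\bf x})]_\ell$ means that, for a genuine ${\bf x}\in\mathcal C_j$, correct classification occurs exactly when $[g_i({\bf x})]_j>[g_i({\bf x})]_{j'}$ for every $j'\neq j$. Collecting the $m-1$ pairwise differences into the vector $\mathcal E_j\,g_i({\bf x})\in\RR^{m-1}$, where $\mathcal E_j=\{(e_j^{(m)}-e_{j'}^{(m)})^\trans\}_{j'\neq j}$, this reads $\mathcal E_j\,g_i({\bf x})\in(0,\infty)^{m-1}$. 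Hence $P_i^{(j)}({\bf x})=\mathbb P\big(\mathcal E_j\,g_i({\bf x})\in(0,\infty)^{m-1}\big)$, and the entire task is to evaluate this orthant probability asymptotically. Note that this step uses only the max-output rule and is valid for the generic (non one-hot) encoding $\mathcal Y$.

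Next I would invoke Theorem~\ref{th:main}: as $p,n\to\infty$, $g_i({\bf x})-G_{ij}\to0$ in law with $G_{ij}\sim\mathcal N(\mathcal m_{ij},C_{ij})$. Since $\mathcal E_j$ is a fixed deterministic linear map, the image $\mathcal E_j\,g_i({\bf x})$ converges in law to $\mathcal N(\mu(j),C(j))$ with $\mu(j)=\mathcal E_j\mathcal m_{ij}$ and $C(j)=\mathcal E_j C_{ij}\mathcal E_j^\trans$. Because $C(j)$ is (generically) positive definite, the boundary of the orthant $(0,\infty)^{m-1}$ is a finite union of coordinate hyperplanes and therefore carries zero mass under the limiting law; the portmanteau theorem then lets me pass the probability of this continuity set to the limit, giving $P_i^{(j)}({\bf x})=\mathbb P\big(\mathcal N(\mu(j),C(j))\in(0,\infty)^{m-1}\big)+o(1)$, which is exactly the $(m-1)$-fold integral of the stated Gaussian density over the positive orthant.

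It then remains to make $\mu(j)$ and $C(j)$ explicit by substituting the closed forms of Theorem~\ref{th:main}. Writing the mean in matrix form as $\mathcal m=\mathcal Y-\mathcal D_{\bm\delta}^{-\frac12}\Gamma\mathcal D_{\bm\delta}^{\frac12}\mathring{\mathcal Y}$ (equivalently ${\bm{\mathcal m}}=\bm{\mathcal Y}-\Gamma\mathring{\bm{\mathcal Y}}$) and extracting its $(m(i-1)+j)$-th row yields $\mathcal m_{ij}$, while $C_{ij}=\mathring{\mathcal Y}^\trans\mathcal D_{\bm\delta}^{\frac12}\Gamma\mathcal V_{ij}\Gamma\mathcal D_{\bm\delta}^{\frac12}\mathring{\mathcal Y}$ comes directly from $C_{ij}=\mathring{\bm{\mathcal Y}}^\trans\Gamma\mathcal V_{ij}\Gamma\mathring{\bm{\mathcal Y}}$. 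Pre- and post-multiplying by $\mathcal E_j$ and $\mathcal E_j^\trans$ then produces $\mu(j)$ and $C(j)$; the centered scores $\mathring{\mathcal Y}$ are exchanged for the raw encoding $\mathcal Y$ via the per-task shift-invariance of Remark~\ref{rem:on_centering} (a task-constant shift of the scores leaves $g_i({\bf x})$, hence the orthant event, untouched), and the residual diagonal factors $\mathcal D_{\bm\delta}^{\pm\frac12}$ are absorbed using the invariance of the orthant probability $\mathbb P(\mathcal N(\mu,C)\in(0,\infty)^{m-1})$ under any coordinatewise positive rescaling $(\mu,C)\mapsto(D\mu,DCD)$. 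This is the same bookkeeping that underlies the companion Proposition~\ref{prop:classification_accuracy_one_vs_all}.

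The conceptual steps---the orthant reduction and the portmanteau passage---are routine once Theorem~\ref{th:main} is granted; the genuine effort, and hence the main obstacle, lies in this last step: faithfully tracking the normalizations $\mathcal D_{\bm\delta}^{\pm\frac12}$, the centering map relating $\mathring{\mathcal Y}$ to $\mathcal Y$, and the action of $\mathcal E_j$ so that the expressions collapse to \emph{exactly} the claimed $\mu(j)$ and $C(j)$. A secondary point worth one line is the non-degeneracy of $C(j)$: should $C(j)$ be singular, the portmanteau argument must instead be carried out on the subspace where the limiting law is non-degenerate (or through a vanishing-regularization continuity argument), which does not alter the final formula.
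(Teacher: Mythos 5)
Your proposal is correct and follows essentially the same route as the paper's own proof: reduce correct classification of ${\bf x}\in\mathcal C_j$ to the positive-orthant event for the difference vector $\mathcal{E}_j\,g_i({\bf x})$, obtain its Gaussian limit from Theorem~\ref{th:main}, and substitute the closed-form mean and covariance. The only cosmetic difference is that the paper applies Theorem~\ref{th:main} with the label matrix $\mathring{\mathcal Y}$ replaced by the transformed matrix $\mathring{\mathcal Y}\mathcal{E}_j^\trans$, whereas you push the original Gaussian limit through the fixed linear map $\mathcal{E}_j$ (adding a portmanteau/non-degeneracy justification the paper leaves implicit) — the two coincide because the limiting mean and covariance are respectively linear and bilinear in the scores.
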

A natural objective is to set $\mathcal Y$ so to maximize the average correct classification accuracy $\frac 1m \sum_{j=1}^m P_i^{(j)}({\bf x})$ (under assumed uniform prior on $\bf x$). This form again is not convex in $\mathcal Y$ but may be approximated by gradient descent starting from the one-hot encoding solution, as described in Algorithm~\ref{alg:multi class_one_hot}.
\begin{algorithm}
 \caption{Proposed ``one-hot encoding'' multi-task learning algorithm.}
 \label{alg:multi class_one_hot}
 \begin{algorithmic}
     \STATE {{\bfseries Input:} Training samples $X=[X_1,\ldots,X_k]$ with $X_i=[X_i^{(1)},\ldots,X_i^{(m)}]$, $X_i^{j}\in\mathbb{R}^{p\times n_{ij}}$ and test data ${\bf x}$.}
     \STATE {{\bfseries Output:} Estimated class $\hat \ell\in\{1,\ldots,m\}$ of $\bf x$ for target Task~$i$.}
     \STATE {{\bfseries Center and normalize} 
    data per task: for all $i'\in\{1,\ldots,k\}$,
        \begin{itemize}
            \item $\mathring{X}_{i'} \leftarrow X_{i'} \left( I_{n_{i'}} - \frac1{n_{i'}}\mathbb{1}_{n_{i'}}\mathbb{1}_{n_{i'}}^\trans \right)$
            \item $\mathring{X}_{i'} \leftarrow \mathring{X}_{i'}/\frac1{n_{i'}p}\tr (\mathring{X}_{i'}\mathring{X}_{i'}^\trans)$.
        \end{itemize}
        }
    \STATE {{\bfseries Estimate} Matrix $\mathbb{ M}^\trans\tilde{Q}_0\mathbb{M}$ and $\mathcal{V}_{ij}$ according to Remark~\ref{rem:estimation_general}.
    \STATE {\bfseries Compute} the theoretical score $\mu(j)$ and covariance $C(j)$ and derive the asymptotic classification accuracy $P_i({\bf x})=\frac 1m \sum\limits_{j=1}^m P_i^{(j)}({\bf x})$.    
    \STATE {\bfseries Create} optimal scores $\mathcal Y^\star=\argmax_{\mathcal Y} P_i(\bf x)$.
     
      }
     \STATE {\bfseries Compute} classification scores $g_i({\bf x})$ according to \eqref{eq:score_class}.
     \STATE {{\bfseries Output: }} $\hat \ell=\argmax_{\ell\in\{1,\ldots,m\}} g_i({\bf x};\ell)$.
 \end{algorithmic}
 \end{algorithm}

\section{Experiments}
\label{experiments}

This section has a double objective. The first part (Section~\ref{sec:exp_binary}) devises numerical experiments on binary classification settings to corroborate the theoretical analyses and conclusions drawn in this previous section. Here, the target is threefold: (i) empirically illustrate the effects of the bias in the threshold decision and in the label optimization scheme discussed in Section~\ref{sec:theoretical_simple}, (ii) discuss the impact of numerous tasks ($k>2$) in the binary class setting, thereby emphasizing the effects of negative transfer and its correction through input score (label) optimization, and (iii) exemplify the relevance of our theoretical findings to a specific application to hypothesis testing in a multi-task setting.

In a second part (Section~\ref{sec:exp_multiclass}), experiments on both synthetic and real data for multi-class classification are realized, which demonstrate, even for real data: (i) the extreme accuracy of the theoretical performance predictions of Propositions~\ref{prop:classification_accuracy_one_vs_all}--\ref{prop:classification_accuracy_one_hot} against empirical data and (ii) the large performance gains induced by the various improvements introduced at length in Section~\ref{application}.

\subsection{Experiments on binary classification}
\label{sec:exp_binary}

\subsubsection{Effect of input score (label) choice}

In the present experiment, the effects of the bias in the decision threshold (in general not centered on zero) and of the input score (label) optimization are demonstrated on both synthetic data and real data. 

Specifically, MTL-LSSVM is first applied to the following two-task ($k=2$) and two-class ($m=2$) setting: for Task~$1$, $x^{(j)}_{1l}\sim\mathcal{N}((-1)^j\mu_1,I_p)$ (evenly distributed in both classes) and for Task~$2$, $x_{2l}^{(j)}\sim\mathcal{N}((-1)^j\mu_2,I_p)$ (evenly distributed in both classes), where $\mu_2=\beta\mu_1+\sqrt{1-\beta^2}\mu_{1}^{\perp}$ and $\mu_{1}^{\perp}$ is any vector orthogonal to $\mu_{1}$ and $\beta\in[0,1]$. This setting allows us to tune, through $\beta$, the similarity between tasks. For four different values of $\beta$, Figure~\ref{fig:binary_label} depicts the distribution of the binary output scores $g_i^{\rm bin}({\bf x})$ both for the classical MTL-LSSVM (top displays) and for our proposed random matrix improved scheme, with optimized input labels (bottom display). 

As a first remark, note that both theoretical prediction and empirical outputs closely fit for all values of $\beta$, thereby corroborating our theoretical findings. 
In practical terms, the figure supports (i) the importance to estimate the threshold decision which is non-trivial (not always close to zero) and (ii) the relevance of an appropriate choice of the input labels to improve the discrimination performance between both classes, especially when the two tasks are not quite related. In effect, the entries of ${{}\mathcal y^{\rm bin}}^\star$ naturally drop to zero for all unrelated tasks and classes, thereby discarding the undesired use of the latter; the classical binary input labels instead inappropriately exploit these data and induce a negative learning effect, sometimes to such an extent to completely switch the final decision (as here when $\beta=-1$).

 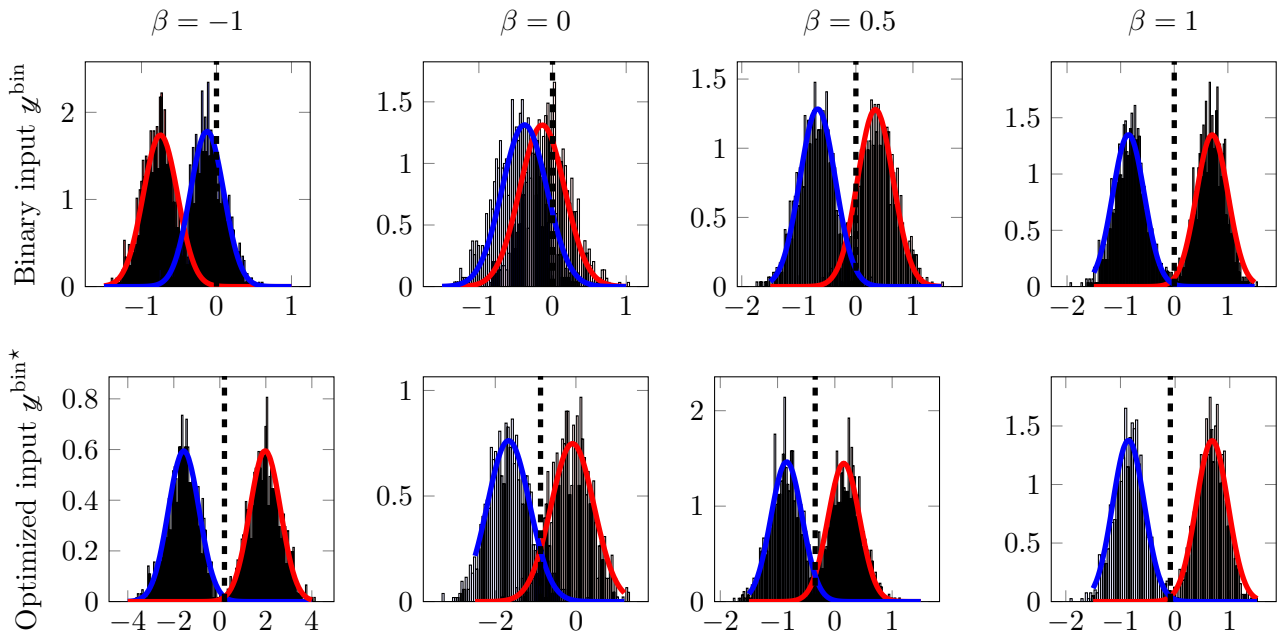
\begin{figure}
 \begin{tabular}{llll}
 \hspace{-1.5cm}
  \begin{tikzpicture}
      \begin{axis}[title={\texttt{$\beta=-1$}},domain=-1.5:1,samples=50,smooth,ymin=0,ylabel={ Binary input $\mathcal y^{\rm bin}$},width=.3\linewidth,height=.3\linewidth]
          \addplot [hist={density,bins=70},fill=red, fill opacity=0.6] table [y index=0] {Fig-text/score1_no_beta11.txt};
          \addplot [hist={density,bins=70},fill=blue, fill opacity=0.6] table [y index=0] {Fig-text/score2_no_beta11.txt};
          \addplot[red,line width=2pt] {gauss(-0.7480,0.2294)};
          \addplot[blue,line width=2pt] {gauss(-0.1260,0.2240)};
          \draw [dashed,line width=2pt] (0,0) -- (0,3);
      \end{axis}
  \end{tikzpicture}
 &
 \begin{tikzpicture}
     \begin{axis}[title={\texttt{$\beta=0$}},domain=-1.5:1,samples=50,ymin=0,smooth,width=.3\linewidth,height=.3\linewidth]
         \addplot [hist={density,bins=70},fill=red, fill opacity=0.1] table [y index=0] {Fig-text/score1_no_beta0.txt};
         \addplot [hist={density,bins=70},fill=blue, fill opacity=0.1] table [y index=0] {Fig-text/score2_no_beta0.txt};
         \addplot[red,line width=2pt] {gauss(-0.1398,0.3042)};
         \addplot[blue,line width=2pt] {gauss(-0.3801,0.3038)};
         \draw [dashed,line width=2pt] (0,0) -- (0,2);
     \end{axis}
 \end{tikzpicture}
 &
 \begin{tikzpicture}
     \begin{axis}[ymin=0, title={\texttt{$\beta=0.5$}},domain=-1.5:1.5,ymin=0,samples=50,smooth,width=.3\linewidth,height=.3\linewidth]
         \addplot [hist={density,bins=70},fill=red, fill opacity=0.1] table [y index=0] {Fig-text/score1_no_beta5.txt};
         \addplot [hist={density,bins=70},fill=blue, fill opacity=0.1] table [y index=0] {Fig-text/score2_no_beta5.txt};
         \addplot[red,line width=2pt] {gauss(0.3416,0.3117)};
         \addplot[blue,line width=2pt] {gauss(-0.6708,0.3107)};
         \draw [dashed,line width=2pt] (0,0) -- (0,2);
     \end{axis}
 \end{tikzpicture}
 &
 \begin{tikzpicture}
     \begin{axis}[ymin=0, title={\texttt{$\beta=1$}},domain=-1.5:1.5,samples=50,ymin=0,smooth,width=.3\linewidth,height=.3\linewidth]
          \addplot [hist={density,bins=70},fill=red, fill opacity=0.1] table [y index=0] {Fig-text/score1_no_beta1.txt};
          \addplot [hist={density,bins=70},fill=blue, fill opacity=0.1] table [y index=0] {Fig-text/score2_no_beta1.txt};
          \addplot[red,line width=2pt] {gauss(0.7077,0.2961)};
          \addplot[blue,line width=2pt] {gauss(-0.8538,0.2957)};
          \draw [dashed,line width=2pt] (0,0) -- (0,2);
     \end{axis}
 \end{tikzpicture}
  \\
  \hspace{-1.5cm}
  \begin{tikzpicture}
     \begin{axis}[domain=-4:4,samples=50,ymin=0,smooth,ylabel={ Optimized input ${{}\mathcal y^{{\rm bin}}}^\star$},width=.3\linewidth,height=.3\linewidth]
          \addplot [hist={density,bins=70},fill=blue, fill opacity=0.1] table [y index=0] {Fig-text/score1_o_beta11.txt};
          \addplot [hist={density,bins=70},fill=red, fill opacity=0.1] table [y index=0] {Fig-text/score2_o_beta11.txt};
          \addplot[blue,line width=2pt] {gauss(-1.5683,0.6713)};
          \addplot[red,line width=2pt] {gauss(1.9722,0.6703)};
          \draw [dashed,line width=2pt] (0.2019,0) -- (0.2019,2);
     \end{axis}
 \end{tikzpicture}
 &
  \begin{tikzpicture}
      \begin{axis}[domain=-2.5:1.2,samples=50,ymin=0,smooth,width=.3\linewidth,height=.3\linewidth]
          \addplot [hist={density,bins=70},fill=red, fill opacity=0.1] table [y index=0] {Fig-text/score1_o_beta0.txt};
          \addplot [hist={density,bins=70},fill=blue, fill opacity=0.1] table [y index=0] {Fig-text/score2_o_beta0.txt};
          \addplot[red,line width=2pt] {gauss(-0.0847,0.5331)};
          \addplot[blue,line width=2pt] {gauss(-1.6699,0.5234)};
          \draw [dashed,line width=2pt] (-0.8773,0) -- (-0.8773,2);
      \end{axis}
  \end{tikzpicture}
  &
  \begin{tikzpicture}
      \begin{axis}[domain=-1.5:1.5,ymin=0,samples=50,smooth,width=.3\linewidth,height=.3\linewidth]
          \addplot [hist={density,bins=70},fill=red, fill opacity=0.1] table [y index=0] {Fig-text/score1_o_beta5.txt};
          \addplot [hist={density,bins=70},fill=blue, fill opacity=0.1] table [y index=0] {Fig-text/score2_o_beta5.txt};
          \addplot[red,line width=2pt] {gauss(0.1562,0.2755)};
          \addplot[blue,line width=2pt] {gauss(-0.8428,0.2726)};
          \draw [dashed,line width=2pt] (-0.3433,0) -- (-0.3433,3);
      \end{axis}
  \end{tikzpicture}
  &
  \begin{tikzpicture}
      \begin{axis}[domain=-1.5:1.5,samples=50,ymin=0,smooth,width=.3\linewidth,height=.3\linewidth]
          \addplot [hist={density,bins=50},fill=red, fill opacity=0.1] table [y index=0] {Fig-text/score1_o_beta1.txt};
          \addplot [hist={density,bins=50},fill=blue, fill opacity=0.1] table [y index=0] {Fig-text/score2_o_beta1.txt};
          \addplot[red,line width=2pt] {gauss(0.6780,0.2902)};
          \addplot[blue,line width=2pt] {gauss(-0.8526,0.2898)};
          \draw [dashed,line width=2pt] (-0.0873,0) -- (-0.0873,2);
      \end{axis}
  \end{tikzpicture}
 \end{tabular}
 \caption{Score distribution for new datum $\bf x$ of Class~$\mathcal C_1$ (red) and Class $\mathcal C_2$ (blue) for Task~$2$ in a $2$-task ($k=2$) and $2$ class-per-task ($m=2$) setting of isotropic Gaussian mixtures for: {\bf (top)} classical MTL-LSSVM with no optimization and a threshold assumed at $\zeta=0$; {\bf (bottom)} proposed optimized MTL-LSSVM with estimated threshold $\zeta$; decision threholds $\zeta$ represented in dashed vertical lines; differently related tasks ($\beta=0$ for orthogonal means, $\beta>0$ for positively correlated means and $\beta<0$ for negatively correlated means), $p=100$, $[c_{11},c_{12},c_{21},c_{22}]=[0.3,0.4,0.1,0.2]$, $\gamma=\mathbb{1}_2$, $\lambda=10$. Histograms drawn from $1\,000$ test samples of each class. The figure clearly depicts the deviation from $0$ of the decision threshold in unbalanced classes and the deleterious effect of ``negative transfer'' when $\beta$ is small; these problems are well handled by the proposed optimized scheme.}
 \label{fig:binary_label}
 \end{figure}
 
 For experiments on real data, the MNIST datasets \citep{deng2012mnist} is considered. Specifically, the setting is that of a binary classification for two tasks, mimicking a transfer learning setting: there, the ``target'' Task~$2$ aims to discriminate Class $\mathcal C_1$ and Class $\mathcal C_2$ respectively composed of images of digit $1$ and digit $4$. The ``source'' Task~$1$ is here used as a support for classification in the target task, and consists of the classification of other pairs of digits: either $(5,9)$, $(9,5)$, $(6,2)$ or $(8,3)$ (we recall that the order of the set of digits $(X,Y)$ is important for the non-optimized MTL-LSSVM since source and target tasks labels are ``paired''; thus $(5,9)$ or $(9,5)$ digits for the source task will bring different results). We compare here again the non-optimized MTL-LSSVM with labels ${\mathcal y}^{\rm bin}=[-1,1,-1,1]^\trans$ to our proposed optimized scheme (as detailed in Remark~\ref{rem:optimization_general}).  For both methods, the optimal theoretical threshold decision $\zeta$ is used (rather than $\zeta=0$ for the non-optimized setup) in order to emphasize the influence of input score (label) optimization.  
 
 Figure~\ref{fig:Mnist_binary} depicts the performance for both methods as a function of the hyperparameter $\lambda$. We recall that, as $\lambda\to 0$, the multi-task scheme becomes equivalent to independent single-task classifiers, while as $\lambda\to\infty$, both source and target tasks are considered together as one task. Figure~\ref{fig:Mnist_binary} raises the stability of optimal input labelling with respect to $\lambda$: this is explained by the fact that ${{}\mathcal y^{\rm bin}}^\star$ is a \emph{function of $\lambda$} and thus adapts to each value of $\lambda$, even if suboptimal. Besides, for appropriate values of $\lambda$, the proposed improved labelling can largely outperform the non-optimized setting, even here on real data.
 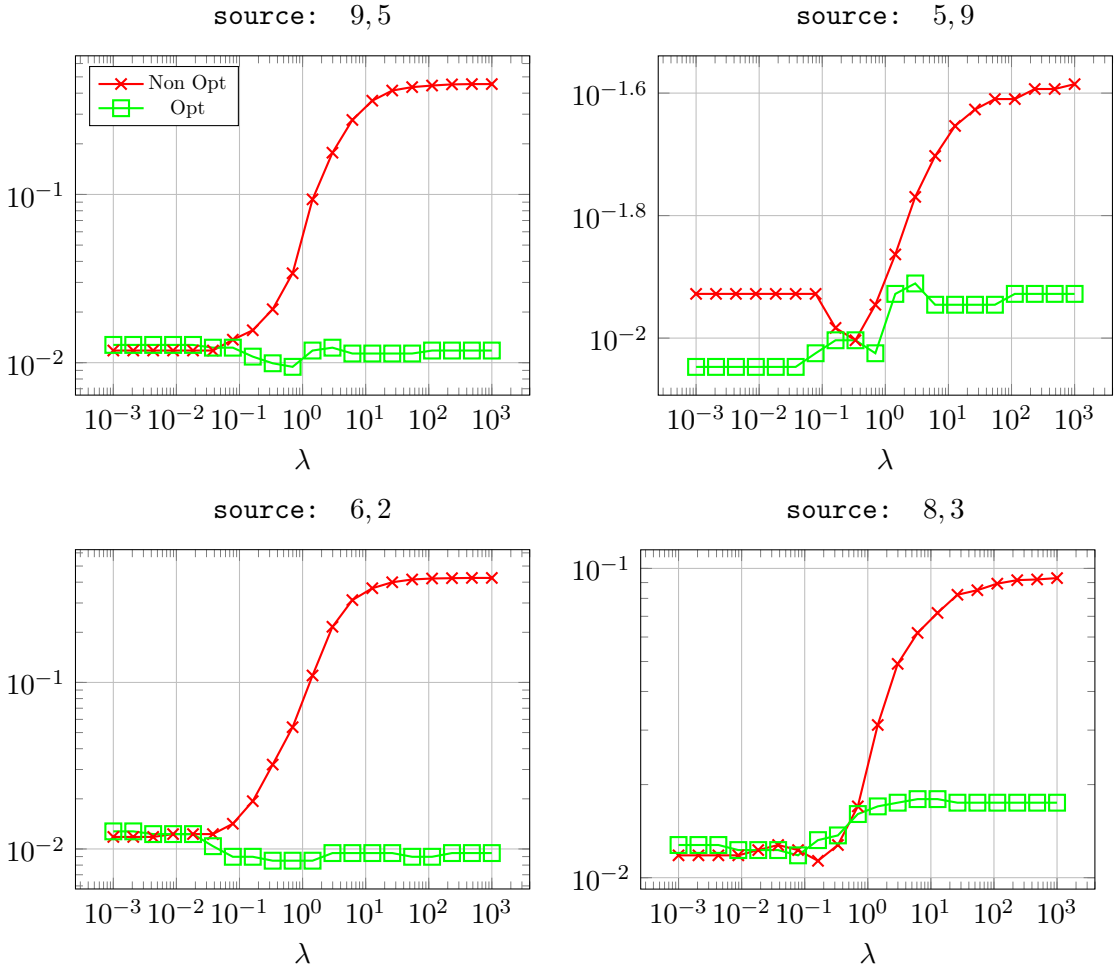
\begin{figure}
 \hspace{-0.5cm}
     \begin{tabular}{ll}
     \begin{tikzpicture}
     \begin{loglogaxis}[title={ \texttt{source: $9,5$}},grid=major,xlabel={ $\lambda$},legend pos=north west,legend style={nodes={scale=0.7, transform shape}},width=.5\linewidth,height=.4\linewidth]
        \addplot[thin,mark=x,mark size=3pt,red,thick]coordinates{
                        (1.000000e-03,1.180916e-02)(2.069138e-03,1.180916e-02)(4.281332e-03,1.180916e-02)(8.858668e-03,1.180916e-02)(1.832981e-02,1.180916e-02)(3.792690e-02,1.180916e-02)(7.847600e-02,1.369863e-02)(1.623777e-01,1.558810e-02)(3.359818e-01,2.078413e-02)(6.951928e-01,3.401039e-02)(1.438450e+00,9.352858e-02)(2.976351e+00,1.771375e-01)(6.158482e+00,2.772792e-01)(1.274275e+01,3.608880e-01)(2.636651e+01,4.147378e-01)(5.455595e+01,4.345772e-01)(1.128838e+02,4.440246e-01)(2.335721e+02,4.511101e-01)(4.832930e+02,4.529995e-01)(1000,4.529995e-01)

};
         \addplot[title=\texttt{source: $5,9$},thin,mark=square,mark size=3pt,green,thick]coordinates{                              
                         (1.000000e-03,1.275390e-02)(2.069138e-03,1.275390e-02)(4.281332e-03,1.275390e-02)(8.858668e-03,1.275390e-02)(1.832981e-02,1.275390e-02)(3.792690e-02,1.228153e-02)(7.847600e-02,1.228153e-02)(1.623777e-01,1.086443e-02)(3.359818e-01,9.919698e-03)(6.951928e-01,9.447331e-03)(1.438450e+00,1.180916e-02)(2.976351e+00,1.228153e-02)(6.158482e+00,1.133680e-02)(1.274275e+01,1.133680e-02)(2.636651e+01,1.133680e-02)(5.455595e+01,1.133680e-02)(1.128838e+02,1.180916e-02)(2.335721e+02,1.180916e-02)(4.832930e+02,1.180916e-02)(1000,1.180916e-02)

}
 ;
 \legend{{Non Opt},{Opt}};
        \end{loglogaxis}
     \end{tikzpicture}
     &
          \begin{tikzpicture}
     \begin{loglogaxis}[title={ \texttt{source: $5,9$}},grid=major,xlabel={ $\lambda$},mark=x,legend pos=south west,legend style={nodes={scale=0.7, transform shape}},width=.5\linewidth,height=.4\linewidth]
         \addplot[thin,mark size=3pt,mark=x,red,thick]coordinates{        (1.000000e-03,1.180916e-02)(2.069138e-03,1.180916e-02)(4.281332e-03,1.180916e-02)(8.858668e-03,1.180916e-02)(1.832981e-02,1.180916e-02)(3.792690e-02,1.180916e-02)(7.847600e-02,1.180916e-02)(1.623777e-01,1.039206e-02)(3.359818e-01,9.919698e-03)(6.951928e-01,1.133680e-02)(1.438450e+00,1.369863e-02)(2.976351e+00,1.700520e-02)(6.158482e+00,1.983940e-02)(1.274275e+01,2.220123e-02)(2.636651e+01,2.361833e-02)(5.455595e+01,2.456306e-02)(1.128838e+02,2.456306e-02)(2.335721e+02,2.550779e-02)(4.832930e+02,2.550779e-02)(1000,2.598016e-02)

};
          \addplot[thin,mark size=3pt,mark=square,green,thick]coordinates{                                                      (1.000000e-03,8.974965e-03)(2.069138e-03,8.974965e-03)(4.281332e-03,8.974965e-03)(8.858668e-03,8.974965e-03)(1.832981e-02,8.974965e-03)(3.792690e-02,8.974965e-03)(7.847600e-02,9.447331e-03)(1.623777e-01,9.919698e-03)(3.359818e-01,9.919698e-03)(6.951928e-01,9.447331e-03)(1.438450e+00,1.180916e-02)(2.976351e+00,1.228153e-02)(6.158482e+00,1.133680e-02)(1.274275e+01,1.133680e-02)(2.636651e+01,1.133680e-02)(5.455595e+01,1.133680e-02)(1.128838e+02,1.180916e-02)(2.335721e+02,1.180916e-02)(4.832930e+02,1.180916e-02)(1000,1.180916e-02)

};
        \end{loglogaxis}
     \end{tikzpicture}
     \\
          \begin{tikzpicture}
     \begin{loglogaxis}[title={ \texttt{source: $6,2$}},grid=major,xlabel={ $\lambda$},mark=x,legend pos=south west,legend style={nodes={scale=0.7, transform shape}},width=.5\linewidth,height=.4\linewidth]
         \addplot[thin,mark size=3pt,mark=x,red,thick]coordinates{            (1.000000e-03,1.180916e-02)(2.069138e-03,1.180916e-02)(4.281332e-03,1.180916e-02)(8.858668e-03,1.228153e-02)(1.832981e-02,1.228153e-02)(3.792690e-02,1.228153e-02)(7.847600e-02,1.417100e-02)(1.623777e-01,1.936703e-02)(3.359818e-01,3.212093e-02)(6.951928e-01,5.384979e-02)(1.438450e+00,1.100614e-01)(2.976351e+00,2.158715e-01)(6.158482e+00,3.122343e-01)(1.274275e+01,3.679735e-01)(2.636651e+01,3.996221e-01)(5.455595e+01,4.152102e-01)(1.128838e+02,4.208786e-01)(2.335721e+02,4.227681e-01)(4.832930e+02,4.241852e-01)(1000,4.241852e-01)

};
          \addplot[thin,mark size=3pt,mark=square,green,thick]coordinates{                                                          (1.000000e-03,1.275390e-02)(2.069138e-03,1.275390e-02)(4.281332e-03,1.228153e-02)(8.858668e-03,1.228153e-02)(1.832981e-02,1.228153e-02)(3.792690e-02,1.039206e-02)(7.847600e-02,8.974965e-03)(1.623777e-01,8.974965e-03)(3.359818e-01,8.502598e-03)(6.951928e-01,8.502598e-03)(1.438450e+00,8.502598e-03)(2.976351e+00,9.447331e-03)(6.158482e+00,9.447331e-03)(1.274275e+01,9.447331e-03)(2.636651e+01,9.447331e-03)(5.455595e+01,8.974965e-03)(1.128838e+02,8.974965e-03)(2.335721e+02,9.447331e-03)(4.832930e+02,9.447331e-03)(1000,9.447331e-03)

}
  ;
        \end{loglogaxis}
     \end{tikzpicture}
     &
     \begin{tikzpicture}
        \begin{loglogaxis}[title={{ \texttt{ source: $8,3$}}},grid=major,mark=x,xlabel={$\lambda$},legend pos=south west,legend style={nodes={scale=0.7, transform shape}},width=.5\linewidth,height=.4\linewidth]
         \addplot[thin,mark=x,mark size=3pt,red,thick]coordinates{            (1.000000e-03,1.180916e-02)(2.069138e-03,1.180916e-02)(4.281332e-03,1.180916e-02)(8.858668e-03,1.180916e-02)(1.832981e-02,1.228153e-02)(3.792690e-02,1.275390e-02)(7.847600e-02,1.228153e-02)(1.623777e-01,1.133680e-02)(3.359818e-01,1.275390e-02)(6.951928e-01,1.700520e-02)(1.438450e+00,3.117619e-02)(2.976351e+00,4.912612e-02)(6.158482e+00,6.188002e-02)(1.274275e+01,7.179972e-02)(2.636651e+01,8.219178e-02)(5.455595e+01,8.502598e-02)(1.128838e+02,8.927728e-02)(2.335721e+02,9.163911e-02)(4.832930e+02,9.211148e-02)(1000,9.305621e-02)

};
          \addplot[thin,mark=square,mark size=3pt,green,thick]coordinates{                                                (1.000000e-03,1.275390e-02)(2.069138e-03,1.275390e-02)(4.281332e-03,1.275390e-02)(8.858668e-03,1.228153e-02)(1.832981e-02,1.228153e-02)(3.792690e-02,1.228153e-02)(7.847600e-02,1.180916e-02)(1.623777e-01,1.322626e-02)(3.359818e-01,1.369863e-02)(6.951928e-01,1.606046e-02)(1.438450e+00,1.700520e-02)(2.976351e+00,1.747756e-02)(6.158482e+00,1.794993e-02)(1.274275e+01,1.794993e-02)(2.636651e+01,1.747756e-02)(5.455595e+01,1.747756e-02)(1.128838e+02,1.747756e-02)(2.335721e+02,1.747756e-02)(4.832930e+02,1.747756e-02)(1000,1.747756e-02)

};
        \end{loglogaxis}
     \end{tikzpicture}
          \end{tabular}
     \caption{Classification error of digit pair $(1,4)$ with different source training pairs for classical LSSVM and optimized LSSVM. $n_{11}=n_{12}=100$, $n_{21}=n_{22}=10$ and $\gamma=\mathbb{1}_2$. A PCA preprocessing is performed on each image to extract their $p=100$ principal components; the accuracy is performed over $n_{\rm test}=1\,135$ test samples. The proposed method shows a low sensitivity to $\lambda$. 
     }
     \label{fig:Mnist_binary}
 \end{figure}

 Table~\ref{tab:sufficient_satitstics} complements the figure by effectively displaying the optimal vectors ${{}\mathcal y^{\rm bin}}^\star$ at the optimal value for $\lambda$. The table demonstrates the appropriate adjustment of the labels to the data correlation $\frac{\Delta\mu_1^\trans\Delta\mu_2}{\|\Delta\mu_2\|^2}$. Specifically, for a negative correlation between the classes of both tasks, the method naturally ``switches'' the labels (the input data scores) by opposing the signs of ${{}\mathcal y^{\rm bin}}^\star$ in entries $1,3$ (Class~$\mathcal C_1$ in each task) and $2,4$ (Class~$\mathcal C_2$ in each task). 
 For rather orthogonal tasks (here typically $(8,3)$), the entries of ${{}\mathcal y^{\rm bin}}^\star$ corresponding to the source task (entries $1$ and $2$) are almost zero, thereby discarding the source data and avoiding negative transfer.
 It is also interesting to note that, for moderately correlated tasks (here for the source digits $(5,9)$), despite the fact that the source task offers ten times more data ($n_{1j}=100$, $n_{2j}=10$) and is thus deemed trustworthy for classification, the corresponding entries $1$ and $2$ in ${{}\mathcal y^{\rm bin}}^\star$ are much smaller than the entries $3$, $4$ corresponding to the target task: the algorithm thus judges the few target data more relevant to target classification than the many related source tasks.
 \begin{table}
      \centering
      \begin{tabular}{c|c|c|c|c}
           [Source]&(9,5)&(5,9)&(6,2)&(8,3) \\
           \hline
           $\frac{\Delta\mu_1^\trans\Delta\mu_2}{\|\Delta\mu_2\|^2}$&-0.2450 &0.2450 &-0.1670& -0.0818\\
           \hline
           ${\mathcal y^{\rm bin}}^\star=\begin{bmatrix}{\mathcal{y}_{11}^{\rm bin}}^\star\\ {\mathcal{y}_{12}^{\rm bin}}^\star\\ {\mathcal{y}_{21}^{\rm bin}}^\star\\ {\mathcal{y}_{22}^{\rm bin}}^\star \end{bmatrix}$&$\begin{bmatrix}-0.2808\\ 0.2808\\ 0.6489\\ -0.6489 \end{bmatrix}$ &$\begin{bmatrix}0.2808\\ -0.2808\\ 0.6489\\ -0.6489 \end{bmatrix}$ &$\begin{bmatrix}-0.2879\\ 0.2879\\ 0.6459\\ -0.6459 \end{bmatrix}$ & $\begin{bmatrix}-0.0400\\ 0.0400\\ 0.7060\\ -0.7060 \end{bmatrix}$
      \end{tabular}
      \caption{Optimal input label ${\mathcal y^{\rm bin}}^\star$ as a function of the source data pair in the ($\lambda$-optimal) configuration of Figure~\ref{fig:Mnist_binary}.}
      \label{tab:sufficient_satitstics}
  \end{table}
  
\subsubsection{Analysis of increasing number of tasks}
This next experiment illustrates the effect of adding more tasks for the transfer learning setting on synthetic and MNIST datasets. For synthetic data, Gaussian classes with mean $\mu_{ij}=\beta\mu_{i1}+\sqrt{1-\beta^2}\mu_{i 1}^\perp$ and various values of $\beta$ are successively added. For the MNIST dataset, different classifications of digits are added progressively to help classify the specific pair of digits $(1,4)$ . Figure~\ref{adding_more_tasks} depicts the classification error after each new task addition, both for a classical binary input label choice and for the proposed optimized input labels. The figure forcefully illustrates that our proposed framework avoids negative transfer, as the classification error of MTL never increases as the number of tasks grows. This is quite unlike the non-optimized scheme which severely suffers from negative transfer.
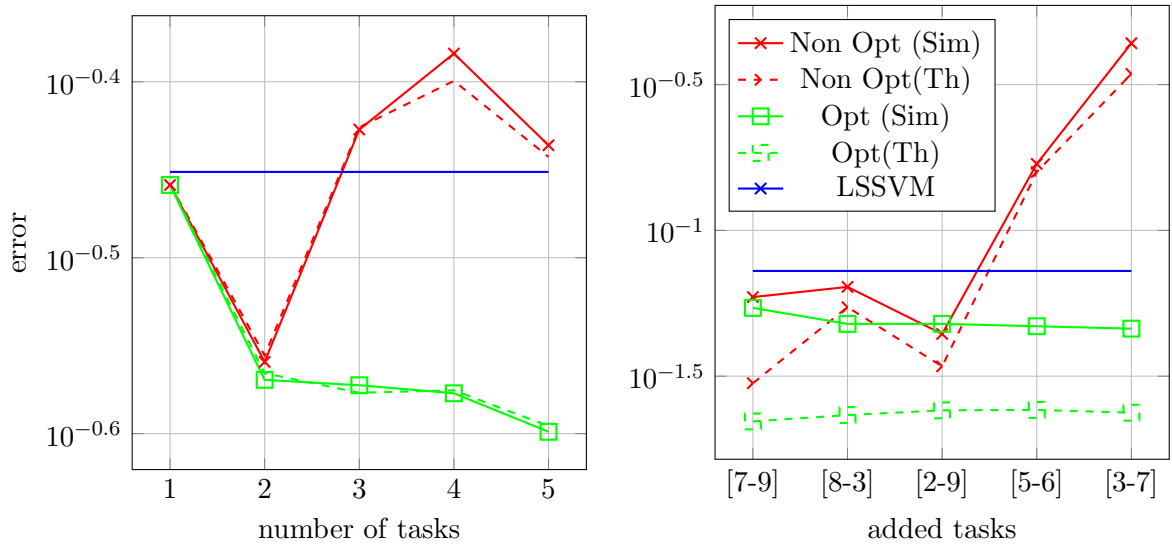
\begin{figure}
    \centering
    \begin{tabular}{ll}
    \begin{tikzpicture}
    \begin{semilogyaxis}[grid=major,mark=x,,legend pos=north west,legend style={nodes={scale=0.7, transform shape}},xlabel={number of tasks},ylabel={error},width=.5\linewidth,height=.5\linewidth]
        \addplot[thin,mark=x,mark size=3pt,red,thick]coordinates{                (1,3.478000e-01)(2,2.759000e-01)(3,3.740000e-01)(4,4.131000e-01)(5,3.664000e-01)
};
\addplot[thin,mark size=3pt,red,thick,dashed]coordinates{                (1,3.481275e-01)(2,2.782663e-01)(3,3.750896e-01)(4,3.986355e-01)(5,3.608890e-01)
};
        \addplot[thin,mark=square,mark size=3pt,green,thick]coordinates{             (1,3.478000e-01)(2,2.695000e-01)(3,2.676000e-01)(4,2.648000e-01)(5,2.518000e-01)
};
\addplot[thin,mark size=3pt,green,thick,dashed]coordinates{                    (1,3.481275e-01)(2,2.718528e-01)(3,2.649897e-01)(4,2.658420e-01)(5,2.537048e-01)
};
\addplot[thin,mark size=3pt,blue,thick]coordinates{                    (1,0.3538)(2,0.3538)(3,0.3538)(4,0.3538)(5,0.3538)
};
    \end{semilogyaxis}
    \end{tikzpicture}
    &
    \begin{tikzpicture}
    \begin{semilogyaxis}[grid=major,mark=x,legend pos=north west
    ,xtick=data,xticklabels={[7-9],[8-3],[2-9],[5-6],[3-7]},xlabel={added tasks},width=.5\linewidth,height=.5\linewidth]
        \addplot[thin,mark=x,mark size=3pt,red,thick]coordinates{                (1,5.905327e-02)(2,6.400764e-02)(3,4.415515e-02)(4,1.691029e-01)(5,4.384516e-01)
};
        \addplot[thin,mark=x,mark size=3pt,red,thick,dashed]coordinates{            (1,2.992020e-02)(2,5.452800e-02)(3,3.418007e-02)(4,1.585640e-01)(5,3.441571e-01)

};
\addplot[thin,mark size=3pt,mark=square,green,thick]coordinates{               (1,5.423616e-02)(2,4.779422e-02)(3,4.779422e-02)(4,4.691316e-02)(5,4.603210e-02)
};
\addplot[thin,mark size=3pt,mark=square,green,thick,dashed]coordinates{                   (1,2.212067e-02)(2,2.326790e-02)(3,2.414620e-02)(4,2.422433e-02)(5,2.369790e-02)

};
\addplot[thin,mark size=3pt,blue,thick]coordinates{               (1,0.0726)(2,0.0726)(3,0.0726)(4,0.0726)(5,0.0726)
};
\legend{Non Opt (Sim),Non Opt(Th),Opt (Sim),Opt(Th),LSSVM}
    \end{semilogyaxis}
    \end{tikzpicture}
    \end{tabular}
    \caption{Classification accuracy for increasing number of tasks. {\bf (Left)} Synthetic data with task correlations $\beta=1$, $.9$, $.5$, $.2$, $.8$ in this order, $p=100$ and $c=[.07,.11,.10,.10,.06,.08,.09,.12,.10,.11,.03,.03]^\trans$; accuracy evaluated out of $10\,000$ test samples. {\bf (Right)} MNIST dataset with digits $(1,4)$ as target task, each added task being shown in x-axis; $100$ training samples are used for each class of the source tasks and $10$ training samples for each class of the target class; HOG features with $p=144$ for each image digit; accuracy evaluated out of $n_{\rm test}=1\,135$ test samples. For both setting, $\gamma=\mathbb{1}_{k}$ and $\lambda=10$.
    The optimized scheme avoids negative transfer by systematically benefiting from additional tasks.
    }
    \label{adding_more_tasks}
\end{figure}
\subsubsection{Hypothesis testing}

The next experiments, both synthetic and on real data, apply the results of MTL-LSSVM to a hypothesis test on a \emph{target} Task~$t$ based on training samples both from a source Task~$s$ and the target Task~$t$. For data $\bf x$ in the target task, the test
\begin{align*}
g_t^{\rm bin}({\bf x})\underset{\mathcal{H}_0}{\overset{\mathcal{H}_1}{\gtrless}} \zeta
\end{align*}
is performed, where $\mathcal H_0$ is the null hypothesis (say, Class~$2$) and $\mathcal H_1$ the alternative (say, Class~$1$) and $\zeta=\zeta(\eta)$ is a decision threshold here selected in such a way to enforce the false alarm rate constraint $P(g_t^{\rm}({\bf x})\geq \zeta(\eta)~|~{\bf x}\in \mathcal H_0)\leq \eta$, for a given $\eta\in (0,1)$. The objective is then to maximize over the input scores ${\mathcal y}^{\rm bin}$ the correct detection rate $P(g_t^{\rm bin}({\bf x})\geq \zeta(\eta)~|~{\bf x}\in \mathcal H_1)$: this induces a different value for the optimal scores ${\mathcal y^{\rm bin}}^\star$ than proposed in \eqref{eq:ty_opt}, which can be constructed following Remark~\ref{rem:optimization_general}.

The experimental synthetic data is here a two-task ($k=2$) setting in which $x_{1j}\sim\mathcal{N}(\pm\mu_{11},I_p)$ (i.e., $\mu_{12}=-\mu_{11}$) and $x_{2j}\sim\mathcal{N}(\pm\mu_{21},I_p)$, where $\mu_{21}=\beta\mu_{11}+\sqrt{1-\beta^2}\mu_{11}^{\perp}$, $\mu_{11}$ is a unit-norm vector and  $\mu_{11}^{\perp}$ any unit-norm vector orthogonal to $\mu_{11}$. We take here $\beta=0.5$, so that both tasks are ``slightly'' correlated. As for the real-world experiment, they are based on the MIT-BIH Arrhythmia dataset \citep{moody2001impact}. The dataset consists of $109\,446$ samples from $5$ medical heart condition categories: ``Normal (N)": $0$, ``Atrial premature (S)": $1$, ``Ventricular (V)": $2$, ``Ventricular-Norma (F)": $3$, and ``Unclassifiable (Q)": $4$. For illustration, we consider here a binary classification with source Classes~$\{1,2\}$ and target Classes~$\{3, 4\}$. A false alarm is raised when misclassifying (target) Class~$3$ into Class~$4$ and the performance objective consists in maximizing the correct classification of target Class~$4$.

\medskip

Figure~\ref{fig:my_label} depicts the algorithm performance through a receiver-operating curve (ROC) for false alarm rates $\eta$ on both synthetic and real-world data. Both theoretical (Th) asymptotics (used to set the decision threshold $\zeta$) and actual performances (Sim) are displayed, for the optimal (Opt) choice of $\mathcal y^{\rm bin}$ (Opt) and for $\mathcal y^{\rm bin}=[-1,1,-1,1]^\trans$ (Non-Opt).

Both synthetic and real data graphs of Figure~\ref{fig:my_label} confirm, here under the hypothesis testing problem, the large superiority of our proposed optimized MTL-LSSVM  over the standard non-optimized alternative. Besides, the theoretical classification error prediction is an accurate fit to the actual empirical performance, even for not so large values of $p$ and the $n_{ij}$'s, and even for small error values.\footnote{Since our main result (Theorem~\ref{th:main}) is a central limit theorem, it is not expected to be particularly accurate in the ``tails'' of the distribution of the output scores $g_i^{\rm bin}({\bf x})$; as such, the observed high accuracy for small error values is remarkable.} This remark is here all the more fundamental that, in practice, $\eta$ can be set a priori, using Theorem~\ref{th:main} with no need for heavy, unreliable, and data-consuming cross-validation procedures.

\begin{figure}
  \centering
  \begin{tabular}{cc}
  \hspace*{-.5cm}\begin{tikzpicture}
		\begin{loglogaxis}[grid=major,xlabel={$P({\bf x}\to\mathcal{C}_2|{\bf x}\in\mathcal{C}_1)$},ylabel={$P({\bf x}\to\mathcal{C}_1|{\bf x}\in\mathcal{C}_2)$},width=.5\linewidth,height=.5\linewidth,y label style={at={(axis description cs:-.25,.5)},anchor=south},legend pos=south west]
			            \addplot[thin,mark=x,mark size=3pt,red,thick]coordinates{                                 (    0.000001,     0.972000)(    0.000003,     0.959000)(    0.000008,     0.934000)(    0.000022,     0.901000)(    0.000060,     0.849000)(    0.000167,     0.776000)(    0.000464,     0.694000)(    0.001292,     0.581000)
};
 			            \addplot[thin,mark=o,mark size=3pt,black,thick]coordinates{                (    0.000001,     0.973325)(    0.000003,     0.957368)(    0.000008,     0.933236)(    0.000022,     0.897765)(    0.000060,     0.847319)(    0.000167,     0.778268)(    0.000464,     0.687955)(    0.001292,     0.576168)

 };			           
 			            \addplot[thin,mark=square,mark size=3pt,green,thick]coordinates{               (    0.000001,     0.232819)(    0.000003,     0.173422)(    0.000008,     0.122756)(    0.000022,     0.081850)(    0.000060,     0.050857)(    0.000167,     0.029026)(    0.000464,     0.014945)(    0.001292,     0.006764)

 };
  		            \addplot[thin,mark=*,mark size=2pt,blue,thick]coordinates{                                   (    0.000001,     0.289000)(    0.000003,     0.218000)(    0.000008,     0.161000)(    0.000022,     0.108000)(    0.000060,     0.072000)(    0.000167,     0.037000)(    0.000464,     0.013000)(    0.001292,     0.005000)

};
 
		\end{loglogaxis}
\end{tikzpicture}
&
  \centering
  \hspace*{-.5cm}\begin{tikzpicture}
		\begin{loglogaxis}[grid=major,xlabel={$P({\bf x}\to\mathcal{C}_2|{\bf x}\in\mathcal{C}_1)$},width=.5\linewidth,height=.5\linewidth,legend pos=south west
		]

			            \addplot[thin,mark=x,mark size=4pt,red,thick]coordinates{                                  (    0.003597,     0.794890)(    0.005396,     0.718923)(    0.007194,     0.626381)(    0.008993,     0.513122)(    0.016187,     0.420580)(    0.032374,     0.323204)(    0.062950,     0.225829)(    0.095324,     0.130525)(    0.151079,     0.071823)

};
\addplot[thin,mark=o,mark size=3pt,black,thick]coordinates{ 
(    0.001799,     0.818370)(    0.005396,     0.743785)(    0.008993,     0.644337)(    0.008993,     0.540055)(    0.014388,     0.429558)(    0.030576,     0.336326)(    0.066547,     0.229282)(    0.098921,     0.138122)(    0.161871,     0.072514)
};
\addplot[thin,mark=*,mark size=2pt,blue,thick]coordinates{                                      (    0.000215,     0.808603)(    0.000464,     0.741061)(    0.001000,     0.658134)(    0.002154,     0.560628)(    0.004642,     0.451778)(    0.010000,     0.337785)(    0.021544,     0.227701)(    0.046416,     0.132208)(    0.100000,     0.060959)

};
\addplot[thin,mark=square,mark size=3pt,green,thick]coordinates{                                          (    0.000215,     0.793796)(    0.000464,     0.724964)(    0.001000,     0.641597)(    0.002154,     0.544771)(    0.004642,     0.437836)(    0.010000,     0.326836)(    0.021544,     0.220340)(    0.046416,     0.128285)(    0.100000,     0.059563)

};
\legend{Non Opt (Sim), Non Opt (Th), Opt (Th), Opt (Sim)};
		\end{loglogaxis}
\end{tikzpicture}
    \end{tabular}
    \caption{ROC curve for proposed optimized versus standard MTL-LSSVM. {\bf (Left)} Synthetic data with $p=128$, $n_{11}=384$, $n_{12}=256$, $n_{21}=64$, $n_{22}=40$, $\mu_{11}=-\mu_{12}=[1,0,\ldots,0]^\trans$, $\mu_{21}=-\mu_{22}=[.87,.5,0,\ldots,0]^\trans$. {\bf (Right)} MIT-BIH arrhythmia database, with $p=550$, $n_{ij}=500$, $\lambda=1$, $\gamma=\mathbb{1}_2$. The accuracy of the theoretical anticipation is remarkable and allows for a precise setting of the decision threshold ensuring a desired false alarm rate.} 
  \label{fig:my_label}
\end{figure}
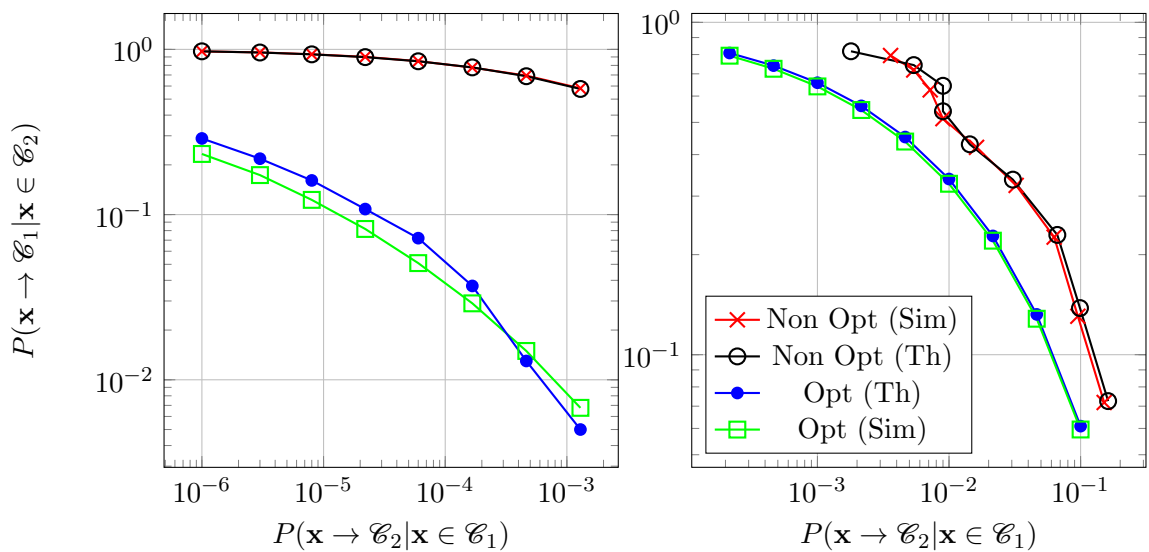

\subsection{Experiments on multi-class classification}
\label{sec:exp_multiclass}

We here consider the complete setting of a $k\geq 2$, $m>2$ multi-class learning scenario, first on synthetic and then on real image datasets. 

\subsubsection{Experiments on synthetic dataset}

In the synthetic data experiment, the scenario is a two-task ($k=2$) setting in which $x_{1l}^{(j)}\sim\mathcal{N}(\mu_{1j},I_p)$ and $x_{2l}^{(j)}\sim\mathcal{N}(\mu_{2j},I_p)$, where $\mu_{2j}=\beta\mu_{1j}+\sqrt{1-\beta^2}\mu_{1j}^{\perp}$, with $\mu_{1j}=2 e_{j}^{[p]}$ and  $\mu_{1j}^{\perp}=e_{p-j}^{[p]}$, and $\beta$ varies from $0.1$ to $0.8$.

Table~\ref{tab:multi_extension} provides the empirical classification accuracy achieved by one-versus-all (Algorithm~\ref{alg:multi class}), one-versus-one (Algorithm~\ref{alg:multi class_one_vs_one}) and one-hot (Algorithm~\ref{alg:multi class_one_hot}) learning versus their standard (non-optimized) algorithm equivalent on $10\,000$ test samples. The table also reports the theoretical classification accuracies predicted by the empirical estimation of the quantities involved in Propositions~\ref{prop:classification_accuracy_one_vs_all}--\ref{prop:classification_accuracy_one_hot} (therefore without any cross validation) for the one-versus-all and one-hot methods.
\begin{table}[h!t]
\caption{Classification accuracy for synthetic data $x_{1l}^{(j)}\sim\mathcal{N}(\mu_{1j},I_p)$ and $x_{2l}^{(j)}\sim\mathcal{N}(\mu_{2j},I_p)$, $\mu_{2j}=\beta\mu_{1j}+\sqrt{1-\beta^2}\mu_{1j}^{\perp}$, for different values of the data-correlation $\beta>0$ and various multi-class learning algorithms. Theoretical performance predictions are provided in parentheses. Here $m=5$, $p=100$, $c_{1j}=.16$, $c_{2j}=.04$, for $j\in\{1,\ldots,5\}$, $\lambda=1$ and $\gamma=\mathbb{1}_k$.
The performance gains of the proposed optimal scheme is particularly clear in tasks with low correlation.
}
\label{tab:multi_extension}

\centering
\hspace*{-.2cm}\begin{tabular}{ccccc}
\hline
$\beta$ &Method & one-vs-all & one-vs-one & one-hot\\
\hline
\multirow{2}{*}{$\beta=0.1$}&Classical& 61.43 (59.87) & 65.31 & 65.61 (64.35)\\
&Optimized &67.63 (67.57)& 74.98 & 67.63 (67.55)\\
\hline
\multirow{2}{*}{$\beta=0.5$}&Classical& 65.47 (66.00) & 71.30 & 67.41 (67.90)\\
&Optimized &68.00 (68.52)& 76.31 & 68.03 (68.48)\\
\hline
\multirow{2}{*}{$\beta=0.8$}&Classical& 71.16 (70.63) & 78.20 & 70.97 (70.58)\\
&Optimized &71.19 (70.76)& 78.55 & 71.14 (70.67)\\
\end{tabular}
\end{table}

The output performance scores naturally show an improvement using the proposed MTL-LSSVM framework and confirm again the extremely accurate prediction of performance by the theoretical formulas. Most importantly, the table reveals that the gap between the non-optimized and optimized schemes is all the more important that the correlation between task (through the parameter $\beta$) is small; this indicates that the optimized MTL-LSSVM learning framework better exploits the (even little) correlation arising between tasks or, alternatively, that the non-optimized scheme suffers from negative learning when ``over-emphasizing'' the weight of data from the other task (through the binary input labels $\mathcal Y$).

As for the comparison of the three classification methods (one-versus-all, one-versus-one and one-hot), it shows here an overall superiority of the one-versus-one approach. This result should nonetheless be interpreted with extreme care as no optimization over the hyperparameters $\gamma,\lambda$ is conducted in any scenario.

\subsubsection{Image classification}
Similarly as in Section~\ref{sec:theoretical_simple}, we now turn to the popular Office+Caltech256 multi-task image classification benchmark \citep{saenko2010adapting,griffin2007caltech} often exploited for transfer learning. The overall database consists of $10$ categories shared by both Office and Caltech256 datasets. As in Table~\ref{tab:compare}, we consider in sequence the transfer learning of one out of four possible source tasks, each of which consisting in classifying data from one sub-database (images issued from the Caltech set (c), Webcam images (w), Amazon pictures (a) or dslr images (d)), towards another task; this boils down to $4\times (4-1)=12$ source-target comparison pairs.)

The results in Table~\ref{tab:compare} using VGG features for the image representations are extremely close to $100\%$, already for the ``naive'' approach consisting in a simplified one-versus-all extension of Algorithm~\ref{alg:binary_algorithm}. Little would be gained (at least not in computational efforts) by running the more involved Algorithm~\ref{alg:multi class} on the same database. For this reason, for the present experiment, we compare the more challenging (since less discriminative) $p=800$ SURF-BoW features of the Office+Caltech256 images instead of their VGG features. 

Half of the samples of the target task is randomly selected as test data and the accuracy is evaluated over $20$ independent trials. For complexity reasons, as in Section~\ref{sec:theoretical_simple}, for each experiment, the naive version of the one-versus-all algorithm is run $10$ times, considering a fictitious two-class $\tilde{\mathcal C}_1$-versus-$\tilde{\mathcal C}_2$ setting where, for the classifier focusing on class $\mathcal C_\ell$, class $\tilde{\mathcal C}_1=\mathcal C_\ell$ while class $\tilde{\mathcal C}_2$ is the union of all other classes $\mathcal C_{\ell'}$, $\ell'\neq \ell$.

\medskip

Table~\ref{tab:compare_2} reports the accuracy obtained by the algorithm (Proposed) versus the non optimized MTL-LSSVM from \citep{xu2013} (LSSVM) and state-of-the-art transfer learning algorithms already introduced in Section~\ref{sec:theoretical_simple}.
Table~\ref{tab:compare_2} again demonstrates that our proposed improved MTL-LSSVM, despite its simplicity and unlike the competing methods used for comparison, has stable performances and is highly competitive.

\begin{table*}[h!t]
\caption{Classification accuracy for transfer learning on the Office+Caltech256 database, against state-of-the-art alternatives. Here with c(Caltech), w(Webcam), a(Amazon), d(dslr) based on SURF-BoW features. Our proposed approach is systematically best or second to best and best on average.}
\label{tab:compare_2}

\centering
\hspace*{-1cm}\begin{tabular}{p{0.074\linewidth}p{0.054\linewidth}p{0.054\linewidth}p{0.054\linewidth}p{0.054\linewidth}p{0.054\linewidth}p{0.054\linewidth}p{0.054\linewidth}p{0.054\linewidth}p{0.054\linewidth}p{0.054\linewidth}p{0.054\linewidth}p{0.054\linewidth}|p{0.054\linewidth}}
\hline
S/T & c$\,\to\,$w & w$\,\to\,$c & c$\,\to\,$a & a$\,\to\,$c& w$\,\to\,$a & a$\,\to\,$d & d$\,\to\,$a & w$\,\to\,$d&c$\,\to\,$d & d$\,\to\,$c & a$\,\to\,$w & d$\,\to\,$w& Mean score\\
\hline
LSSVM &79.47 & 47.70 & 68.10 &49.65 & 68.13 & 57.50 & 70.00 & 73.75&67.50& 46.45&74.83&84.11 & 65.60 \\

MMDT & 69.47 & 42.55 & 68.95 & 39.70 & 65.24 & 59.50 & 62.16 & \bf 86.06 & 56.94 & 27.92 & 68.54& \bf 87.88 & 61.24 \\

ILS & 24.5 & 20.92 & 25.21 & 21.10 & 22.92 & 26.25 & 27.08 & 43.75 & 30.00 & 26.95 & 15.23 & 57.62 & 28.46 \\

CDLS & \it 82.28 & \bf 54.21 & \it 73.75 & \bf 54.49 & \bf 71.52 & \it 68.56 & \it 70.54 & 69.44 & \it 69.44 & \bf 53.86 & \bf 81.59 & 82.78 & {\it 69.37} \\
\hline
Ours & {\bf 86.09} & {\it 49.65} & {\bf 75.00} & {\it 50.35} & {\it 68.83} & {\bf 73.75} & {\bf 71.25} & {\it 72.50} & {\bf 77.50} & {\it 48.05} & {\it 80.13} & {\it 85.43} & {\bf 69.88} \\
\hline
\end{tabular}
\end{table*}

\section{Concluding remarks}

Through the example of multi-task learning, as well as its particularization to transfer learning, the article demonstrates the ability of random matrix theory to predict the performance of advanced machine learning schemes (here based on an extension of LSSVM) and most importantly to propose improved learning mechanisms, which are competitive with, if not largely outperforming, elaborate state-of-the-art alternatives. 

Interestingly, as already reported in recent works \citep{mai2018random,mai2019large}, the proposed random-matrix-optimized framework is largely counter-intuitive and comes along with novel insights on the overall learning mechanisms of large dimensional data classification. Here specifically, the proposed input score (label) optimization is at odds with the conventional binary input label insights of most machine learning schemes, but is key to optimize the exploitation of other tasks and to discard altogether the long standing problem of negative transfer.

The random-matrix framework also draws a significant advantage in its being \emph{universal} to data distributions. As shown here, our main results (Theorem~\ref{th:main}) are valid for data modelled as mixtures of concentrated random vectors which go quite beyond the usually assumed Gaussian mixtures, as they encompass extremely realistic synthetic data models (such as GAN images). This universality phenomenon, possible surprising at first, in fact holds for a wide range of large dimensional ``dense'' (as opposed to sparse) data representation vectors, encompassing not only images but also likely other forms of data representations, such as word embeddings in natural language processing, vectors of moments of graphons in statistical graph analysis, etc.

To conclude, we importantly emphasize a fundamental underlying take-away message of the present work: recalling that LSSVM is nothing but an explicit and computationally-cheap linear regression method, the fact that it competes or even outperforms elaborate MTL methods testifies of the possibility, when dealing with large dimensional data, to design highly performing elementary and cost-efficient random-matrix-based learning schemes. This remark is in line with the recent parallel analysis of information theoretic bounds on the performances of machine learning problems, such as in \citep{lelarge2019asymptotic} for semi-supervised learning (SSL); similar to the present work, in \citep{mai2018random}, the authors propose a random-matrix-based optimization of standard graph SSL learning which they demonstrate to tightly reach the information theoretic upper bound of \citep{lelarge2019asymptotic}. This simultaneously (i) opens the path to a tentative exploration of information-theoretic bounds on transfer learning and multi-task learning for large dimensional data, the results of which could then be confronted to the present proposed scheme, and (ii) strongly suggests the practical relevance of ``reinvesting'' research efforts in simple, cost-efficient, theoretically tractable, controllable, and usually more stable machine learning schemes, rather than in complex and theoretically intractable techniques.

\acks{We thank Cosme Louart for fruitful discussions about technical aspects related to concentrated random vectors. This work is supported by the UGA IDEX GSTATS Chair and the MIAI LargeDATA Chair at University Grenoble Alpes.}


\newpage

\appendix
\section{}
\subsection{Solution of MTL-LSSVM}
\label{MTL_solution}

The Lagrangian of the constrained optimization problem  using the relatedness assumption ($W_i=W_0+V_i$)  reads:
\begin{align*}
    \mathcal{L}(\omega_0,v_i,\xi_i,\alpha_i,b_i)&=\frac1{2\lambda} \tr\left(W_0^\trans W_0\right)+\frac 1{2}\sum_{i=1}^k \frac{\tr\left( V_i^\trans V_i\right)}{\gamma_i}+\frac 12\sum_{i=1}^k\tr\left(\xi_i^\trans\xi_i\right)\\
    &+\sum\limits_{i=1}^k\tr\left(\alpha_i^\trans\left(Y_i-\frac{\mathring{X}_i^\trans W_0}{kp}-\frac{\mathring{X}_i^\trans V_i}{kp}-\mathbb{1}_{n_i}b_i^\trans-\xi_i\right)\right)
\end{align*}
with $\alpha_i\in\mathbb{R}^{n_i\times m}$ the Lagrangian parameter attached to task $i$.

Differentiating with respect to the unknowns $W_0$, $V_i$, $\xi_i$, $\alpha_i$, and $b_i$ leads to the following system of equations:
\begin{align}
    \frac 1{\lambda}W_0-\sum\limits_{i=1}^k X_i\alpha_i&=0 \label{eq_w0}\\
    \frac 1{\gamma_i}V_i -X_i\alpha_i&=0 \label{eq_vi}\\
    \xi_i-\alpha_i&=0 \label{eq_xi}\\
    Y_i-\frac{\mathring{X}_i^\trans W_0}{kp}-\frac{\mathring{X}_i^\trans V_i}{kp}-\mathbb{1}_{n_i}b_i^\trans-\xi_i&=0 \label{eq_yi}\\
    \alpha_i^\trans\mathbb{1}_{n_i}&=0.
\end{align}

Plugging the expression of $W_0$ (Equation~\eqref{eq_w0}), $V_i$ (Equation~\eqref{eq_vi}) and $\xi_i$ (Equation~\eqref{eq_xi}) into Equation~\eqref{eq_yi} leads to:

\begin{align*}
Y_i&=\left(\lambda+\gamma_i\right)\frac{\mathring{X}_i^\trans \mathring{X}_i}{kp}\alpha_i+\lambda\sum\limits_{j\neq i}\frac{\mathring{X}_i^\trans X_j}{kp}\alpha_j+\mathbb{1}_{n_i}b_i^\trans+\alpha_i\\
\mathbb{1}_{n_i}^\trans\alpha_i&=0.
\end{align*}

With $Y=[Y_1^\trans,\ldots,Y_k^\trans]^\trans\in\mathbb{R}^{n}$, $\alpha=[\alpha_1^\trans,\ldots,\alpha_k^\trans]^\trans\in\mathbb{R}^{n}$, ${Z=\sum_{i=1}^k e_{i}^{[k]}{e_{i}^{[k]}}^\trans\otimes \mathring{X}_i}\in\mathbb{R}^{kp\times n}$  and ${P\in\mathbb{R}^{n\times k}}$ such that the $j$-th column is 
${P_{.j}=[\textbf{0}_{n_1+\ldots+n_{j-1}}^\trans,\mathbb{1}_{n_j}^\trans,\textbf{0}_{n_{j+1}+\ldots+n_{k}}^\trans]^\trans}$, this system of equations can be written under the following compact matrix form:
\begin{align*}
    Pb+Q^{-1}\alpha &=Y\\
    P^\trans \alpha&=\mathbf{0}_{k}
\end{align*}
with $Q=\left(\frac{Z^\trans AZ}{kp}+I_{n}\right)^{-1}\in\mathbb{R}^{n\times n}$, and $A=\left(\mathcal{D}_{\gamma}+\lambda\mathbb{1}_k\mathbb{1}_k^\trans\right)\otimes I_p\in\mathbb{R}^{kp\times kp}$.

Solving for $\alpha$ and $b$ then gives:
\begin{align*}
    \alpha &= Q(Y-Pb)\\
     b&=(P^\trans Q P)^{-1}P^{\trans}QY.
\end{align*}
Moreover, using $W_i=W_0+V_i$ and Equations~\eqref{eq_w0} and \eqref{eq_vi}, the expression of $W_i$ becomes:
\begin{equation*}
W_i=\left({e_{i}^{[k]}}^\trans\otimes I_{p}\right)AZ\alpha.
\end{equation*}

\subsection{Calculus of deterministic equivalents}
\label{sec:lemma}
\begin{lemma}[Deterministic equivalents]
\label{lem:eq}
\begin{sloppypar}
Define, for class $\mathcal C_j$ in Task~$i$, the data deterministic matrices
\begin{align*}
    M&=\left(e_1^{[k]}\otimes[\mu_{11},\dots,\mu_{1m}],\ldots,e_k^{[k]}\otimes[\mu_{k1},\dots, \mu_{km}]\right) \\
 \mathbb{C}_{ij}&=A^{\frac 12}\left(e_i^{[k]}{e_i^{[k]}}^\trans\otimes(\Sigma_{ij}+\mu_{ij}\mu_{ij}^\trans) \right)A^{\frac 12}.
\end{align*}
Then we have the deterministic equivalents of first order
\begin{align*}
     \tilde{Q} &\leftrightarrow \bar{\tilde{Q}} \equiv  \left(\sum_{i=1}^{k}\sum\limits_{j=1}^{m}{\bm\delta}_{ij}^{[mk]}\mathbb{C}_{ij}+I_{kp}\right)^{-1}  \\
     A^{\frac 12}\tilde{Q}A^{\frac 12}Z &\leftrightarrow A^{\frac 12}\Bar{\Tilde{Q}}A^{\frac 12}M_{\bm\delta}J^{\trans}
\end{align*}
and of second order
\begin{align*}
     \tilde{Q}A^{\frac 12}S_{ij}A^{\frac 12}\tilde{Q}&\leftrightarrow B_{ij} \\
      Z^\trans A^{\frac 12}\tilde{Q}A^{\frac 12}S_{ij}A^{\frac 12}\tilde{Q}A^{\frac 12}Z&\leftrightarrow JM_{\bm\delta}^\trans A^{\frac 12}(B_{ij}A^{\frac 12}M_{\bm\delta}J^\trans-\bar{\tilde{Q}}A^{\frac 12}M_{\bm\delta}W_{ij})+F_{ij}
\end{align*}
in which we defined 
\begin{align*}
W_{ij}&=[w_{11},\ldots,w_{km}]^\trans, \quad w_{sl}=\left[\textbf{0}_{n_{11}+\ldots+n_{(s-1)l}}^\trans,\frac{2{\bm\delta_{sl}^{[mk]}}\tr\left( B_{ij}\mathbb{C}_{sl}\right)}{n_{sl}}\mathbb{1}_{n_{sl}}^\trans,\textbf{0}_{n_{(s+1)l}+\ldots+n_{km}}^\trans\right]^\trans\\
F_{ij}&=\sum_{i',j'} \frac{c_0^2{{}{\bm\delta}_{i'j'}^{[mk]}}^2}{c_{i'j'}^2}\tr(\mathbb{C}_{i'j'}B_{ij})e_{i'j'}^{[mk]}{e_{i'j'}^{[mk]}}^\trans\\
B_{ij} &=\bar{\tilde{Q}}A^{\frac 12}S_{ij}A^{\frac 12}\bar{\tilde{Q}}+\sum\limits_{i'=1}^k\sum\limits_{j'=1}^2 d_{i'j'} T_{ij,i'j'}[\bar{\tilde{Q}}\mathbb{C}_{i'j'}\bar{\tilde{Q}}] \\
D &=\sum_{i,j}d_{ij}e_{ij}^{[mk]}{e_{ij}^{[mk]}}^\trans,~d_{ij}=\frac{c_{0}}{c_{ij}}{{}{\bm\delta}_{ij}^{[mk]}}^2 \\
J &=[j_{11},\ldots,j_{km}],\\
j_{lm}&=\left(0_{n_{11}+\ldots+n_{(i-1)m}}^\trans,\mathbb{1}_{n_{ij}}^\trans,0_{n_{(i+1)1}+\ldots+n_{km}}^\trans\right)^{\trans},\\
M_{\bm\delta} &= M \sum_{ij} \frac{c_0}{c_{ij}}{\bm\delta}_{ij}^{[mk]}e_{ij}^{[mk]}e_{ij}^{[mk]\trans} \\
S_{ij} &=e_i^{[k]}{e_i^{[k]}}^\trans\otimes \Sigma_{ij} \\
T&=\bar{T}(I_{k}-D\mathcal{T})^{-1},~\mathcal{T}_{ij,i'j'}=\frac1{kp}\tr(\mathbb{C}_{ij}\bar{\tilde{Q}}\mathbb{C}_{i'j'}\bar{\tilde{Q}}), \bar{T}_{ij,i'j'}=\frac{1}{kp}tr\left(\mathbb{C}_{i'j'}\bar{\tilde{Q}}A^{\frac 12}S_{ij}A^{\frac 12}\bar{\tilde{Q}}\right)
\end{align*}
and the $(\bm\delta_{11}^{[mk]},\ldots,\bm\delta_{km}^{[mk]})$ are the unique positive solutions of
\begin{equation*}
    \bm\delta_{ij}^{[mk]}=\frac{c_{ij}}{c_0\left(1+\frac 1{kp} \tr (\mathbb{C}_{ij}\bar{\tilde{Q}})\right)},~\forall i,j.
\end{equation*}
\end{sloppypar}
\end{lemma}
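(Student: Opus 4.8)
The plan is to establish the deterministic equivalents in the standard two-step scheme of concentration-of-measure random matrix theory (following the approach of Louart--Couillet), but carefully adapted to the block-structured data matrix $Z$ proper to the multi-task setting. Write $R=\frac1{kp}A^{\frac12}ZZ^\trans A^{\frac12}+I_{kp}$ so that $\tilde Q=R^{-1}$, and note that the columns of $Z$ are of the form $e_i^{[k]}\otimes \mathring x$, where $\mathring x$ is a task-wise centered data vector of some class $\mathcal C_j$ of Task~$i$. The first step is concentration: since $\tilde Q=(\,\cdot\,+I)^{-1}$ has a positive semi-definite argument one has $\|\tilde Q\|\le 1$, and the map $X\mapsto u^\trans\tilde Q v$ is Lipschitz with constant of order $p^{-\frac12}$. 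Invoking Assumption~\ref{ass:distribution}, the Lipschitz concentration of $X$ then propagates to these linear functionals, so that $u^\trans(\tilde Q-\mathbb E[\tilde Q])v\asto 0$ and $\frac1{kp}\tr\,B(\tilde Q-\mathbb E[\tilde Q])\asto 0$ for bounded-norm deterministic $u,v,B$. The same mechanism applies to the more involved matrices once their Lipschitz character is verified, so it suffices to characterize expectations.

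The second step computes $\mathbb E[\tilde Q]$ via the leave-one-out (cavity) method. For a column $z_l$ in class $\mathcal C_j$ of Task~$i$, let $\tilde Q_{-l}$ be the resolvent with that column removed; Sherman--Morrison gives $\tilde Q A^{\frac12}z_l=\tilde Q_{-l}A^{\frac12}z_l/(1+\frac1{kp}z_l^\trans A^{\frac12}\tilde Q_{-l}A^{\frac12}z_l)$, and the scalar denominator concentrates about $1+\frac1{kp}\tr(\mathbb C_{ij}\bar{\tilde Q})$, using that $A^{\frac12}\mathbb E[z_lz_l^\trans]A^{\frac12}\to\mathbb C_{ij}$ up to the controllable rank-one task-centering correction induced by $I_{n_i}-\frac1{n_i}\mathbb 1\mathbb 1^\trans$. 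Summing the $n_{ij}$ contributions per class yields the self-consistent $\bar{\tilde Q}=(\sum_{i,j}\bm\delta_{ij}^{[mk]}\mathbb C_{ij}+I_{kp})^{-1}$ with the $\bm\delta_{ij}^{[mk]}$ solving the stated fixed-point equations; existence and uniqueness of a positive solution follow from a standard monotonicity (standard interference function) argument on the map $\bm\delta\mapsto (c_{ij}/c_0)(1+\frac1{kp}\tr\mathbb C_{ij}\bar{\tilde Q})^{-1}$. The first-order equivalent of $A^{\frac12}\tilde Q A^{\frac12}Z$ follows from the same expansion applied to $\mathbb E[\tilde Q A^{\frac12}z_l]$: the means $\mathbb E[z_l]=e_i^{[k]}\otimes\mu_{ij}$ assemble into $M$, the scaling factors produce $M_{\bm\delta}$, and the per-class block-indicator structure produces $J$.

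The third step treats the second-order equivalents. Products of two resolvents, $\tilde Q A^{\frac12}S_{ij}A^{\frac12}\tilde Q$, are handled either by differentiating the first-order equivalent under a perturbation $R-tA^{\frac12}S_{ij}A^{\frac12}$, or by expanding twice via the cavity identity; either route yields a closed linear system for the traces $\frac1{kp}\tr(\mathbb C_{ij}\tilde Q\mathbb C_{i'j'}\tilde Q)$, encoded in $\mathcal T$, $\bar T$ and $T=\bar T(I-D\mathcal T)^{-1}$, and thus gives $B_{ij}$. The bilinear form $Z^\trans A^{\frac12}\tilde Q A^{\frac12}S_{ij}A^{\frac12}\tilde Q A^{\frac12}Z$ is the delicate object, since the randomness of $Z$ now appears on both outer sides and couples to the resolvents in the middle, forbidding a naive substitution of $\bar{\tilde Q}$. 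Expanding each column via the cavity identity produces, besides the naive term $JM_{\bm\delta}^\trans A^{\frac12}B_{ij}A^{\frac12}M_{\bm\delta}J^\trans$, two non-negligible corrections: a ``self-loop'' term $W_{ij}$ arising from the dependence of the denominators on the very column being isolated, and a diagonal term $F_{ij}$ collecting the residual variance of each isolated quadratic form.

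The main obstacle, and the reason this does not reduce to existing single-task LSSVM analyses, is precisely the block-sparse structure of $Z$ emphasized after Proposition~\ref{prop:solution_optimization}: its singular values do not follow a Mar\u{c}enko--Pastur law, so the deterministic equivalent cannot collapse to a single scalar fixed point but requires the coupled system of $mk$ equations in $\bm\delta_{ij}^{[mk]}$. The hardest technical point will be the bilinear second-order term: correctly identifying and bounding the correction matrices $W_{ij}$ and $F_{ij}$, and proving that the remaining cross-terms are $o(1)$, demands careful bookkeeping of which cavity denominators stay coupled to the isolated column across both resolvent factors.
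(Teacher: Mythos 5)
Your proposal follows essentially the same route as the paper's proof: concentration of Lipschitz functionals of $X$ to reduce everything to expectations, Sherman--Morrison leave-one-out expansions yielding the self-consistent fixed-point system in the $\bm\delta_{ij}^{[mk]}$ (and hence $\bar{\tilde{Q}}$, $M_{\bm\delta}$, $J$), and a double cavity expansion of the bilinear second-order term split into off-diagonal contributions (producing the $W_{ij}$ correction) and diagonal contributions (producing $F_{ij}$), with $B_{ij}$ obtained from the closed linear trace system $T=\bar{T}(I_{k}-D\mathcal{T})^{-1}$. The only differences are cosmetic additions on your side (the perturbation/differentiation alternative for $B_{ij}$ and the interference-function argument for uniqueness of the fixed point), which do not change the substance of the argument.
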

\subsubsection{Proof of Lemma~\ref{lem:eq}}
\paragraph{First order deterministic equivalent.}
A deterministic equivalent for $\tilde Q$ is retrieved similarly as provided in \citep{louart2018concentration}.
Our objective is then to find, based on this result, a deterministic equivalent for the random matrix $ A^{\frac 12}\tilde{Q}A^{\frac 12}Z$. To this end, we evaluate the scalar quantity $\mathbb{E}[u^\trans A^{\frac 12}\tilde{Q}A^{\frac 12}Z v]$ for any deterministic vector $u\in\mathbb{R}^{kp}$ and $v\in\mathbb{R}^{n}$ such that $\|u\|=1$ and $\|v\|=1$, which we can write
\begin{align}
\label{eq:mean}
\mathbb{E}\left[u^\trans A^{\frac 12}\tilde{Q}A^{\frac 12}Z v\right]=\sum_{i=1}^n v_i\mathbb{E}\left[u^\trans A^{\frac 12}\tilde{Q}A^{\frac 12}z_i \right].
\end{align}

Furthermore, let us define for convenience the matrix $Z_{-i}$, which is the matrix $Z$ with a vector of $\textbf{0}$ on its $i$-th column such that ${ZZ^\trans=Z_{-i}Z_{-i}^\trans+z_{i}z_{i}^\trans}$.
Using the Sherman-Morrison matrix inversion lemma (i.e., ${(A+uv^\trans)^{-1}=A^{-1}-\frac{A^{-1}uv^\trans A^{-1}}{1+v^\trans A^{-1}u}}$), we find:
\begin{align}
    \tilde Q&=\left(\frac{A^{\frac 12}ZZ^{\trans}A^{\frac 12}}{kp}+I_{kp}\right)^{-1}=\tilde{Q}_{-i}-\frac{1}{kp}\frac{\tilde{Q}_{-i}A^{\frac 12}z_iz_i^\trans A^{\frac 12}\tilde{Q}_{-i}}{1+\frac{1}{kp}z_i^\trans A^{\frac 12}\tilde{Q}_{-i}A^{\frac 12}z_i}\label{eq:shermann1}
\end{align}
with $\tilde{Q}_{-i}=(\frac{A^{\frac 12}Z_{-i}Z_{-i}^{\trans}A^{\frac 12}}{kp}+I_{kp})^{-1}$.
Furthermore,
\begin{align}
\label{eq:shermann2}
    \tilde{Q}A^{\frac 12}z_i&=\frac{\tilde{Q}_{-i}A^{\frac 12}z_i}{1+\frac{1}{kp}z_i^\trans A^{\frac 12}\tilde{Q}_{-i}A^{\frac 12}z_i}.
\end{align}

 Plugging Equation~\eqref{eq:shermann2} into Equation~\eqref{eq:mean} leads to 
 \begin{align}
\mathbb{E}\left[u^\trans A^{\frac 12}\tilde{Q}A^{\frac 12}Z v\right]=\sum_{i=1}^n v_i\mathbb{E}\left[u^\trans \frac{A^{\frac 12}\tilde{Q}_{-i}A^{\frac 12}z_i}{1+\frac{1}{kp}z_i^\trans A^{\frac 12}\tilde{Q}_{-i}A^{\frac 12}z_i} \right].
\end{align}
Moreover, following the same line of reasoning as in \citep[Proposition~A.3]{seddik2020random}, based on Assumption~\ref{ass:distribution} and tools from concentration of measure theory (see also \citep{ledoux2001concentration,louart2018random}), one can show that:
\begin{align}
\label{eq:approximation_delta}
    \sum_{i=1}^n v_i\mathbb{E}\left[u^\trans \frac{A^{\frac 12}\tilde{Q}_{-i}A^{\frac 12}z_i}{1+\frac{1}{kp}z_i^\trans A^{\frac 12}\tilde{Q}_{-i}A^{\frac 12}z_i} \right]=\sum_{i=1}^n v_i\mathbb{E}\left[u^\trans \frac{A^{\frac 12}\tilde{Q}_{-i}A^{\frac 12}z_i}{1+\bm\delta_{ij}} \right]+\mathcal{O}\left(\sqrt{\frac{\log p}{ p}}\right)
\end{align}
with $\bm\delta_{ij}\equiv\mathbb{E}\left[\frac{1}{kp}z_i^\trans A^{\frac 12}\tilde{Q}_{-i}A^{\frac 12}z_i\right]$.
Note that $\bm\delta_{ij}$ can be estimated as the solution of the fixed point equation 
\begin{align*}
    \bm\delta_{ij}=\frac{1}{kp}\mathbb{E}\left[\tr\left(A^{\frac 12}z_iz_i^\trans A^{\frac 12}\tilde{Q}_{-i}\right)\right]=\frac{1}{kp}\tr\left(\mathbb{C}_{ij}\bar{\tilde{Q}}\right)+\mathcal{O}\left(\frac{1}{\sqrt{p}}\right)
\end{align*}
since $z_i$'s are independent from $\tilde{Q}_{-i}$.

We then conclude that: 
 \begin{align*}
\mathbb{E}\left[u^\trans A^{\frac 12}\tilde{Q}A^{\frac 12}Z v\right]
=\sum_{i=1}^n v_iu^\trans \frac{\mathbb{E}\left[A^{\frac 12}\tilde{Q}_{-i}A^{\frac 12}z_i\right]}{1+\bm\delta_{ij}}+\mathcal{O}\left(\sqrt{\frac{\log p}{p}}\right)=u^\trans A^{\frac 12}\bar{\tilde{Q}}A^{\frac 12}M_{\bm\delta}v+\mathcal{O}\left(\sqrt{\frac{\log p}{p}}\right)
\end{align*}
where in the last equality, we used the fact that $\tilde{Q}_{-i}$ is independent from $z_i$.
This concludes the proof.

\paragraph{Second order deterministic equivalent}
We aim in the following section to prove that $Z^\trans A^{\frac 12}\tilde{Q}A^{\frac 12}S_{ij}A^{\frac 12}\tilde{Q}A^{\frac 12}Z\leftrightarrow JM_{\bm\delta}^\trans A^{\frac 12}(B_{ij}A^{\frac 12}M_{\bm\delta}J^\trans-\bar{\tilde{Q}}A^{\frac 12}M_{\bm\delta}W_{ij})+F_{ij}$.

Let us define for convenience $\mathcal{C}(i)$ the class of the $i$-th sample. Similarly as done for the first order deterministic equivalents, the focus will be on $\mathbb{E}[u^\trans Z^\trans A^{\frac 12}\tilde{Q}A^{\frac 12}S_{ij}A^{\frac 12}\tilde{Q}A^{\frac 12}Z v]$.

In order to obtain an estimate of this bilinear form, or equivalently here a deterministic equivalent for $Z^\trans A^{\frac 12}\tilde{Q}A^{\frac 12}S_{ij}A^{\frac 12}\tilde{Q}A^{\frac 12}Z$, one must isolate the contribution the off-diagonal versus diagonal elements of the latter matrix. Starting with the off-diagonal elements, using successively Equation~\eqref{eq:shermann1} and Equation~\eqref{eq:approximation_delta} on $i$ and $j$ , we have
\begin{align*}
    &\sum_{\substack{i,j=1\\i\neq j}}^n u_i v_i\mathbb{E}\left[z_{i}^\trans A^{\frac 12}\tilde{Q}A^{\frac 12}S_{ij}A^{\frac 12}\tilde{Q}A^{\frac 12}z_j\right]\nonumber \\
    &=\sum_{\substack{i,j=1\\i\neq j}}^n u_i v_i\mathbb{E}\left[\frac{z_{i}^\trans A^{\frac 12}\tilde{Q}_{-i}A^{\frac 12}S_{ij}A^{\frac 12}\tilde{Q}_{-j}A^{\frac 12}z_j}{(1+\bm\delta_{\mathcal{C}(i)})(1+\bm\delta_{\mathcal{C}(j)})}\right]+\mathcal{O}\left(\sqrt{\frac{\log p}{p}}\right) \\
    &=\sum_{\substack{i,j=1\\i\neq j}}^n u_i v_i\mathbb{E}\left[\frac{z_{i}^\trans A^{\frac 12}\tilde{Q}_{\substack{-i\\-j}}A^{\frac 12}S_{ij}A^{\frac 12}\tilde{Q}_{\substack{-j\\-i}}A^{\frac 12}z_j}{(1+\bm\delta_{\mathcal{C}(i)})(1+\bm\delta_{\mathcal{C}(j)})}-\frac{z_{i}^\trans A^{\frac 12}\tilde{Q}_{\substack{-i\\-j}}A^{\frac 12}S_{ij}A^{\frac 12}\tilde{Q}_{\substack{-j\\-i}}A^{\frac 12}z_iz_i^\trans A^{\frac 12}\tilde{Q}_{-j}A^{\frac 12}z_j}{kp(1+\bm\delta_{\mathcal{C}(i)})(1+\bm\delta_{\mathcal{C}(j)})}\right.\\
    &-\left.\frac{z_{i}^\trans A^{\frac 12}\tilde{Q}_{\substack{-i\\-j}}A^{\frac 12}z_jz_j^\trans A^{\frac 12}\tilde{Q}_{-i}A^{\frac 12}S_{ij}A^{\frac 12}\tilde{Q}_{-j}A^{\frac 12}z_j}{kp(1+\bm\delta_{\mathcal{C}(i)})(1+\bm\delta_{\mathcal{C}(j)})}\right]+\mathcal{O}\left(\sqrt{\frac{\log p}{p}}\right)\\
     &=\sum_{\substack{i,j=1\\i\neq j}}^n u_i v_i\mathbb{E}\left[\frac{z_{i}^\trans A^{\frac 12}\tilde{Q}_{\substack{-i\\-j}}A^{\frac 12}S_{ij}A^{\frac 12}\tilde{Q}_{\substack{-j\\-i}}A^{\frac 12}z_j}{(1+\bm\delta_{\mathcal{C}(i)})(1+\bm\delta_{\mathcal{C}(j)})}-\frac{z_{i}^\trans A^{\frac 12}\tilde{Q}_{\substack{-i\\-j}}A^{\frac 12}S_{ij}A^{\frac 12}\tilde{Q}_{\substack{-j\\-i}}A^{\frac 12}z_iz_i^\trans A^{\frac 12}\tilde{Q}_{\substack{-j\\-i}}A^{\frac 12}z_j}{kp(1+\bm\delta_{\mathcal{C}(i)})(1+\bm\delta_{\mathcal{C}(j)})(1+\bm\delta_{\mathcal{C}(i)})}\right.\\
     &-\left.\frac{z_{i}^\trans A^{\frac 12}\tilde{Q}_{\substack{-i\\-j}}A^{\frac 12}z_jz_j^\trans A^{\frac 12}\tilde{Q}_{\substack{-i\\-j}}A^{\frac 12}S_{ij}A^{\frac 12}\tilde{Q}_{\substack{-j\\-i}}A^{\frac 12}z_j}{kp(1+\bm\delta_{\mathcal{C}(i)})(1+\bm\delta_{\mathcal{C}(j)})(1+\bm\delta_{\mathcal{C}(j)})}\right.\\
     &+\left.\frac{1}{(kp)^2}\frac{z_{i}^\trans A^{\frac 12}\tilde{Q}_{\substack{-i\\-j}}A^{\frac 12}z_jz_j^\trans A^{\frac 12}\tilde{Q}_{-i}A^{\frac 12}S_{ij}A^{\frac 12}\tilde{Q}_{\substack{-j\\-i}}A^{\frac 12}z_iz_i^\trans A^{\frac 12}\tilde{Q}_{\substack{-j\\-i}}A^{\frac 12}z_j}{(1+\bm\delta_{\mathcal{C}(i)})(1+\bm\delta_{\mathcal{C}(j)})(1+\bm\delta_{\mathcal{C}(i)})}\right]+\mathcal{O}\left(\sqrt{\frac{\log p}{p}}\right)\\
     &=\sum_{\substack{i,j=1\\i\neq j}}^n u_i v_i\mathbb{E}\left[\frac{z_{i}^\trans A^{\frac 12}\tilde{Q}_{\substack{-i\\-j}}A^{\frac 12}S_{ij}A^{\frac 12}\tilde{Q}_{\substack{-j\\-i}}A^{\frac 12}z_j}{(1+\bm\delta_{\mathcal{C}(i)})(1+\bm\delta_{\mathcal{C}(j)})}-\frac{z_{i}^\trans A^{\frac 12}\tilde{Q}_{\substack{-i\\-j}}A^{\frac 12}S_{ij}A^{\frac 12}\tilde{Q}_{\substack{-j\\-i}}A^{\frac 12}z_iz_i^\trans A^{\frac 12}\tilde{Q}_{\substack{-j\\-i}}A^{\frac 12}z_j}{kp(1+\bm\delta_{\mathcal{C}(i)})(1+\bm\delta_{\mathcal{C}(j)})(1+\bm\delta_{\mathcal{C}(i)})}\right.\\
     &-\left.\frac{z_{i}^\trans A^{\frac 12}\tilde{Q}_{\substack{-i\\-j}}A^{\frac 12}z_jz_j^\trans A^{\frac 12}\tilde{Q}_{\substack{-i\\-j}}A^{\frac 12}S_{ij}A^{\frac 12}\tilde{Q}_{\substack{-j\\-i}}A^{\frac 12}z_j}{kp(1+\bm\delta_{\mathcal{C}(i)})(1+\bm\delta_{\mathcal{C}(j)})(1+\bm\delta_{\mathcal{C}(j)})}\right]+\mathcal{O}\left(\sqrt{\frac{\log p}{p}}\right).
\end{align*}
where the term $$\frac{1}{(kp)^2}\frac{z_{i}^\trans A^{\frac 12}\tilde{Q}_{\substack{-i\\-j}}A^{\frac 12}z_jz_j^\trans A^{\frac 12}\tilde{Q}_{-i}A^{\frac 12}S_{ij}A^{\frac 12}\tilde{Q}_{\substack{-j\\-i}}A^{\frac 12}z_iz_i^\trans A^{\frac 12}\tilde{Q}_{\substack{-j\\-i}}A^{\frac 12}z_j}{(1+\bm\delta_{\mathcal{C}(i)})(1+\bm\delta_{\mathcal{C}(j)})(1+\bm\delta_{\mathcal{C}(i)})}$$ 
is proved to be order $\mathcal{O}(\frac{1}{\sqrt{p}})$ using \citep[Lemma~A.2]{seddik2020random}.

As such, the ``sub-deterministic equivalent'' for the matrix $Z^\trans A^{\frac 12}\tilde{Q}A^{\frac 12}S_{ij}A^{\frac 12}\tilde{Q}A^{\frac 12}Z$ with diagonal elements discarded is
$JM_{\bm\delta}^\trans A^{\frac 12}B_{ij}A^{\frac 12}M_{\bm\delta}J^\trans-JM_{\bm\delta}^\trans A^{\frac 12}\bar{\tilde{Q}}A^{\frac 12}M_{\bm\delta}W_{ij}$,
with
\begin{align*}
    &A^{\frac 12}\tilde{Q}A^{\frac 12}S_{ij}A^{\frac 12}\tilde{Q}A^{\frac 12}\leftrightarrow B_{ij} \\
    &W_{ij}=[w_{11},\ldots,w_{km}]^\trans,\quad w_{sl}=\left[\textbf{0}_{n_{11}+\ldots+n_{(s-1)l}}^\trans,\frac{2\tr\left( B_{ij}\mathbb{C}_{sl}\right)}{kp(1+\bm\delta_{sl})}\mathbb{1}_{n_{sl}}^\trans,\textbf{0}_{n_{(s+1)l}+\ldots+n_{km}}^\trans\right]^\trans
\end{align*}
(note that this matrix estimator of the off-diagonal elements is not zero on the diagonal; however its diagonal elements vanish as $n,p\to\infty$ and may thus be maintained without affecting the final result).

We next need to handle the contribution of the diagonal elements. These are obtained similarly as the off-diagonal elements and lead to the deterministic diagonal matrix equivalent
\begin{align*}
    F_{ij}=\sum_{i',j'} \frac{\tr(\mathbb{C}_{i'j'}B_{ij})}{(1+\bm\delta_{i'j'})^2}e_{i'j'}^{[mk]}{e_{i'j'}^{[mk]}}^\trans.
\end{align*}

Put together, the complete deterministic equivalent is then:
$$JM_{\bm\delta}^\trans A^{\frac 12}B_{ij}A^{\frac 12}M_{\bm\delta}J^\trans-JM_{\bm\delta}^\trans A^{\frac 12}\bar{\tilde{Q}}A^{\frac 12}M_{\bm\delta}W_{ij}+\sum_{i',j'} \frac{\tr(\mathbb{C}_{i'j'}B_{ij})}{(1+\bm\delta_{i'j'})^2}e_{i'j'}^{[mk]}{e_{i'j'}^{[mk]}}^\trans.$$
This proves that  $Z^\trans A^{\frac 12}\tilde{Q}A^{\frac 12}S_{ij}A^{\frac 12}\tilde{Q}A^{\frac 12}Z\leftrightarrow JM_{\bm\delta}^\trans A^{\frac 12}(B_{ij}A^{\frac 12}M_{\bm\delta}J^\trans-\bar{\tilde{Q}}A^{\frac 12}M_{\bm\delta}W_{ij})+F_{ij}$. 

\paragraph{Calculus of $B_{ij}$.}
To conclude the proof of Lemma~\ref{lem:eq}, it then remains to find a deterministic equivalent for $\tilde{Q}A^{\frac 12}S_{ij}A^{\frac 12}\tilde{Q}$ which we denote by $B_{ij}$.
Similar derivations and results are provided in detail in \citep{louart2018random}. For conciseness, we sketch the most important elements of the proof. The interested reader can refer to \citep[Section~5.2.3]{louart2018random}.
Let us evaluate $\mathbb{E}[u^\trans\tilde{Q}A^{\frac 12}S_{ij}A^{\frac 12}(\tilde{Q}-\bar{\tilde{Q}})v]$ for any deterministic vector $u\in\mathbb{R}^{n}$ and $v\in\mathbb{R}^{n}$ such that $\|u\|=1$ and $\|v\|=1$ by using successively Equations~\eqref{eq:approximation_delta}~and~\eqref{eq:shermann1}:
\begin{align*}
\mathbb{E}\left[u^\trans\tilde{Q}A^{\frac 12}S_{ij}A^{\frac 12}(\tilde{Q}-\bar{\tilde{Q}})v\right]&=\mathbb{E}\left[u^\trans\tilde{Q}A^{\frac 12}S_{ij}A^{\frac 12}\tilde{Q}(-\frac{A^{\frac 12}ZZ^\trans A^{\frac 12}}{kp}+\mathbb{C}_{\bm\delta})\bar{\tilde{Q}}v\right]\\
&=-\frac{1}{kp}\sum\limits_{i}\mathbb{E}\left[\frac{u^\trans \tilde{Q}A^{\frac 12}S_{ij}A^{\frac 12}\tilde{Q}_{-i}A^{\frac 12}z_iz_i^\trans A^{\frac 12}\bar{\tilde{Q}}v}{1+\bm\delta_{ij}}\right]\\
&+\mathbb{E}\left[u^\trans \tilde{Q}A^{\frac 12}S_{ij}A^{\frac 12}\tilde{Q}_{-i}\mathbb{C}_{\bm\delta}\bar{\tilde{Q}}v\right]\\
&-\frac{1}{kp}\mathbb{E}\left[u^\trans \tilde{Q}A^{\frac 12}S_{ij}A^{\frac 12}\tilde{Q}_{-i}A^{\frac 12}z_iz_i^\trans A^{\frac 12} \tilde{Q}\mathbb{C}_{\bm\delta}\bar{\tilde{Q}}v\right]+\mathcal{O}\left(\sqrt{\frac{\log p}{p}}\right)
\end{align*}
where $\mathbb{C}_{\bm\delta}=\sum\limits_{ij}\frac{c_{ij}}{c_0}\frac{\mathbb{C}_{ij}}{1+\bm\delta_{ij}}$.
Using Assumption~\ref{ass:distribution} and following the work of \cite{louart2018concentration}, 
$${\frac{1}{kp}\mathbb{E}\left[u^\trans \tilde{Q}A^{\frac 12}S_{ij}A^{\frac 12}\tilde{Q}_{-i}A^{\frac 12}z_iz_i^\trans  A^{\frac 12}\tilde{Q}\mathbb{C}_{\bm\delta}\bar{\tilde{Q}}v\right]=\mathcal{O}(\frac{1}{p})}.$$ 
Furthermore,
\begin{align*}
\mathbb{E}\left[u^\trans\tilde{Q}A^{\frac 12}S_{ij}A^{\frac 12}(\tilde{Q}-\bar{\tilde{Q}})v\right]&=-\frac{1}{kp}\sum\limits_{i}\mathbb{E}\left[\frac{u^\trans \tilde{Q}_{-i}A^{\frac 12}S_{ij}A^{\frac 12}\tilde{Q}_{-i}A^{\frac 12}z_iz_i^\trans A^{\frac 12}\bar{\tilde{Q}}v}{1+\bm\delta_{ij}}\right]\\
&+\frac{1}{kp}\sum\limits_{i}\mathbb{E}\left[\frac{u^\trans \tilde{Q}_{-i}A^{\frac 12} z_iz_i^\trans A^{\frac 12} \tilde{Q}_{-i}A^{\frac 12}S_{ij}A^{\frac 12}\tilde{Q}_{-i}A^{\frac 12} z_iz_i^\trans A^{\frac 12}\bar{\tilde{Q}}v}{kp(1+\bm\delta_{ij})^2}\right]\\
&+\mathbb{E}\left[u^\trans \tilde{Q}A^{\frac 12}S_{ij}A^{\frac 12}Q_{-i}\mathbb{C}_{\bm\delta}\bar{\tilde{Q}}v\right]+\mathcal{O}\left(\sqrt{\frac{\log p}{p}}\right)\\
&=\frac{1}{kp}\sum\limits_{i}\mathbb{E}\frac{\tr\left(\mathbb{C}_{\mathcal{C}(i)}\tilde{Q}A^{\frac 12}S_{ij}A^{\frac 12}\tilde{Q}\right)}{(1+\bm\delta_{\mathcal{C}(i)})^2}\mathbb{E}\left[u^\trans \bar{\tilde{Q}}\mathbb{C}_{\mathcal{C}(i)}\bar{\tilde{Q}}v\right]+\mathcal{O}\left(\sqrt{\frac{\log p}{p}}\right)
\end{align*}
where $-\frac{1}{kp}\sum_{i}\mathbb{E}[\frac{u^\trans \tilde{Q}_{-i}A^{\frac 12}S_{ij}A^{\frac 12}\tilde{Q}_{-i}z_iz_i^\trans\bar{\tilde{Q}}v}{1+\bm\delta_{ij}}]+\mathbb{E}[u^\trans \tilde{Q}_{-i}A^{\frac 12}S_{ij}A^{\frac 12}Q\mathbb{C}_{\bm\delta}\bar{\tilde{Q}}v]=\mathcal{O}\left(\frac{1}{\sqrt{p}}\right)$, following again \citep{louart2018concentration}.

Let us next denote $d_{ab}=\frac{n_{ab}}{kp(1+\bm\delta_{ab})^2}$. We then have the following identity for $\mathbb{E}[\tilde{Q}A^{\frac 12}S_{ij}A^{\frac 12}\tilde{Q}]$:
\begin{equation}
\label{eq:QAQ}
         \mathbb{E}[\tilde{Q}A^{\frac 12}S_{ij}A^{\frac 12}\tilde{Q}]=\bar{\tilde{Q}}A^{\frac 12}S_{ij}A^{\frac 12}\bar{\tilde{Q}}+\sum\limits_{i'=1}^{k}\sum\limits_{j'=1}^m \frac{d_{i'j'}}{kp} \mathbb{E}\left[\tr\left(\mathbb{C}_{i'j'}\tilde{Q}A^{\frac 12}S_{ij}A^{\frac 12}\tilde{Q}\right)\right]\bar{\tilde{Q}}\mathbb{C}_{i'j'}\bar{\tilde{Q}}+\mathcal{O}_{\|\cdot\|}\left(\sqrt{\frac{\log p}{p}}\right)
\end{equation}
Further introduce the two matrices $\bar{T}$ and $T$ defined as:
$\bar{T}_{ab,ij}=\frac{1}{kp}\tr(\mathbb{C}_{ab}\bar{\tilde{Q}}A^{\frac 12}S_{ij}A^{\frac 12}\bar{\tilde{Q}})$ and ${T_{ij,i'j'}=\frac{1}{kp}\mathbb{E}[\tr\left(\mathbb{C}_{i'j'}\tilde{Q}A^{\frac 12}S_{ij}A^{\frac 12}\tilde{Q}\right)]}$.
These satisfy the following equations (i.e., by right multiplying Equation~\eqref{eq:QAQ} by $\mathbb{C}_{i'j'}$ and taking the trace)
$$T_{i'j'}^{(ij)}=\bar{T}_{ij,i'j'}+\sum\limits_{e=1}^k\sum\limits_{f=1}^m d_{ef}T_{ef,ij}\mathcal{T}_{i'j',ef},$$ so that
$T=\bar{T}(I_{k}-D\mathcal{T})^{-1}$ where $D=\mathcal{D}_{[d_{11},\ldots,d_{km}]^\trans}$ and $\mathcal{T}_{ef,i'j'}=\frac{1}{kp}\tr(\mathbb{C}_{ef}\bar{\tilde{Q}}\mathbb{C}_{i'j'}\bar{\tilde{Q}})$. 

Finally,
\begin{equation}
\label{eq:equation_kappa}
\tilde{Q}A^{\frac 12}S_{ij}A^{\frac 12}\tilde{Q}\leftrightarrow\bar{\tilde{Q}}A^{\frac 12}S_{ij}A^{\frac 12}\bar{\tilde{Q}}+\sum\limits_{i'=1}^k\sum\limits_{j'=1}^m d_{i'j'} T_{i'j'}^{(ij)}\mathbb{E}[\bar{\tilde{Q}}\mathbb{C}_{i'j'}\bar{\tilde{Q}}]
\end{equation}
with $T=\bar{T}(I_{k}-D\mathcal{T})^{-1}$.

\subsection{Proof of Theorem~\ref{th:main}}
\label{app:th}
\paragraph{Proof of the convergence in distribution.}

Under a Gaussian mixture assumption for the input data $X$, the convergence in distribution of the statistics of the classification score $g_{i}({\bf x})$ is identical to the central limit theorem derived in \citep[Appendix~B]{liao2019large} by writing the classification score $g_i({\bf x})$ in polynomial form of a Gaussian vector and by resorting to the Lyapounov central limit theorem \citep{billingsley2008probability}.

Since conditionally on  the training data $X$, the classification score $g(x)$ is expressed as the projection of the deterministic vector $W$ on the concentrated random vector $\bf x$, the CLT for concentrated vector unfolds by proving that projections of deterministic vector on concentrated random vector is asymptotically gaussian. This is ensured by the following result.
\begin{theorem}[CLT for concentrated vector \citep{klartag2007central,fleury2007stability}]
\label{th:CLT_concentrated}
If ${\bf x}$ is a concentrated random vector with $\mathbb{E}[{\bf x}]=0$, $\mathbb{E}[{\bf x}{\bf x}^\trans]=I_p$ with an observable diameter of order $\mathcal{O}(1)$ and $\sigma$ be the uniform measure on the sphere $\mathcal{S}^{p-1}\subset\mathbb{R}^{p}$ of
radius $1$, then for any integer $k$, small compared to $p$, there exist two constants $C,c$ and a set $\Theta\subset (\mathcal{S}^{p-1})^k$ such that $\underbrace{\sigma\otimes\ldots\otimes\sigma}_k(\Theta)\geq 1-\sqrt{p}Ce^{-c\sqrt{p}}$ and $\forall\theta=(\theta_1,\ldots,\theta_k)\in\Theta$,
\begin{equation*}
    \forall a\in\mathbb{R}^k : \sup\limits_{t\in\mathbb{R}}|\mathbb{P}(a^\trans\theta^\trans {\bf x}\geq t)-G(t)|\leq Cp^{-\frac 14}.
\end{equation*}
with $G(t)$ the cumulative distribution function of $\mathcal{N}(0,1)$
\end{theorem}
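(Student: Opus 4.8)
The statement is the classical ``most one-dimensional marginals are Gaussian'' phenomenon (Sudakov, Diaconis--Freedman, Klartag), and the plan is to establish it through the characteristic function, exploiting the two independent sources of randomness: the concentrated vector $\mathbf{x}$ and the tuple of uniform directions $\theta=(\theta_1,\dots,\theta_k)$. Fix $a\in\mathbb{R}^k$ with $|a|=1$ (the general case follows by scaling, with $G$ the CDF of $\mathcal{N}(0,|a|^2)$), and set $w=w(\theta,a)=\sum_{i=1}^k a_i\theta_i$, so that $a^\trans\theta^\trans\mathbf{x}=w^\trans\mathbf{x}$. The first, routine, step is variance control: since $\mathbb{E}[\mathbf{x}\mathbf{x}^\trans]=I_p$ one has $\mathbb{E}_{\mathbf{x}}[(w^\trans\mathbf{x})^2]=|w|^2=\sum_{i,j}a_ia_j\langle\theta_i,\theta_j\rangle$, and concentration of measure on the sphere gives $|\langle\theta_i,\theta_j\rangle|=O(\sqrt{\log p/p})$ for $i\neq j$ off a set of exponentially small $\sigma^{\otimes k}$-measure; hence $|w|^2=1+O(k^2\sqrt{\log p/p})$ uniformly over the unit sphere of $\mathbb{R}^k$, so the target variance is $1+o(1)$.

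The heart is to show that $\Psi(\xi,\theta)\equiv\mathbb{E}_{\mathbf{x}}[e^{i\xi w^\trans\mathbf{x}}]$ is close to $e^{-\xi^2/2}$ for most $\theta$, which I would obtain by a first- and second-moment argument over the direction randomness. For the mean, conditioning on $\mathbf{x}$ and using the Maxwell--Poincaré--Borel lemma (the marginal of the uniform measure on $\mathcal{S}^{p-1}$ along any fixed $v$ is asymptotically $\mathcal{N}(0,|v|^2/p)$, i.e. $\sqrt{p}\,\theta_i^\trans v$ is asymptotically standard Gaussian), together with independence of the $\theta_i$, shows that for a typical $\mathbf{x}$ the conditional law of $w^\trans\mathbf{x}$ is close to $\mathcal{N}(0,|a|^2\|\mathbf{x}\|^2/p)$; the Lipschitz-concentration hypothesis (observable diameter $O(1)$, so $\mathbb{E}\|\mathbf{x}\|^2=p$ yields the thin-shell estimate $\|\mathbf{x}\|^2/p=1+O(p^{-1/2})$ with high probability) collapses this to $\mathcal{N}(0,1)$, whence $\mathbb{E}_\theta[\Psi(\xi,\theta)]=e^{-\xi^2/2}+o(1)$. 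For the fluctuation, I would bound $\mathrm{Var}_\theta(\Psi(\xi,\theta))$ by a fourth-moment computation, again reducible (after conditioning on $\mathbf{x}$) to estimates of the inner products $\langle\theta_i,\theta_j\rangle$ and of $\|\mathbf{x}\|$; this is exactly the quantitative thin-shell input of \citep{fleury2007stability,klartag2007central} and carries the real analytic weight. Chebyshev in $\theta$ then gives $\Psi(\xi,\theta)=e^{-\xi^2/2}+o(1)$ off an exponentially small set of directions, for each fixed $\xi$ in a bounded window.

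To pass from characteristic functions to Kolmogorov distance I would invoke Esseen's smoothing inequality, $\sup_t|\mathbb{P}(w^\trans\mathbf{x}\le t)-G(t)|\le C\left(\int_{-T}^{T}|\Psi(\xi,\theta)-e^{-\xi^2/2}|\,\frac{d\xi}{|\xi|}+\frac{1}{T}\right)$, and balance the truncation level $T$ against the Fourier error to extract the stated $O(p^{-1/4})$ rate, the square-root loss relative to the thin-shell bound being the origin of the exponent $\tfrac14$. Finally, the quantifier ``$\forall a$'' is handled by a net argument: the unit sphere of $\mathbb{R}^k$ ($k\ll p$) admits a $p^{-1}$-net of polynomial cardinality, the bound is Lipschitz in $a$ through $w$, and the per-$a$ exceptional $\sigma^{\otimes k}$-measure is $\sqrt{p}\,Ce^{-c\sqrt{p}}$, so a union bound over the net both preserves the exponentially small exceptional set and upgrades the estimate to all $a$ by continuity. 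The main obstacle is the variance-over-directions bound: controlling $\mathrm{Var}_\theta(\Psi)$ uniformly in $\xi$ is precisely the content of the cited quantitative thin-shell/CLT results, whereas the remaining ingredients (spherical concentration, Poincaré--Borel, Esseen, nets) are standard.
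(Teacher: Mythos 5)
The paper does not prove this statement at all: Theorem~3 is imported verbatim, with attribution, from the cited works of Klartag and Fleury--Gu\'edon--Paouris, and is used in the appendix purely as a black box to conclude that projections $W^\trans {\bf x}$ of a concentrated vector onto deterministic directions are asymptotically Gaussian. Your sketch therefore cannot coincide with ``the paper's proof''; what it does is reconstruct, essentially correctly, the classical argument behind the cited result: spherical concentration to control $|w|^2=\sum_{i,j}a_ia_j\langle\theta_i,\theta_j\rangle$, the Poincar\'e--Borel reduction of the conditional law of $w^\trans{\bf x}$ given ${\bf x}$ to $\mathcal N(0,|a|^2\|{\bf x}\|^2/p)$, a second-moment bound over the direction randomness, Esseen smoothing to convert characteristic-function proximity into the Kolmogorov bound with the $p^{-1/4}$ rate, and a net over $a$ (noting that a union bound over a mesh-$p^{-1}$ net of cardinality $(3p)^k$ keeps the exceptional mass of order $\sqrt p\,e^{-c\sqrt p}$ only when $k\log p\ll\sqrt p$, a slightly stronger requirement than the loosely stated ``$k$ small compared to $p$''). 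Two remarks sharpen your sketch. First, there is an internal tension in where you locate the ``real analytic weight'': in the setting of this theorem the thin-shell estimate $\|{\bf x}\|^2/p=1+O(p^{-1/2})$ is immediate from the assumed Lipschitz concentration (as you yourself observe earlier), so the deep thin-shell machinery of the cited papers -- whose difficulty is precisely to establish this \emph{without} a concentration hypothesis, for general log-concave measures -- is not needed here; under the hypotheses as stated, the variance-over-directions bound reduces to the same elementary concentration of $\|{\bf x}-{\bf x}'\|^2/p$ for two independent copies, and the whole argument is genuinely easier than in the cited references. Second, you correctly flagged the implicit normalization $|a|=1$ (the statement as printed is false for unnormalized $a$); note also that the statement compares the survival function $\mathbb P(a^\trans\theta^\trans{\bf x}\geq t)$ with the CDF $G(t)$ rather than with $1-G(t)=G(-t)$, a transcription slip in the paper that any complete proof would silently correct.
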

Then the result unfolds naturally.

\paragraph{Statistical mean of the classification scores.}
Using the definition of the score in \eqref{eq:score_class}, the average output score $g_i({\bf x})$ for ${\bf x}\in\mathcal C_j$ is
\begin{equation*}
\mathbb E[g_i({\bf x})]=\mathbb{E}\left[\frac{1}{kp}\left({e_{i}^{[k]}}\otimes\mu_{ij}\right)^\trans A^{\frac 12}\tilde{Q}A^{\frac 12}Z(Y-Pb)\right]+b_i.
\end{equation*}
Using Lemma~\ref{lem:eq}, this can be further developed as:
\begin{equation}
    \label{eq:mean_score}
    \mathbb E[g_i({\bf x})]=\frac{1}{kp}\left({e_{i}^{[k]}}\otimes\mu_{ij}\right)^\trans A^{\frac 12}\Bar{\Tilde{Q}}A^{\frac 12}M_{\bm\delta}J^{\trans}(Y-P\bar{b})+b_i + o(1).
\end{equation}
Since $\mathbb{C}_{ij}=A^{\frac 12}(e_i^{[k]}{e_i^{[k]}}^\trans\otimes(\Sigma_{ij}+\mu_{ij}\mu_{ij}^\trans) )A^{\frac 12}$ is a finite rank update of $\Sigma_{ij}$, one can further use Woodbury identity  matrix (i.e., $\left(A+UCV\right)^{-1}=A^{-1}+A^{-1}UC(I+VA^{-1}U)VA^{-1}$ for invertible square $A$) to write ${\bar{\tilde{Q}}=\bar{\tilde{Q}}_0-\bar{\tilde{Q}}_0\mathbb{M}(I_{kp}+\mathbb{M}^{\trans}\bar{\tilde{Q}}_0\mathbb{M})^{-1}\mathbb{M}^\trans\bar{\tilde{Q}}_0}$, with
\begin{align*}
    \bar{\tilde{Q}}_0&=\left[\sum\limits_{i=1}^k\sum\limits_{j=1}^m\left(\mathcal{D}_{\gamma}+\lambda\mathbb{1}_{k}\mathbb{1}_{k}\right)^{\frac 12}e_{i}e_{i}^\trans\left(\mathcal{D}_{\gamma}+\lambda\mathbb{1}_{k}\mathbb{1}_{k}\right)^{\frac 12}\otimes {\bm\delta}_{ij}^{[mk]}\Sigma_{ij}+I_{kp}\right]^{-1} \\
    \mathbb{M}&=A^{\frac 12}M\mathcal{D}_{{\bm\delta}^{[mk]}}^{\frac 12}
\end{align*}
with ${\bm\delta}^{[mk]}=[{\bm\delta}_{i1}^{[mk]},\ldots,{\bm\delta}_{mk}^{[mk]}]$ for ${\bm\delta}_{ij}^{[mk]}=\frac{c_{ij}}{c_0(1+\bm\delta_{ij}^{[mk]})}$.
Plugging the expression of $\bar{\tilde{Q}}$ in Equation~\eqref{eq:mean_score}, we obtain
\begin{align*}
    \mathbb E[g_i({\bf x})]&=e_{ij}^{\trans}\mathcal{D}_{{\bm\delta}^{[mk]}}^{-\frac 12}\mathbb{M}^{\trans}\bar{\tilde{Q}}\mathbb{M}\mathcal{D}_{{\bm\delta}^{[mk]}}^{\frac 12}\mathring{\mathcal{Y}}+b_i+o(1)\\
    &=e_{ij}^{\trans}\mathcal{D}_{{\bm\delta}^{[mk]}}^{-\frac 12}\left(I_{mk}-\Gamma\right)\mathcal{D}_{{\bm\delta}^{[mk]}}^{\frac 12}\mathring{\mathcal{Y}}+b_i+o(1)
\end{align*}
with $\Gamma=(I_{mk}+\mathbb{M}^\trans\bar{\tilde{Q}}_0\mathbb{M})^{-1}$ and $e_{ij}^{[mk]}$ is the canonical vector.
Finally, to be exhaustive without going into the technical details,\footnote{Due to Remark~\ref{rem:on_centering}, $b_i$ can take any arbitrary value since only the decision threshold but not the performance is sensitive to a shift of $Y$.} let us conclude by remarking that one can show using the deterministic equivalent for $Q$ provided in \citep{louart2018concentration} that $b_i=\frac{\mathbb{1}_{n_i}^\trans Y_i}{n_i}+\mathcal{O}(p^{-\frac12})=\mathcal{Y}-\mathring{\mathcal{Y}}+\mathcal{O}(p^{-\frac12})$.

Finally, letting $\mathcal{m}_{ij}$ be the above expression of $\mathbb E[g_i({\bf x})]$ without the trailing $o(1)$ and $\mathcal{m}=[\mathcal{m}_{11},\ldots,\mathcal{m}_{km}]^\trans$, one concludes using the notations of Theorem~\ref{th:main} that
\begin{equation*}
    \mathcal{m}=\mathcal{Y}-\mathcal{D}_{{\bm\delta}^{[mk]}}^{-\frac 12}\Gamma\mathcal{D}_{{\bm\delta}^{[mk]}}^{\frac 12}\mathring{\mathcal{Y}}
\end{equation*}
as desired.
\paragraph{Variance of the classification score.}


Using Equation~\eqref{eq:score_class}, for ${\bf x}\in\mathcal C_j$, the covariance of the score $g_i({\bf x})$ is given by
\begin{equation*}
    {\rm Cov}[g_i({\bf x})]=\mathbb{E}\left[\frac{1}{(kp)^2}(Y-Pb)^\trans Z^\trans A^{\frac 12} \tilde{Q}A^{\frac 12}S_{ij}A^{\frac 12}\tilde{Q}A^{\frac 12}Z(Y-Pb)\right]
\end{equation*}
Using the deterministic equivalent of $Z^\trans A^{\frac 12} \tilde{Q}A^{\frac 12}S_{ij}A^{\frac 12}\tilde{Q}A^{\frac 12}Z$  in Lemma 1, the expression further reads 
\begin{align*}
     {\rm Cov}[g_i({\bf x})]=&=\frac{1}{(kp)^2}(Y-P\bar{b})^\trans \left(JM_{\bm\delta}^{\trans}A^{\frac 12}B_{ij}A^{\frac 12}M_{\bm\delta}J+F_{ij}\right)(Y-P\bar{b})\\
     &-\frac{1}{p^2}(Y-P\bar{b})^{\trans}JM_{\bm\delta}^{\trans}A^{\frac 12}\Bar{\Tilde{Q}}A^{\frac 12}M_{\bm\delta}W_{ij}(Y-P\bar{b}).
\end{align*}
Similarly to the calculus performed for $\mathbb E[g_i({\bf x})]$, using again ${\bar{\tilde{Q}}=\bar{\tilde{Q}}_0-\bar{\tilde{Q}}_0\mathbb{M}(I_{kp}+\mathbb{M}^{\trans}\bar{\tilde{Q}}_0\mathbb{M})^{-1}\mathbb{M}^\trans\bar{\tilde{Q}}_0}$, similar algebraic manipulations lead to:
\begin{align*}
    {\rm Cov}[g_i({\bf x})]=&=\mathring{\mathcal{Y}}^\trans\mathcal{D}_{{\bm\delta}^{[mk]}}^{\frac 12}\mathbb{M}^{\trans}B_{ij}\mathbb{M}\mathcal{D}_{{\bm\delta}^{[mk]}}^{\frac 12}\mathring{\mathcal{Y}}+\mathring{\mathcal{Y}}^{\trans}\mathcal{D}_{{\bm\delta}^{[mk]}}^{\frac 12}\mathcal{D}_{\kappa_{ij,.}}\mathcal{D}_{{\bm\delta}^{[mk]}}^{\frac 12}\mathring{\mathcal{Y}}-\mathring{\mathcal{Y}}^{\trans}\mathcal{D}_{{\bm\delta}
   ^{[mk]}}^{\frac 12}\mathbb{M}^{\trans}\bar{\tilde{Q}}\mathbb{M}\mathcal{D}_{\frac{\kappa_{ij,.}}{{\bm\delta}^{[mk]}}}\mathcal{D}_{{\bm\delta}^{[mk]}}^{\frac 12}\mathring{\mathcal{Y}}\\
    &=\mathcal{Y}^\trans\mathcal{D}_{{\bm\delta}^{[mk]}}^{\frac 12}\Gamma\mathbb{M}^{\trans}\bar{\tilde{Q}}_0\mathbb{V}_{ij}\bar{\tilde{Q}}_0\mathbb{M}\Gamma\mathcal{D}_{{\bm\delta}^{[mk]}}^{\frac 12}\mathcal{Y}+\mathcal{Y}^\trans\mathcal{D}_{{\bm\delta}^{[mk]}}^{\frac 12}(I-\Gamma)\mathcal{D}_{\kappa_{ij,.}}(I-\Gamma)+\\
    &\mathring{\mathcal{Y}}^{\trans}\mathcal{D}_{{\bm\delta}^{[mk]}}^{\frac 12}\mathcal{D}_{\kappa_{ij,.}}\mathcal{D}_{{\bm\delta}
   ^{[mk]}}^{\frac 12}\mathring{\mathcal{Y}}-
    2\mathring{\mathcal{Y}}^{\trans}\mathcal{D}_{{\bm\delta}^{[mk]}}^{\frac 12}(I-\Gamma)\mathcal{D}_{\kappa_{ij,.}}\mathcal{D}_{{\bm\delta}^{[mk]}}^{\frac 12}\mathring{\mathcal{Y}}\\
    &=\mathcal{Y}^\trans\mathcal{D}_{{\bm\delta}^{[mk]}}^{\frac 12}\left(\Gamma\mathcal{D}_{\kappa_{ij,.}}\Gamma+\Gamma\mathbb{M}^{\trans}\bar{\tilde{Q}}_0\mathbb{V}_{ij}\bar{\tilde{Q}}_0\mathbb{M}\Gamma\right)\mathcal{D}_{{\bm\delta}^{[mk]}}^{\frac 12}\mathcal{Y}
\end{align*}
with ${\mathbb{V}_{ij}=A^{\frac 12}S_{ij}A^{\frac 12}+\sum\limits_{i'=1}^k\sum\limits_{j'=1}^m {\bm\delta}_{i'j'}^{[mk]}\kappa_{ij,i'j'}A^{\frac 12}S_{i'j'}A^{\frac 12}}$ and $\kappa_{ij,.}=[\kappa_{ij,11},\ldots,\kappa_{ij,k2}]$ with $\kappa_{ij,i'j'}={d_{i'j'}T_{ij,i'j'}}/{{\bm\delta}_{i'j'}^{[mk]}}$.

\subsubsection{Particular Case}
In the case of binary classification ($m=2$) and for $\Sigma_{ij}=I_p$, we have the simplification:
\begin{align*}
    \mathbb{M}&=\sum\limits_{i,j}\left(\mathcal{D}_{\gamma}+\lambda\mathbb{1}_{k}\mathbb{1}_{k}^{\trans}\right)^{\frac 12}e_i^{[k]}{e_i^{[k]}}^{\trans}\otimes \sqrt{\tilde{\bm\delta}_{i}}\mathring{\mu}_{ij}\\
    &=\sum\limits_{i}\left(\mathcal{D}_{\gamma}+\lambda\mathbb{1}_{k}\mathbb{1}_{k}^{\trans}\right)^{\frac 12}e_i^{[k]}{e_i^{[k]}}^{\trans}\otimes \left(\frac{\left[\frac{c_{i2}\sqrt{c_{i1}}}{c_i},-\frac{c_{i1}\sqrt{c_{i2}}}{c_i}\right]}{c_0(1+\bm\delta_i)}\otimes\Delta\mu_{i}\right).
\end{align*}
Moreover, $\bar{\tilde{Q}}_0=[(\mathcal{D}_{\gamma}+\lambda\mathbb{1}_k\mathbb{1}_k^{\trans})^{\frac 12}\mathcal{D}_{\tilde{\bm\delta}}\left(\mathcal{D}_{\gamma}+\lambda\mathbb{1}_k\mathbb{1}_k^{\trans}\right)^{\frac 12}+I_k]^{-1}\otimes I_p$, so that
\begin{align*}
    \mathbb{M}^\trans\bar{\tilde{Q}}_0\mathbb{M}&=\sum\limits_{i,j}e_i^{[k]}{e_i^{[k]}}^{\trans}\left[I_{k}+\mathcal{D}_{\tilde{\bm\delta}}^{-\frac 12}\left(\mathcal{D}_{\gamma}+\lambda\mathbb{1}_k\mathbb{1}_k^{\trans}\right)^{-1}\mathcal{D}_{\tilde{\bm\delta}}^{-\frac 12}\right]^{-1}e_j^{[k]}{e_j^{[k]}}^{\trans} \Delta\mu_i^\trans\Delta\mu_j\otimes \mathbb{c}_i\mathbb{c}_j^\trans\\
    &=\sum\limits_{i,j}\mathcal{A}_{ij} \Delta\mu_i^\trans\Delta\mu_j e_i^{[k]}{e_j^{[k]}}^{\trans} \otimes \mathbb{c}_i\mathbb{c}_j^\trans\\
    &=\left(\mathcal{A}\otimes\mathbb{1}_k\mathbb{1}_k^\trans\right)\odot \mathcal M
\end{align*}
with 
\begin{align*}
    \mathcal M &\equiv \sum\limits_{i,j}\Delta\mu_i^\trans\Delta\mu_j \left(E_{ij}^{[k]} \otimes \mathbb{c}_i\mathbb{c}_j^\trans\right) \\ \mathbb{c}_i &\equiv\begin{bmatrix}\frac{c_{i2}}{c_i}\sqrt{\frac{c_{i1}}{c_i}}\\-\frac{c_{i1}}{c_i}\sqrt{\frac{c_{i2}}{c_i}}\end{bmatrix} \\
    \mathcal{A} &\equiv\left[I_k+\mathcal{D}_{\tilde{\bm\delta}}^{-\frac 12}\left(\mathcal{D}_{\gamma}+\lambda\mathbb{1}_k\mathbb{1}_k^{\trans}\right)^{-1}\mathcal{D}_{\tilde{\bm\delta}}^{-\frac 12}\right]^{-1}.
\end{align*}
As for the covariance terms,
\begin{align*}
    \mathbb{M}^\trans\bar{\tilde{Q}}_0V_{ij}\bar{\tilde{Q}}_0\mathbb{M}&=\sum\limits_{i,j}e_i^{[k]}{e_i^{[k]}}^{\trans}\left[I_k+\mathcal{D}_{\tilde{\bm\delta}}^{-\frac 12}\left(\mathcal{D}_{\gamma}+\lambda\mathbb{1}_k\mathbb{1}_k^{\trans}\right)^{-1}\mathcal{D}_{\tilde{\bm\delta}}^{-\frac 12}\right]^{-1}\mathcal{D}_{\tilde{\bm\delta}}^{-\frac 12}\left(e_i^{[k]}{e_i^{[k]}}^{\trans}+\mathcal{D}_{\kappa_i\odot\tilde{\bm\delta}}\right)\times\\
    &\mathcal{D}_{\tilde{\bm\delta}}^{-\frac 12}\left[I_k+\mathcal{D}_{\tilde{\bm\delta}}^{-\frac 12}\left(I_k+\lambda\mathbb{1}_k\mathbb{1}_k^{\trans}\right)^{-1}\mathcal{D}_{\tilde{\bm\delta}}^{-\frac 12}\right]^{-1}e_j^{[k]}{e_j^{[k]}}^{\trans} \Delta\mu_i^\trans\Delta\mu_j\otimes \mathbb{c}_i\mathbb{c}_j^\trans\\
    &=\sum\limits_{i,j}\left[\mathcal{A}\mathcal{D}_{\tilde{\bm\delta}}^{-\frac 12}\left(e_i^{[k]}{e_i^{[k]}}^{\trans}+\mathcal{D}_{\kappa_i\odot\tilde{\bm\delta}}\right)\mathcal{D}_{\tilde{\bm\delta}}^{-\frac 12}\mathcal{A}\right]_{ij} \Delta\mu_i^\trans\Delta\mu_j e_i^{[k]}{e_j^{[k]}}^{\trans} \otimes \mathbb{c}_i\mathbb{c}_j^\trans\\
    &=\left(\mathcal{A}\mathcal{D}_{\tilde{\bm\delta}}^{-\frac 12}\left(e_i^{[k]}{e_i^{[k]}}^{\trans}+\mathcal{D}_{\kappa_i\odot\tilde{\bm\delta}}\right)\mathcal{D}_{\tilde{\bm\delta}}^{-\frac 12}\mathcal{A}\otimes\mathbb{1}_k\mathbb{1}_k^\trans\right)\odot \mathcal M\\
    &=\frac{1}{\bm{\delta}_i^{[k]}}\left(\mathcal{A}\mathcal{D}_{\mathcal{K}_i+e_{i}^{[k]}}\mathcal{A}\otimes\mathbb{1}_k\mathbb{1}_k^\trans\right)\odot \mathcal M
\end{align*}
with $\mathcal{K}_{ia}=\tilde{\bm\delta}_i\kappa_{ia}$.
Using Equation~\eqref{eq:equation_kappa}, after algebraic manipulations, we finally obtain the compact form
\begin{equation}
    \mathcal K =  \frac{c_0}{k}[\mathcal A\odot \mathcal A]\left(\mathcal{D}_c -\frac{c_0}{k}[\mathcal A\odot \mathcal A]\right)^{-1}.
\end{equation}
\subsection{Proof of Propositions~\ref{prop:classification_accuracy_one_vs_all}--\ref{prop:classification_accuracy_one_hot}}
\label{sec:proposition_proof}
\subsubsection{One-versus-all}
The probability of correct classification for Task~$i$ and for a test data ${\bf x}\in\mathcal{C}_{j}$ reads
\begin{align*}
    \mathbb{P}\left(g_i^{\rm bin}({\bf x};j)>\max_{j'\neq j} \{g_i^{\rm bin}({\bf x};j')\}\right)=\mathbb{P}\left(g_i^{\rm bin}({\bf x};j)-\max_{j'\neq j} \{g_i^{\rm bin}({\bf x};j')\}>0\right).
\end{align*}

Since by definition (Equation~\eqref{eq:score_class})
\begin{align}
    g_i^{\rm bin}({\bf x};j) &= \frac{1}{kp} \mathring{\mathcal{y}}(j)^\trans J^\trans QZ^\trans A \left(e_{i}^{[k]}\otimes \mathring{\bf x}\right)+b_i,
\end{align}
we have that $g_i^{\rm bin}({\bf x},j) {\bf 1}_{m-1}-\left\{g_i^{\rm bin}({\bf x};j')\right\}_{j'\neq j}= \frac{1}{kp} \mathcal{Y}_{-j} J^\trans QZ^\trans A \left(e_{k}^{[k]}\otimes \mathring{\bf x}\right)$, where $\mathcal{Y}_{-j}=(\mathring{\mathcal{y}}(j)^\trans-\left[\mathring{\mathcal{y}}(j')^\trans\right]_{j'\neq j})\in\mathbb{R}^{(m-1)\times km}$.
Using Theorem~\ref{th:main} with $\mathcal Y$ replaced by $\mathcal{Y}_{-j}$, $g_i^{\rm bin}({\bf x},j) {\bf 1}_{m-1}-g_i^{\rm bin}({\bf x};j')_{j'\neq j}\in\mathbb{R}^{m-1}$ is asymptotically a multivariate Gaussian random vector with statistics detailed in the theorem statement. Proposition~\ref{prop:classification_accuracy_one_vs_all} then unfolds trivially by remarking that $g_i^{\rm bin}({\bf x};j)>\max_{j'\neq j} g_i^{\rm bin}({\bf x};j')\Leftrightarrow \forall j'\neq j,~g_i^{\rm bin}({\bf x};j)-g_i^{\rm bin}({\bf x};j')\geq 0$.

\subsubsection{One Hot encoding}
The proof is similar to the one-versus-all case.

The probability of correct classification for a test data ${\bf x}\in\mathcal{C}_{j}$ is
\begin{align*}
    \mathbb{P}\left(g_i^{\rm bin}({\bf x};j)>\max_{j'\neq j} \{g_i^{\rm bin}({\bf x};j')\}\right)=\mathbb{P}\left(g_i^{\rm bin}({\bf x};j)-\max_{j'\neq j} \{g_i^{\rm bin}({\bf x};j')\}>0\right)
\end{align*}
where
\begin{align}
    g_i^{\rm bin}({\bf x};j) &= \frac{1}{kp} {e_{j}^{[k]}}^\trans\mathring{\mathcal Y}^\trans J^\trans QZ^\trans A \left(e_{i}^{[k]}\otimes \mathring{\bf x}\right)+b_i.
\end{align}
Therefore $g_i^{\rm bin}({\bf x};j){\bf 1}_{m-1}-\left\{(g_i({\bf x};j'))\right\}_{j'\neq j}= \frac{1}{kp} \mathcal{E}_j\mathring{\mathcal Y}^\trans J^\trans QZ^\trans A (e_{k}^{[k]}\otimes \mathring{\bf x})$,
with
$\mathcal{E}_j=\{(e_j^{(m)}-e_{j'}^{(m)})^\trans\}_{j\neq j'}\in\mathbb{R}^{(m-1)\times m}$.
By Theorem~\ref{th:main} with $\mathring{\mathcal Y}$ replaced by $\mathcal{E}_j^\trans\mathring{\mathcal Y}^\trans$, this vector is asymptotically normally distributed and Proposition~\ref{prop:classification_accuracy_one_hot} unfolds immediately using again the fact that $g_i^{\rm bin}({\bf x};j)>\max_{j'\neq j} g_i^{\rm bin}({\bf x};j')\Leftrightarrow\forall j'\neq j,~ g_i^{\rm bin}({\bf x};j)-g_i^{\rm bin}({\bf x};j'j)\geq 0$.
\vskip 0.2in
\bibliography{sample.bib}

\begin{thebibliography}{48}
\providecommand{\natexlab}[1]{#1}
\providecommand{\url}[1]{\texttt{#1}}
\expandafter\ifx\csname urlstyle\endcsname\relax
  \providecommand{\doi}[1]{doi: #1}\else
  \providecommand{\doi}{doi: \begingroup \urlstyle{rm}\Url}\fi

\bibitem[Agarwal et~al.(2010)Agarwal, Gerber, and Daume]{agarwal2010learning}
Arvind Agarwal, Samuel Gerber, and Hal Daume.
\newblock Learning multiple tasks using manifold regularization.
\newblock In \emph{Advances in neural information processing systems}, pages
  46--54, 2010.

\bibitem[Aitkin and Longford(1986)]{aitkin1986statistical}
Murray Aitkin and Nicholas Longford.
\newblock Statistical modelling issues in school effectiveness studies.
\newblock \emph{Journal of the Royal Statistical Society: Series A (General)},
  149\penalty0 (1):\penalty0 1--26, 1986.

\bibitem[Allenby and Rossi(1998)]{allenby1998marketing}
Greg~M Allenby and Peter~E Rossi.
\newblock Marketing models of consumer heterogeneity.
\newblock \emph{Journal of econometrics}, 89\penalty0 (1-2):\penalty0 57--78,
  1998.

\bibitem[Argyriou et~al.(2007)Argyriou, Evgeniou, and
  Pontil]{argyriou2007multi}
Andreas Argyriou, Theodoros Evgeniou, and Massimiliano Pontil.
\newblock Multi-task feature learning.
\newblock In \emph{Advances in neural information processing systems}, pages
  41--48, 2007.

\bibitem[Bai and Silverstein(2009)]{SIL06}
Z.~Bai and J.~W. Silverstein.
\newblock {Spectral Analysis of Large Dimensional Random Matrices}.
\newblock \emph{Springer Series in Statistics}, 2009.

\bibitem[Baxter(1997)]{baxter1997bayesian}
Jonathan Baxter.
\newblock A bayesian/information theoretic model of learning to learn via
  multiple task sampling.
\newblock \emph{Machine learning}, 28\penalty0 (1):\penalty0 7--39, 1997.

\bibitem[Baxter(2000)]{baxter2000model}
Jonathan Baxter.
\newblock A model of inductive bias learning.
\newblock \emph{Journal of artificial intelligence research}, 12:\penalty0
  149--198, 2000.

\bibitem[Ben-David and Schuller(2003)]{ben2003exploiting}
Shai Ben-David and Reba Schuller.
\newblock Exploiting task relatedness for multiple task learning.
\newblock In \emph{Learning Theory and Kernel Machines}, pages 567--580.
  Springer, 2003.

\bibitem[Billingsley(2008)]{billingsley2008probability}
Patrick Billingsley.
\newblock \emph{Probability and measure}.
\newblock John Wiley \& Sons, 2008.

\bibitem[Bishop(2006)]{bishop2006pattern}
Christopher~M Bishop.
\newblock \emph{Pattern recognition and machine learning}.
\newblock springer, 2006.

\bibitem[Boyd and Vandenberghe(2004)]{boyd2004convex}
Stephen Boyd and Lieven Vandenberghe.
\newblock \emph{Convex optimization}.
\newblock Cambridge university press, 2004.

\bibitem[Caruana(1997)]{caruana1997multitask}
Rich Caruana.
\newblock Multitask learning.
\newblock \emph{Machine learning}, 28\penalty0 (1):\penalty0 41--75, 1997.

\bibitem[Caruana et~al.(1996)Caruana, Baluja, and Mitchell]{caruana1996using}
Rich Caruana, Shumeet Baluja, and Tom Mitchell.
\newblock Using the future to" sort out" the present: Rankprop and multitask
  learning for medical risk evaluation.
\newblock In \emph{Advances in neural information processing systems}, pages
  959--965, 1996.

\bibitem[Collobert and Weston(2008)]{collobert2008unified}
Ronan Collobert and Jason Weston.
\newblock A unified architecture for natural language processing: Deep neural
  networks with multitask learning.
\newblock In \emph{Proceedings of the 25th international conference on Machine
  learning}, pages 160--167, 2008.

\bibitem[Couillet and Debbah(2011)]{COUbook}
R.~Couillet and M.~Debbah.
\newblock \emph{{Random matrix methods for wireless communications}}.
\newblock Cambridge University Press, New York, NY, USA, first edition, 2011.

\bibitem[Deng(2012)]{deng2012mnist}
Li~Deng.
\newblock The mnist database of handwritten digit images for machine learning
  research [best of the web].
\newblock \emph{IEEE Signal Processing Magazine}, 29\penalty0 (6):\penalty0
  141--142, 2012.

\bibitem[Evgeniou and Pontil(2004)]{evgeniou2004regularized}
Theodoros Evgeniou and Massimiliano Pontil.
\newblock Regularized multi--task learning.
\newblock In \emph{Proceedings of the tenth ACM SIGKDD international conference
  on Knowledge discovery and data mining}, pages 109--117. ACM, 2004.

\bibitem[Fleury et~al.(2007)Fleury, Gu{\'e}don, and
  Paouris]{fleury2007stability}
Bruno Fleury, Olivier Gu{\'e}don, and Grigoris Paouris.
\newblock A stability result for mean width of lp-centroid bodies.
\newblock \emph{Advances in Mathematics}, 214\penalty0 (2):\penalty0 865--877,
  2007.

\bibitem[Goodfellow et~al.(2014)Goodfellow, Pouget-Abadie, Mirza, Xu,
  Warde-Farley, Ozair, Courville, and Bengio]{goodfellow2014generative}
Ian Goodfellow, Jean Pouget-Abadie, Mehdi Mirza, Bing Xu, David Warde-Farley,
  Sherjil Ozair, Aaron Courville, and Yoshua Bengio.
\newblock Generative adversarial nets.
\newblock In \emph{Advances in neural information processing systems}, pages
  2672--2680, 2014.

\bibitem[Greene(2000)]{greene2000econometric}
William~H Greene.
\newblock Econometric analysis 4th edition.
\newblock \emph{International edition, New Jersey: Prentice Hall}, pages
  201--215, 2000.

\bibitem[Griffin et~al.(2007)Griffin, Holub, and Perona]{griffin2007caltech}
G~Griffin, A~Holub, and P~Perona.
\newblock Caltech-256 object category dataset california inst.
\newblock Technical report, Technol., Tech. Rep. 7694, 2007 [Online].
  Available: http://authors. library~…, 2007.

\bibitem[Harutyunyan et~al.(2017)Harutyunyan, Khachatrian, Kale, Steeg, and
  Galstyan]{harutyunyan2017multitask}
Hrayr Harutyunyan, Hrant Khachatrian, David~C Kale, Greg~Ver Steeg, and Aram
  Galstyan.
\newblock Multitask learning and benchmarking with clinical time series data.
\newblock \emph{arXiv preprint arXiv:1703.07771}, 2017.

\bibitem[Herath et~al.(2017)Herath, Harandi, and Porikli]{herath2017learning}
Samitha Herath, Mehrtash Harandi, and Fatih Porikli.
\newblock Learning an invariant hilbert space for domain adaptation.
\newblock In \emph{Proceedings of the IEEE Conference on Computer Vision and
  Pattern Recognition}, pages 3845--3854, 2017.

\bibitem[Hoffman et~al.(2013)Hoffman, Rodner, Donahue, Darrell, and
  Saenko]{hoffman2013efficient}
Judy Hoffman, Erik Rodner, Jeff Donahue, Trevor Darrell, and Kate Saenko.
\newblock Efficient learning of domain-invariant image representations.
\newblock \emph{arXiv preprint arXiv:1301.3224}, 2013.

\bibitem[Hubert~Tsai et~al.(2016)Hubert~Tsai, Yeh, and
  Frank~Wang]{hubert2016learning}
Yao-Hung Hubert~Tsai, Yi-Ren Yeh, and Yu-Chiang Frank~Wang.
\newblock Learning cross-domain landmarks for heterogeneous domain adaptation.
\newblock In \emph{Proceedings of the IEEE conference on computer vision and
  pattern recognition}, pages 5081--5090, 2016.

\bibitem[Klartag(2007)]{klartag2007central}
Bo'az Klartag.
\newblock A central limit theorem for convex sets.
\newblock \emph{Inventiones mathematicae}, 168\penalty0 (1):\penalty0 91--131,
  2007.

\bibitem[Krishna and Kalluri(2019)]{krishna2019deep}
Sajja~Tulasi Krishna and Hemantha~Kumar Kalluri.
\newblock Deep learning and transfer learning approaches for image
  classification.
\newblock \emph{International Journal of Recent Technology and Engineering
  (IJRTE)}, 7\penalty0 (5S4):\penalty0 427--432, 2019.

\bibitem[Ledoux(2001)]{ledoux2001concentration}
Michel Ledoux.
\newblock \emph{The concentration of measure phenomenon}.
\newblock Number~89 in Mathematical surveys and monographs. American
  Mathematical Soc., 2001.
\newblock ISBN 0821837923.

\bibitem[Lelarge and Miolane(2019)]{lelarge2019asymptotic}
Marc Lelarge and L{\'e}o Miolane.
\newblock Asymptotic bayes risk for gaussian mixture in a semi-supervised
  setting.
\newblock In \emph{2019 IEEE 8th International Workshop on Computational
  Advances in Multi-Sensor Adaptive Processing (CAMSAP)}, pages 639--643. IEEE,
  2019.

\bibitem[Liao and Couillet(2019)]{liao2019large}
Zhenyu Liao and Romain Couillet.
\newblock A large dimensional analysis of least squares support vector
  machines.
\newblock \emph{IEEE Transactions on Signal Processing}, 67\penalty0
  (4):\penalty0 1065--1074, 2019.

\bibitem[Louart and Couillet(2018)]{louart2018concentration}
Cosme Louart and Romain Couillet.
\newblock Concentration of measure and large random matrices with an
  application to sample covariance matrices.
\newblock \emph{arXiv preprint arXiv:1805.08295}, 2018.

\bibitem[Louart et~al.(2018)Louart, Liao, Couillet, et~al.]{louart2018random}
Cosme Louart, Zhenyu Liao, Romain Couillet, et~al.
\newblock A random matrix approach to neural networks.
\newblock \emph{The Annals of Applied Probability}, 28\penalty0 (2):\penalty0
  1190--1248, 2018.

\bibitem[Mai and Couillet(2018)]{mai2018random}
Xiaoyi Mai and Romain Couillet.
\newblock A random matrix analysis and improvement of semi-supervised learning
  for large dimensional data.
\newblock \emph{The Journal of Machine Learning Research}, 19\penalty0
  (1):\penalty0 3074--3100, 2018.

\bibitem[Mai and Liao(2019)]{mai2019high}
Xiaoyi Mai and Zhenyu Liao.
\newblock High dimensional classification via empirical risk minimization:
  Improvements and optimality.
\newblock \emph{arXiv preprint arXiv:1905.13742}, 2019.

\bibitem[Mai et~al.(2019)Mai, Liao, and Couillet]{mai2019large}
Xiaoyi Mai, Zhenyu Liao, and Romain Couillet.
\newblock A large scale analysis of logistic regression: Asymptotic performance
  and new insights.
\newblock In \emph{ICASSP 2019-2019 IEEE International Conference on Acoustics,
  Speech and Signal Processing (ICASSP)}, pages 3357--3361. IEEE, 2019.

\bibitem[Mar{\v{c}}enko and Pastur(1967)]{marvcenko1967distribution}
Vladimir~A Mar{\v{c}}enko and Leonid~Andreevich Pastur.
\newblock Distribution of eigenvalues for some sets of random matrices.
\newblock \emph{Mathematics of the USSR-Sbornik}, 1\penalty0 (4):\penalty0 457,
  1967.

\bibitem[Moody and Mark(2001)]{moody2001impact}
George~B Moody and Roger~G Mark.
\newblock The impact of the mit-bih arrhythmia database.
\newblock \emph{IEEE Engineering in Medicine and Biology Magazine}, 20\penalty0
  (3):\penalty0 45--50, 2001.

\bibitem[Pastur and Shcherbina(2011)]{pastur2011eigenvalue}
Leonid~Andreevich Pastur and Mariya Shcherbina.
\newblock \emph{Eigenvalue distribution of large random matrices}.
\newblock Number 171 in Mathematical Surveys and Monographs. American
  Mathematical Soc., 2011.

\bibitem[Rocha and Goldenstein(2013)]{rocha2013multiclass}
Anderson Rocha and Siome~Klein Goldenstein.
\newblock Multiclass from binary: Expanding one-versus-all, one-versus-one and
  ecoc-based approaches.
\newblock \emph{IEEE Transactions on Neural Networks and Learning Systems},
  25\penalty0 (2):\penalty0 289--302, 2013.

\bibitem[Saenko et~al.(2010)Saenko, Kulis, Fritz, and
  Darrell]{saenko2010adapting}
Kate Saenko, Brian Kulis, Mario Fritz, and Trevor Darrell.
\newblock Adapting visual category models to new domains.
\newblock In \emph{European conference on computer vision}, pages 213--226.
  Springer, 2010.

\bibitem[Seddik et~al.(2019)Seddik, Tamaazousti, and
  Couillet]{seddik2019kernel}
Mohamed El~Amine Seddik, Mohamed Tamaazousti, and Romain Couillet.
\newblock Kernel random matrices of large concentrated data: the example of
  gan-generated images.
\newblock In \emph{ICASSP 2019-2019 IEEE International Conference on Acoustics,
  Speech and Signal Processing (ICASSP)}, pages 7480--7484. IEEE, 2019.

\bibitem[Seddik et~al.(2020)Seddik, Louart, Tamaazousti, and
  Couillet]{seddik2020random}
Mohamed El~Amine Seddik, Cosme Louart, Mohamed Tamaazousti, and Romain
  Couillet.
\newblock Random matrix theory proves that deep learning representations of
  gan-data behave as gaussian mixtures.
\newblock \emph{arXiv preprint arXiv:2001.08370}, 2020.

\bibitem[Vapnik(2005)]{vapnik2005universal}
Vladimir Vapnik.
\newblock Universal learning technology: Support vector machines.
\newblock \emph{NEC Journal of Advanced Technology}, 2\penalty0 (2):\penalty0
  137--144, 2005.

\bibitem[Xu et~al.(2013)Xu, An, Qiao, Zhu, and Li]{xu2013}
Shuo Xu, Xin An, Xiaodong Qiao, Lijun Zhu, and Lin Li.
\newblock Multi-output least-squares support vector regression machines.
\newblock \emph{Pattern Recognition Letters}, 34:\penalty0 1078–1084, 07
  2013.
\newblock \doi{10.1016/j.patrec.2013.01.015}.

\bibitem[Xue et~al.(2007)Xue, Liao, Carin, and Krishnapuram]{xue2007multi}
Ya~Xue, Xuejun Liao, Lawrence Carin, and Balaji Krishnapuram.
\newblock Multi-task learning for classification with dirichlet process priors.
\newblock \emph{Journal of Machine Learning Research}, 8\penalty0
  (Jan):\penalty0 35--63, 2007.

\bibitem[Yang et~al.(2020)Yang, Zhang, Dai, and Pan]{yang2020transfer}
Qiang Yang, Yu~Zhang, Wenyuan Dai, and Sinno~Jialin Pan.
\newblock \emph{Transfer learning}.
\newblock Cambridge University Press, 2020.

\bibitem[Yu et~al.(2005)Yu, Tresp, and Schwaighofer]{yu2005learning}
Kai Yu, Volker Tresp, and Anton Schwaighofer.
\newblock Learning gaussian processes from multiple tasks.
\newblock In \emph{Proceedings of the 22nd international conference on Machine
  learning}, pages 1012--1019, 2005.

\bibitem[Zhuang et~al.(2020)Zhuang, Qi, Duan, Xi, Zhu, Zhu, Xiong, and
  He]{zhuang2020comprehensive}
Fuzhen Zhuang, Zhiyuan Qi, Keyu Duan, Dongbo Xi, Yongchun Zhu, Hengshu Zhu, Hui
  Xiong, and Qing He.
\newblock A comprehensive survey on transfer learning.
\newblock \emph{Proceedings of the IEEE}, 2020.

\end{thebibliography}

\end{document}